\newtheorem{remark}{Remark}
\newtheorem{theorem}{Theorem}
\newtheorem{definition}{Definition}
\newtheorem{lemma}{Lemma}
\DeclarePairedDelimiter\floor{\lfloor}{\rfloor}
\definecolor{mygreen}{rgb}{0,0.6,0}
\definecolor{mygray}{rgb}{0.5,0.5,0.5}
\definecolor{mymauve}{rgb}{0.58,0,0.82}
\def\va{{\bm{a}}}
\def\vb{{\bm{b}}}
\def\vv{{\bm{v}}}
\def\vw{{\bm{w}}}
\def\vx{{\bm{x}}}
\def \RR {{\mathbb{R}}}
\def \bx {{\bm x}}
\def \Ab {{\mathbf{A}}}
\title{Scheduled Restart Momentum for Accelerated Stochastic Gradient Descent}
\author{
Bao Wang$^*$ \\
  Scientific Computing and Imaging (SCI) Institute\\
  University of Utah, Salt Lake City, UT, USA\\
  %\texttt{wangbaonj@gmail.com}\\
  \and
  Tan M. Nguyen\footnote{Co-first author. Please correspond to: wangbaonj@gmail.com or mn15@rice.edu} \\
  Department of ECE\\ 
  Rice University, Houston, USA\\
  %\texttt{mn15@rice.edu}\\
  \and
  Tao Sun \\
  College of Computer\\
  National University of Defense Technology\\
  Changsha, China\\
  %Department of Mathematics\\
  %University of California, Los Angeles\\
  \and
  Andrea L. Bertozzi \\
  Department of Mathematics\\
  University of California, Los Angeles\\
  \and
  Richard G. Baraniuk$^\dag$\\
  Department of ECE\\ 
  Rice University, Houston, USA\\
  \and
  Stanley J. Osher\footnote{Co-last author} \\
  Department of Mathematics\\
  University of California, Los Angeles\\
}
\begin{document}
% \nipsfinalcopy is no longer used

\maketitle

\begin{abstract}
Stochastic gradient descent (SGD) with constant momentum and its variants such as Adam are the optimization algorithms of choice for training deep neural networks (DNNs). Since DNN training is incredibly computationally expensive, there is great interest in speeding up the convergence. Nesterov accelerated gradient (NAG) improves the convergence rate of gradient descent (GD) for convex optimization using a specially designed momentum; however, it accumulates error when an inexact gradient is used (such as in SGD), slowing convergence at best and diverging at worst. In this paper, we propose \emph{Scheduled Restart SGD} (SRSGD), a new NAG-style scheme for training DNNs. SRSGD replaces the constant momentum in SGD by the increasing momentum in NAG but stabilizes the iterations by resetting the momentum to zero according to a schedule. Using a variety of models and benchmarks for image classification, we demonstrate that, in training DNNs, SRSGD significantly improves convergence and generalization; for instance in training ResNet200 for ImageNet classification, SRSGD achieves an error rate of 20.93\% vs.\ the benchmark of 22.13\%. These improvements become more significant as the network grows deeper. Furthermore, on both CIFAR and ImageNet, SRSGD reaches similar or even better error rates with significantly fewer training epochs compared to the SGD baseline.
\end{abstract}

\section{Introduction}
Training many machine learning (ML) models reduces to solving the following finite-sum optimization problem
{\begin{equation}
\label{eq:finite-sum}
\min_\vw f(\vw) := \min_\vw \frac{1}{N}\sum_{i=1}^N f_i(\vw),\ \ \vw \in \RR^d,
\end{equation}}
where $f_i(\vw) := \mathcal{L}(g(\bx_i, \vw), y_i)$ is the loss between the ground-truth label $y_i$ and the prediction by the model $g(\cdot, \vw)$, parametrized by $\vw$. This training loss is typically a cross-entropy loss for classification and a root mean square error for regression. Here $\{\bx_i, y_i\}_{i=1}^N$ are the training samples, and problem \eqref{eq:finite-sum} is known as {\em empirical risk minimization} (ERM). For many practical applications, $f(\vw)$ is highly non-convex, and $g(\cdot, \vw)$ is chosen among deep neural networks (DNNs) due to their preeminent performance across various tasks. These deep models are heavily overparametrized and require large amounts of training data. Thus, both $N$ and the dimension of $\vw$ can scale up to millions or even billions. These complications pose serious computational challenges.

One of the simplest algorithms to solve 
%an ERM such as 
\eqref{eq:finite-sum} is gradient descent (GD), which updates $\vw$ according to:
{\begin{equation}
\label{eq:GD}
\vw^{k+1} = \vw^k - s_k\frac{1}{N}\sum_{i=1}^N\nabla f_i(\vw^k),
\end{equation}}
where $s_k > 0$ is the step size at the $k$-th iteration. Computing $\nabla f(\vw^k)$ on the entire training set is memory intensive and often cannot fit on devices with limited random access memory (RAM) such as graphics processing units (GPUs) typically used for deep learning (DL). In practice, we 
%instead randomly 
sample a 
%small 
subset of 
%$[N] := \{1, 2, \cdots, N\}$ 
the training set, 
of size $m$ with $m\ll N$, 
%, where $[N]\doteq \{1, 2, \cdots, N\}$, 
to approximate $\nabla f(\vw^k)$ by the mini-batch gradient $1/m\sum_{j=1}^m \nabla f_{i_j}(\vw^k)$. This results in the stochastic gradient descent (SGD) update
{\begin{equation}
\label{eq:SGD}
\vw^{k+1} = \vw^k - s_k\frac{1}{m}\sum_{j=1}^m\nabla f_{i_j}(\vw^k).
\end{equation}}
SGD and its accelerated variants are among the most used optimization algorithms in ML practice \cite{bottou2018optimization}. These gradient-based algorithms have a number of benefits. Their convergence rate is usually independent of the dimension of the underlying problem \cite{bottou2018optimization}; their computational complexity is low and easy to parallelize, which makes them suitable to large scale and high dimensional problems \cite{zinkevich2010parallelized,zhang2015deep}. They have achieved, so far, the best performance in training %modern many ML models, in particular, 
 DNNs \cite{goodfellow2016deep}.

Nevertheless, GD and SGD have convergence issues, especially when the problem is ill-conditioned. There are two common approaches to accelerate GD:
%in ill-conditioned scenarios: 
adaptive step size \cite{duchi2011adaptive,hinton2012neural,zeiler2012adadelta} and momentum \cite{polyak1964some}. 
%The integration of both adaptive step size and momentum with SGD is widely used in training DNNs \cite{kingma2014adam,reddi2019convergence,dozat2016incorporating,loshchilov2018fixing,liu2019variance}.
The integration of both adaptive step size and momentum with SGD leads to Adam \cite{kingma2014adam}, which is one of the most used optimizers for DNNs. Many recent developments have improved Adam \cite{reddi2019convergence,dozat2016incorporating,loshchilov2018fixing,liu2020on}. 
GD with constant momentum leverages previous step information to accelerate GD according to:
{\begin{eqnarray}
\label{eq:ConstanMomentum}
\begin{aligned}
  \vv^{k+1} &= \vw^k -  s_k\nabla f(\vw^k),\\
  \vw^{k+1} &= \vv^{k+1} + \mu (\vv^{k+1} - \vv^{k}),
\end{aligned}
\end{eqnarray}}
where $\mu > 0$ is a constant.
%, without cause ambiguity, we call this weight parameter as momentum. 
A similar acceleration can be achieved by the heavy-ball (HB) method \cite{polyak1964some}. Both momentum update in \eqref{eq:ConstanMomentum} and HB enjoy the same convergence rate of $O(1/k)$ as GD for convex smooth optimization. A breakthrough due to Nesterov \cite{nesterov1983method} replaces the constant momentum $\mu$ with $(k-1)/(k+2)$ (aka, Nesterov accelerated gradient (NAG) momentum), and it can accelerate the convergence rate to $O(1/k^2)$, which is optimal for convex and smooth loss functions \cite{nesterov1983method,su2014differential}. Jin et al. showed that NAG can also speed up escaping saddle point  \cite{jin2017accelerated}. In practice, NAG momentum and its variants such as Katyusha momentum \cite{allen2017katyusha} can also accelerate GD for  nonconvex optimization, especially when the underlying loss function is poorly conditioned \cite{goh2017momentum}.

However, Devolder et al. \cite{devolder2014first} has recently showed that NAG accumulates error when an inexact gradient
%oracle such as stochastic gradient 
is used, thereby slowing convergence at best and diverging at worst. 
Until now, only constant momentum has been successfully used in training DNNs in practice \cite{sutskever2013importance}. Since NAG momentum has achieved a much better convergence rate than constant momentum methods with exact gradient oracle, in this paper we study the following question:

\emph{Can we leverage NAG momentum to accelerate SGD and to improve %convergence and 
generalization in training DNNs?}

\paragraph{Contributions.}
We answer the above question by proposing the first algorithm that integrates scheduled restart (SR) NAG momentum with plain SGD. 
%to accelerate training DNNs. 
We name the resulting algorithm scheduled restart SGD (SRSGD). \emph{Theoretically, we present the %divergence 
error accumulation of Nesterov accelerated SGD (NASGD) and the convergence of SRSGD.} The major practical benefits of SRSGD are fourfold:
\begin{itemize}
    \item SRSGD can significantly speed up DNN training. For image classification, SRSGD can \emph{significantly reduce the number of training epochs while preserving or even improving the network's accuracy}. In particular, on CIFAR10/100, the number of training epochs can be reduced by half with SRSGD while on ImageNet the reduction in training epochs is also remarkable.
    %ranges from 10 to 30 and increases with the network's depth.
    \vspace{0.05in}
    \item DNNs trained by SRSGD \emph{generalize significantly better} than the current benchmark optimizers. 
    %those trained by SGD with constant momentum. 
     \vspace{0.05in}
    \emph{The improvement becomes more significant as the network grows deeper} as shown in Fig.~\ref{fig:error-vs-depth}. 
    \item SRSGD \emph{reduces overfitting in very deep networks} such as ResNet-200 for ImageNet classification, enabling the accuracy to keep increasing with depth.
    \vspace{0.05in}
    \item SRSGD is \emph{straightforward to implement} and only requires changes in a few lines of the SGD code. There is also no additional computational or memory overhead.
\end{itemize}
%To the best of our knowledge, our paper is the first study of NAG momentum with scheduled restart to accelerate SGD for training DNNs. 
We focus on DL for image classification, in which SGD with constant momentum is the 
%common 
choice. 

\begin{figure}[t!]
\centering
\includegraphics[width=1.0\linewidth]{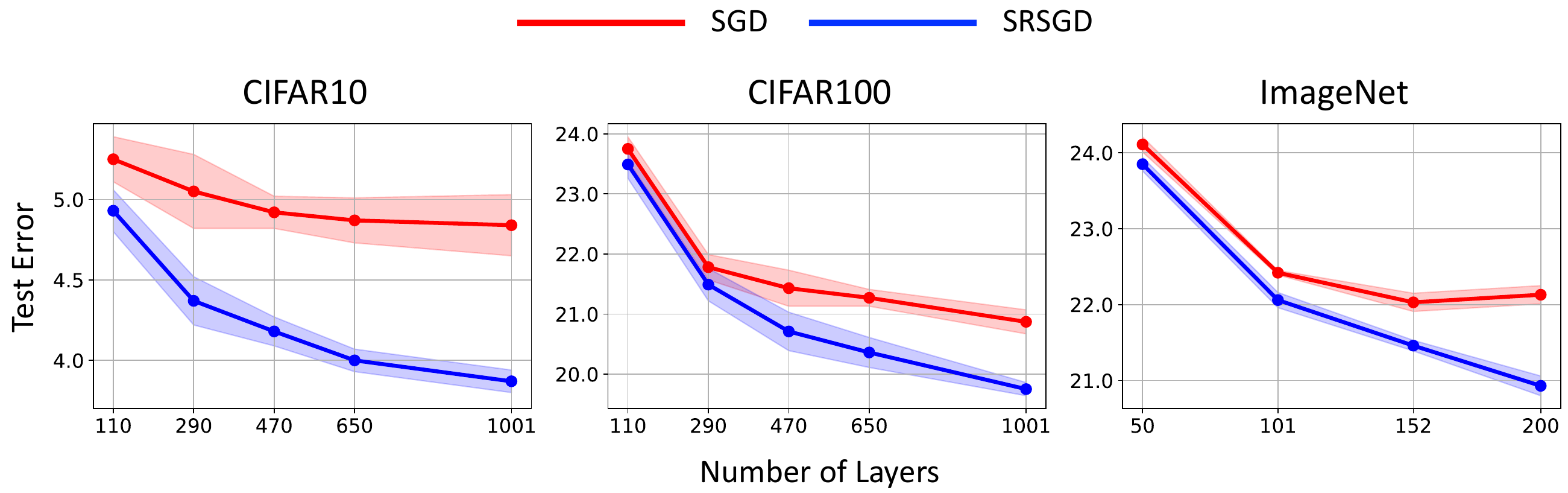}
\caption{Error vs. depth of ResNet models trained with SRSGD and the baseline SGD with constant momemtum. Advantage of SRSGD continues to grow with depth.}
\label{fig:error-vs-depth}
\end{figure}

\paragraph{Organization.}
In Section~\ref{section:momentum}, we review and discuss 
momentum 
for accelerating GD in convex smooth optimization. In Section~\ref{section:SRSGD}, we present scheduled restart NAG momentum to accelerate SGD, namely SRSGD algorithm and its theoretical guarantees. In Section~\ref{section:experiments}, we verify the efficacy of the proposed SRSGD in training DNNs for image classification on CIFAR and ImageNet. In Section~\ref{section:Empirical-Analysis}, we perform some empirical analysis of SRSGD. In Section~\ref{section:related_work}, we briefly review some 
%more 
representative works that utilize momentum to accelerate SGD and study the restart techniques in NAG. We end with concluding remarks. Technical proofs and some more experimental details and results, in particular training RNNs and GANs, are provided in the appendix.

\paragraph{Notation.}
We denote scalars and vectors by lower case and lower case bold face letters, respectively,
%vectors by lower case bold face letters, 
and matrices by upper case bold face letters. For a vector $\vx = (x_1, \cdots, x_d)\in \mathbb{R}^d$, we denote its $\ell_p$ norm ($p\geq 1$) 
%of $\vx$ 
by $\|\vx\|_p = {\left(\sum_{i=1}^d |x_i|^p\right)^{1/p}}$, the $\ell_\infty$ norm of $\vx$ by $\|\vx\|_\infty = \max_{i=1}^d|x_i|$. 
For a matrix $\Ab$, we used $\|\Ab\|_p$ to denote its induced norm by the vector $\ell_p$ norm. Given two sequences $\{a_n\}$ and $\{b_n\}$, we write $a_n=O(b_n)$ if there exists 
a positive constant s.t.
$0<C<+\infty$ such that 
$a_n \leq C b_n$. We denote the interval $a$ to $b$ (included) as $(a, b]$. 
%We denote the set $\{1, 2, \cdots, m\}$ as $[m]$.
For a function $f(\vw): \mathbb{R}^d \rightarrow \mathbb{R}$, we denote its gradient and Hessian as $\nabla f(\vw)$ and $\nabla^2 f(\vw)$, respectively.

\section{Review: Momentum in Gradient Descent}\label{section:momentum}
\subsection{Gradient Descent}
Perhaps the simplest algorithm to solve \eqref{eq:finite-sum} is GD \eqref{eq:GD}, which dates back to \cite{cauchy1847methode}. If the objective $f(\vw)$ is convex and $L$-smooth (i.e., $\|\nabla^2 f(\vw)\|_2\leq L$), then GD converges with rate $O(1/k)$ by letting $s_k \equiv 1/L$ (we use this $s_k$ in all the discussion below), which is independent of the dimension of $\vw$.

\subsection{Gradient Descent with Momentum -- Heavy Ball}
%\paragraph{HB.} %We can accelerate GD by the 
HB scheme \eqref{eq:HeavyBall} \cite{polyak1964some} accelerates GD by using the momentum $\vw^k - \vw^{k-1}$, which gives
{
\begin{equation}
\label{eq:HeavyBall}
\vw^{k+1} = \vw^k - s_k\nabla f(\vw^k) + \mu (\vw^k - \vw^{k-1}),%\ \ \mu >0.
\end{equation}}
where $\mu > 0$ is a constant. Alternatively, we can accelerate GD by using the Nesterov  momentum (aka, lookahead momentum), which leads to the scheme in \eqref{eq:ConstanMomentum}. Both HB and 
%the lookahead momentum in 
\eqref{eq:ConstanMomentum} have the same convergence rate of $O(1/k)$ for solving convex smooth problems. Recently, several variants of \eqref{eq:ConstanMomentum} have been proposed for DL,  e.g., \cite{sutskever2013importance} and \cite{bengio2013advances}. 

\subsection{Nesterov Accelerated Gradient}
%\paragraph{Nesterov Accelerated Gradient.}
%\paragraph{NAG.}
%\emph{Nesterov accelerated gradient} (NAG) 
NAG \cite{nesterov1983method,beck2009fast} replaces the constant $\mu$ with $(t_k-1)/t_{k+1}$, where $t_{k+1}=(1+\sqrt{1+4t_k^2})/2$ with $t_0=1$, 
%with $(k-1)/(k+2)$ or $(k-1)/(k+n)$ for any integer $n\geq 2$,\footnote{This is taken from \cite{su2014differential}.} 
%resulting in
{
\begin{eqnarray}
\label{eq:NAG}
\begin{aligned}
  \vv^{k+1} &= \vw^k -  s_k\nabla f(\vw^k),\\
  \vw^{k+1} &= \vv^{k+1} + \frac{t_k-1}{t_{k+1}} (\vv^{k+1} - \vv^{k}).
\end{aligned}
\end{eqnarray}}
%This is called the \emph{Nesterov accelerated gradient} (NAG) scheme. 
NAG achieves a convergence rate $O(1/k^2)$ with the step size $s_k=1/L$, which is the optimal rate for general convex smooth optimization problems.
%With the same step size as GD, NAG achieves a convergence rate $O(1/k^2)$, which is the optimal rate for general convex smooth optimization problems.

\begin{remark}
Su et al. \cite{su2014differential} showed that $(k-1)/(k+2)$ is the asymptotic limit of $(t_k-1)/t_{k+1}$. In the following presentation of NAG with restart, for the ease of notation, we will replace the momentum coefficient $(t_k-1)/t_{k+1}$ with the form of $(k-1)/(k+2)$.
\end{remark}

\subsection{Adaptive Restart NAG (ARNAG)}
%\paragraph{Adaptive Restart NAG (ARNAG).} 
%Consider the sequence $\{f(\vw_k)-f(\vw^*)\}$ where $\vw^*$ is the minimum of $f(\vw)$. 
The sequences, $\{f(\vw^k)-f(\vw^*)\}$ where $\vw^*$ is the minimum of $f(\vw)$, generated by GD and GD with constant momentum (GD $+$ Momentum) converge monotonically to zero. However, that sequence generated by NAG oscillates, 
%to zero in an oscillatory way, 
as illustrated in Fig.~\ref{fig:Quadratic-Compare-Optimization} (a) 
when $f(\vw)$ is a quadratic function.
%ARNAG \eqref{eq:ARNAG} 
\cite{o2015adaptive} proposes ARNAG \eqref{eq:ARNAG} to alleviate this oscillatory phenomenon
%adaptive restart NAG 
%(ARNAG) 
%\cite{o2015adaptive}
{\begin{eqnarray}
\label{eq:ARNAG}
\begin{aligned}
  \vv^{k+1} &= \vw^k -  s_k\nabla f(\vw^k),\\
  \vw^{k+1} &= \vv^{k+1} + \frac{m(k)-1}{m(k)+2} (\vv^{k+1} - \vv^{k}),
\end{aligned}
\end{eqnarray}}
where $m(1)=1$; $m(k+1)=m(k)+1$ if $f(\vw^{k+1}) \leq f(\vw^{k})$, and $m(k+1)=1$ otherwise.

\subsection{Scheduled Restart NAG (SRNAG)} \label{sec:SRNAG}
%\paragraph{Scheduled Restart NAG (SRNAG).} \label{sec:SRNAG}
SR is another strategy to restart NAG. We first divide the total iterations $(0, T]$ (integers only) into a few intervals $\{I_i\}_{i=1}^m = (T_{i-1}, T_i]$, such that $(0, T] = \bigcup_{i=1}^m I_i$. In each $I_i$ we restart the momentum after every $F_i$, and the iteration is according to:
%iterations as follows:
{\begin{eqnarray}
\label{eq:SRNAG}
\begin{aligned}
  \vv^{k+1} &= \vw^k -  s_k\nabla f(\vw^k),\\
  \vw^{k+1} &= \vv^{k+1} + \frac{(k\bmod F_i)}{(k\bmod F_i) + 3} (\vv^{k+1} - \vv^{k}).
\end{aligned}
\end{eqnarray}}
%Again, Roulet et al. \cite{roulet2017sharpness} showed that, under an extra sharpness assumption, 
Both AR and SR accelerate NAG to linear convergence for convex problems with PL 
%sharpness 
condition \cite{roulet2017sharpness}.

\subsection{Case Study -- Quadratic Function}
%\paragraph{Case Study -- Quadratic Function.}
Consider the following quadratic optimization\footnote{We take this example from \cite{mrtz}.}
{\begin{equation}\label{eq:quadratic}
\min_\vx f(\vx) = \frac{1}{2}\vx^T\mathbf{L}\vx - \vx^T\vb,
\end{equation}}
where $\mathbf{L} \in \mathbb{R}^{d\times d}$ is the Laplacian of a cycle graph.
%$$
%\mathbf{L} = 
%\begin{pmatrix}
%     2&-1&0&\cdots&0&-1  \\
%     -1&2&-1&0&\cdots&0  \\
%     \cdots & \cdots & \cdots & \cdots & \cdots & \cdots \\
%     0&\cdots&0&-1&2&-1\\
%     -1&0&\cdots&0&-1&2\\
%\end{pmatrix}_{d\times d}$$
and $\vb$ is a $d$-dimensional vector whose first entry is $1$ and all the other entries are $0$. It is easy to see that $f(\vx)$ is convex with Lipschitz constant $4$. In particular, we set $d=1$K ($1$K$ := 10^3$). We run $T=50$K iterations with step size $1/4$. In SRNAG, we restart, i.e., we set the momentum to 0, after every $1$K iterations. 
As shown in Fig.~\ref{fig:Quadratic-Compare-Optimization} (a), GD $+$ Momentum converges faster than GD, while NAG speeds up GD $+$ Momentum dramatically and converges to the minimum in an oscillatory fashion. Both AR and SR accelerate NAG significantly.

\begin{figure}[!ht]
\centering
\includegraphics[width=1.0\linewidth]{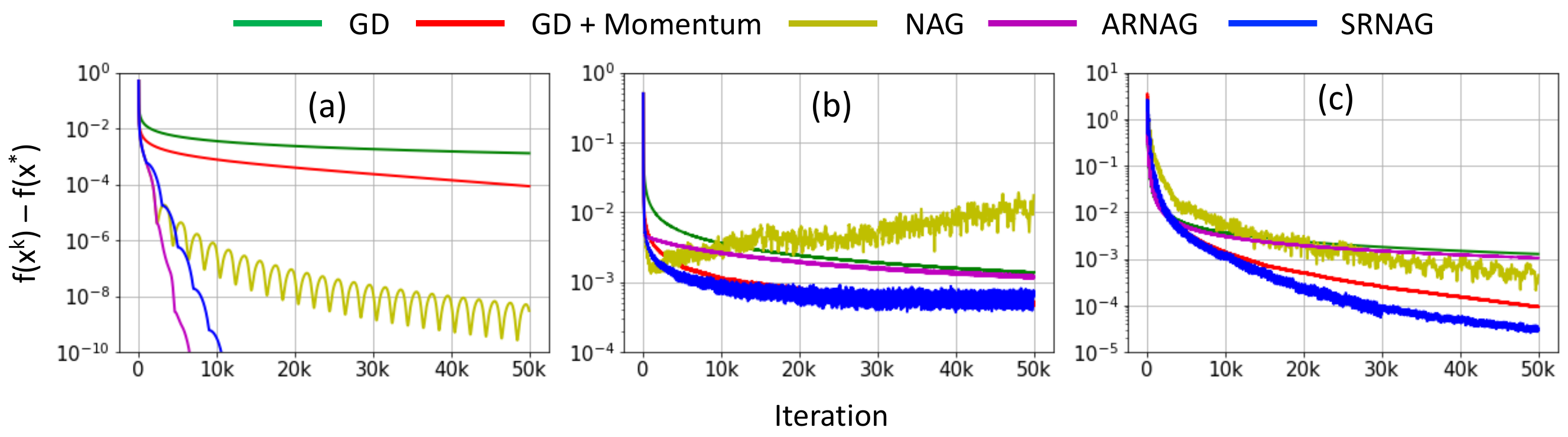}
\caption{Comparison between different schemes in optimizing the quadratic function, 
%$f(\vx)$ in 
\eqref{eq:quadratic}, with (a) exact gradient, (b) gradient with constant variance Gaussian noise, and (c) gradient with decaying variance Gaussian noise. NAG, ARNAG, and SRNAG can speed up convergence remarkably when exact gradient is used. %However, 
Also, SRNAG is more robust to noisy gradient than NAG and ARNAG.
}
\label{fig:Quadratic-Compare-Optimization}
\end{figure}

\section{Scheduled Restart SGD (SRSGD)}\label{section:SRSGD}
Computing 
%exact 
gradient for 
%the
ERM, \eqref{eq:finite-sum}, can be computational costly and memory intensive, especially when the training set is large. In many 
%practical 
applications, such as training DNNs, 
%the SGD update in \eqref{eq:SGD} 
SGD \eqref{eq:SGD} is used. 
%instead. 
In this section, we will first analyze whether NAG and restart techniques can still speed up SGD.
%convergence when stochastic gradient or other types of inexact gradient are used. 
Then we formulate our new SRSGD as a solution to accelerate convergence of SGD using NAG momentum.
%in SGD.
%inexact oracle settings, which includes SGD update.

\subsection{Uncontrolled Bound of Nesterov Accelerated SGD (NASGD)}
%\subsection{Divergence of NASGD}
Replacing {\small$\nabla f(\vw^k) :=1/N\sum_{i=1}^N\nabla f_i(\vw^k)$} in \eqref{eq:NAG} with the stochastic gradient {\small$1/m\sum_{j=1}^m\nabla f_{i_j}(\vw^k)$} for \eqref{eq:finite-sum} will accumulate error even for convex function. We formulate this fact in Theorem~\ref{Theorem:divergence-NASGD-Main}.
%, the proof is provided in the appendix.

\begin{theorem}
\label{Theorem:divergence-NASGD-Main}
Let $f(\vw)$ be a convex and $L$-smooth function. The sequence $\{\vw^k\}_{k\geq 0}$ generated by \eqref{eq:NAG}, with mini-batch stochastic gradient using any constant step size $s_k \equiv s \leq 1/L$, satisfies
\begin{equation}
\label{eq:divergence:NASGD-result}
{\small \mathbb{E}\left(f(\vw^k)-f(\vw^*)\right) = O(k),}
\end{equation}
where $\vw^*$ is the minimum of $f$, and the expectation is taken over the random mini-batch samples.
\end{theorem}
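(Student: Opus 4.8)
The plan is to recast the stochastic update as an \emph{inexact-gradient} NAG iteration and then run the standard estimate-sequence (FISTA-type) energy argument while carrying the noise term along. First I would write the mini-batch gradient as $\frac{1}{m}\sum_{j=1}^m \nabla f_{i_j}(\vw^k) = \nabla f(\vw^k) + \bm{\xi}^k$, where $\bm{\xi}^k$ is the sampling noise; under the usual finite-sum assumptions it is conditionally unbiased, $\mathbb{E}[\bm{\xi}^k \mid \mathcal{F}_k] = \vzero$, with bounded second moment $\mathbb{E}[\|\bm{\xi}^k\|^2 \mid \mathcal{F}_k] \le \sigma^2$, where $\mathcal{F}_k$ is generated by the samples drawn through step $k$. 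Thus $\vv^{k+1} = \vw^k - s\big(\nabla f(\vw^k) + \bm{\xi}^k\big)$, and the whole point is that the seemingly harmless $-s\bm{\xi}^k$ will not average out the way one hopes.

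The technical core is a one-step inexact descent lemma. Combining $L$-smoothness at $\vw^k$ (using $s \le 1/L$) with convexity evaluated at an arbitrary point $\vx$, substituting $\vw^k - \vv^{k+1} = s(\nabla f(\vw^k) + \bm{\xi}^k)$, and invoking the identity
\[
\tfrac{1}{s}\langle \vw^k - \vv^{k+1},\, \vv^{k+1} - \vx\rangle + \tfrac{1}{2s}\|\vw^k - \vv^{k+1}\|^2 = \tfrac{1}{2s}\|\vw^k - \vx\|^2 - \tfrac{1}{2s}\|\vv^{k+1} - \vx\|^2,
\]
I expect the terms linear in $\nabla f(\vw^k)$ to collapse, leaving
\[
f(\vv^{k+1}) \le f(\vx) + \tfrac{1}{2s}\|\vw^k - \vx\|^2 - \tfrac{1}{2s}\|\vv^{k+1} - \vx\|^2 - \langle \bm{\xi}^k,\, \vv^{k+1} - \vx\rangle.
\]
Applying this with $\vx = \vv^k$ and $\vx = \vw^*$, taking the $t_k$-weighted combination, multiplying by $t_{k+1}$, and using the defining relation $t_{k+1}^2 - t_{k+1} = t_k^2$, the squared-norm terms telescope exactly as in the exact-gradient proof into the potential $\Phi_k := t_k^2\big(f(\vv^k) - f(\vw^*)\big) + \tfrac{1}{2s}\|\vu^k\|^2$ with $\vu^k := t_k\vv^k - (t_k-1)\vv^{k-1} - \vw^*$, while the noise survives as a single linear residual $-t_{k+1}\langle \bm{\xi}^k,\, (t_{k+1}-1)(\vv^{k+1}-\vv^k) + (\vv^{k+1}-\vw^*)\rangle$.

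\textbf{Main obstacle.} The decisive step, and where I expect the real difficulty, is handling this residual in expectation, because $\vv^{k+1}$ is \emph{not} $\mathcal{F}_k$-measurable: it contains $-s\bm{\xi}^k$, so the cross term does not vanish. Splitting $\vv^{k+1}$ into its $\mathcal{F}_k$-measurable part plus $-s\bm{\xi}^k$, the unbiased part drops under $\mathbb{E}[\,\cdot \mid \mathcal{F}_k]$ while the $\bm{\xi}^k$-against-$\bm{\xi}^k$ part contributes $s\,\mathbb{E}[\|\bm{\xi}^k\|^2 \mid \mathcal{F}_k]$, giving $\mathbb{E}[\Phi_{k+1}\mid \mathcal{F}_k] \le \Phi_k + s\sigma^2 t_{k+1}^2$. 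This is precisely the error-accumulation mechanism. Taking total expectations and summing yields $\mathbb{E}[\Phi_k] \le \Phi_0 + s\sigma^2\sum_{j=1}^{k} t_j^2$, and since $t_j = \Theta(j)$ we have $\sum_{j\le k} t_j^2 = \Theta(k^3)$ whereas $t_k^2 = \Theta(k^2)$. Dividing the nonnegative inequality $t_k^2\,\mathbb{E}[f(\vv^k)-f(\vw^*)] \le \mathbb{E}[\Phi_k]$ through by $t_k^2$ then produces $\mathbb{E}[f(\vv^k)-f(\vw^*)] = O(k^3)/O(k^2) = O(k)$. A final short step transfers this from the gradient-step iterates $\vv^k$ to the extrapolated iterates $\vw^k$ via $L$-smoothness and the bound $\|\vw^k - \vv^k\| = \tfrac{t_{k-1}-1}{t_k}\|\vv^k - \vv^{k-1}\|$, yielding the claimed $\mathbb{E}(f(\vw^k)-f(\vw^*)) = O(k)$; the only care needed is matching the cubic variance sum against the quadratic weight so the rate lands exactly at $O(k)$ rather than something looser.
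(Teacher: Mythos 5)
Your proposal is correct and reaches the same $O(k)$ conclusion by the same overall mechanism as the paper: both run the classical FISTA estimate-sequence argument with the potential $t_k^2\big(f(\vv^k)-f(\vw^*)\big)+\tfrac{1}{2s}\|t_k\vv^k-(t_k-1)\vv^{k-1}-\vw^*\|_2^2$, use $t_{k+1}^2-t_{k+1}=t_k^2$ to telescope, accumulate a per-step error of order $t_k^2$ times the gradient noise, sum to $O(k^3)$, and divide by $t_k^2=\Theta(k^2)$. Where you genuinely diverge is in how the stochastic error is modeled and how the surviving cross term is killed. The paper recasts the update as an inexact minimization of the quadratic model $Q_r(\vv,\vw^k)=\langle\vv-\vw^k,\nabla f(\vw^k)\rangle+\tfrac{r}{2}\|\vv-\vw^k\|_2^2$ under a $\delta$-inexact oracle, i.e.\ $\mathbb{E}[Q_r(\vv^{k+1},\vw^k)-\min_\vv Q_r(\vv,\vw^k)\,|\,\chi^k]\le\delta$, deduces $\mathbb{E}\|\vv^{k+1}-\tilde{\vv}^{k+1}\|_2^2\le 2\delta/r$ from strong convexity of $Q_r$, and then controls the cross term $2t_k\mathbb{E}\langle\vv^{k+1}-\tilde{\vv}^{k+1},\,t_{k-1}\vv^k-(t_{k-1}-1)\vv^{k-1}-\vv^*\rangle$ by Young's inequality, which costs $2t_k^2\delta/r$ \emph{plus} an extra $t_{k-1}^2R^2$ term that implicitly assumes the shifted iterates remain bounded; unbiasedness of the noise is never used. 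You instead keep the additive decomposition $\vg^k=\nabla f(\vw^k)+\bm{\xi}^k$, carry the noise linearly through a one-step inexact descent lemma, and exploit $\mathbb{E}[\bm{\xi}^k\,|\,\mathcal{F}_k]=\vzero$ by splitting $\vv^{k+1}$ into its $\mathcal{F}_k$-measurable part and $-s\bm{\xi}^k$, so that only the variance contribution $s\sigma^2t_{k+1}^2$ survives. Your route is tighter and dispenses with the paper's hidden boundedness assumption, at the price of requiring conditionally unbiased noise (true for uniform mini-batch sampling, but not assumed by the paper's $\delta$-oracle framework, which only posits a bound on the expected suboptimality of the inexact step). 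One caveat applies to both arguments equally: the bound is really established for the gradient-step iterates $\vv^k$ (the paper's appendix silently restates the theorem for $\vv^k$), and the transfer to the extrapolated iterates $\vw^k$ appearing in the theorem statement is only sketched in your last step --- making it rigorous would still require a bound on $\mathbb{E}\|\vv^k-\vv^{k-1}\|^2$ before the smoothness inequality closes.
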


%NAG momentum not only accumulates error in SGD, but also accumulates error in the $\delta$-inexact gradient \cite{devolder2014first}. A similar error accumulation rate was established in \cite{devolder2014first}. 
%The proof of Theorem~\ref{Theorem:divergence-NASGD-Main} is given in Appendix~A. 
In Appendix~\ref{appendix:uncontrolled:NASGD}, we provide the proof of Theorem~\ref{Theorem:divergence-NASGD-Main}. %Appendix~A.
In \cite{devolder2014first}, Devolder et al. proved a similar error accumulation result for the $\delta$-inexact gradient. In Appendix~\ref{appendix:Delta:Inexact:Gradient}, we provide a brief review of NAG with $\delta$-inexact gradient. We consider three different inexact gradients, namely, Gaussian noise with constant and decaying variance corrupted gradients for the quadratic optimization \eqref{eq:quadratic}, and training logistic regression model for MNIST \cite{lecun-mnisthandwrittendigit-2010} classification. The detailed settings and discussion are provided in the Appendix~~\ref{appendix:Delta:Inexact:Gradient}. We denote SGD with NAG momentum as NASGD, and denote NASGD with AR and SR as ARSGD and SRSGD, respectively. The results shown in Fig.~\ref{fig:Quadratic-Compare-Optimization} (b) and (c) (iteration vs.\ optimal gap for quadratic optimization \eqref{eq:quadratic}
%, $f(\vx^k) - f(\vx^*)$ with $\vx^*$ being the minimum
), and Fig.~\ref{fig:Logistic-Regression-Comparison} (iteration vs. loss for training logistic regression model) confirm Theorem~\ref{Theorem:divergence-NASGD-Main}. Moreover, for these cases SR can improve the performance of NAG with inexact gradients. %This motivates us to leverage NAG momentum with SR to accelerate SGD, which leads to SRSGD. 
%, which is applicable to highly non-convex non-smooth problem such as training DNNs.
When inexact gradient is used, GD performs almost the same as ARNAG asymptotically because ARNAG restarts too often and almost degenerates to GD.

%Based on the results from our case study, we conclude that SR can improve the performance of NAG with inexact gradients. This motivates us to leverage NAG momentum with scheduled restart to accelerate SGD, which leads to our SRSGD method, which is applicable to highly non-convex non-smooth problem such as training DNNs.

\begin{figure}[!ht]
\centering
\begin{tabular}{c}
\includegraphics[width=0.65\columnwidth]{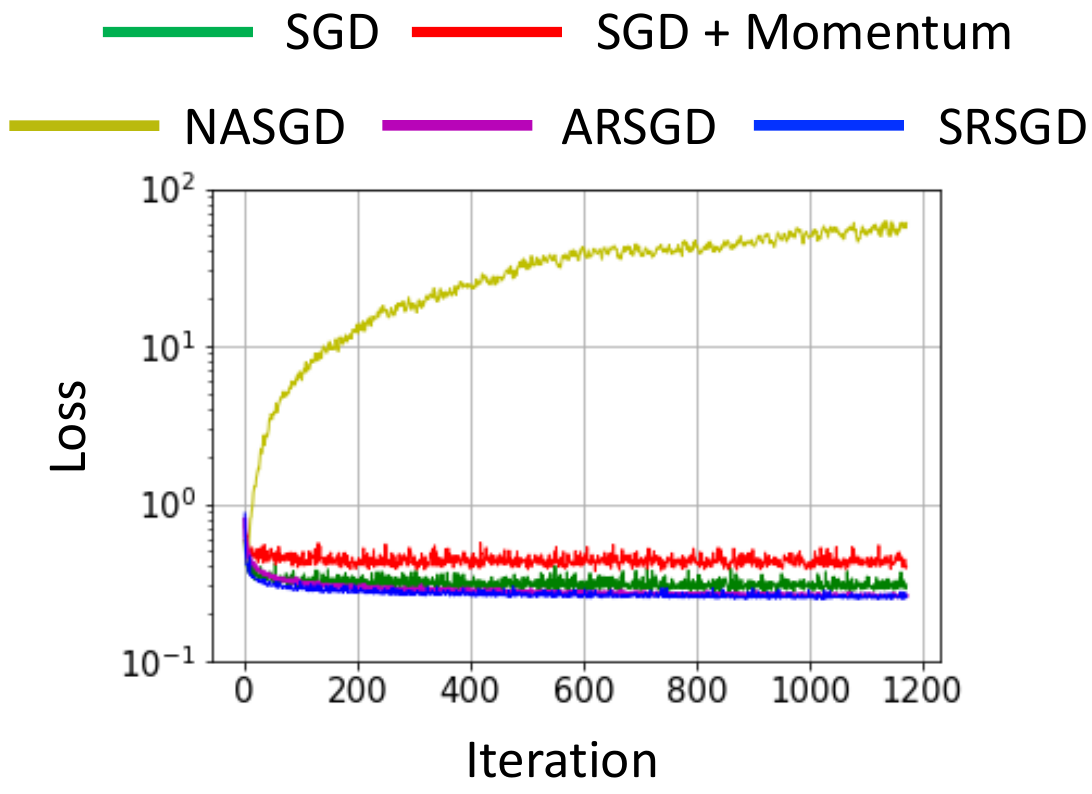}\\
\end{tabular}
\caption{Training loss comparison between different schemes in training logistic regression for MNIST classification. NASGD is not robust to noisy gradient, ARSGD almost degenerates to SGD, and SRSGD performs the best in this case.}
\label{fig:Logistic-Regression-Comparison}
\end{figure}

\subsection{SRSGD and Its Convergence}\label{subsection:SRSGD}
%We now formulate SRSGD. Considering the finite-sum optimization in \eqref{eq:finite-sum}, 
%Consider solving 
For ERM \eqref{eq:finite-sum}, SRSGD replaces $\nabla f(\vw)$ in \eqref{eq:SRNAG} with the stochastic gradient with batch size $m$, gives
%resulting in
{\small \begin{eqnarray}
\label{eq:SRSGD}
\begin{aligned}
  \vv^{k+1} &= \vw^k -  s_k\frac{1}{m}\sum_{j=1}^m\nabla f_{i_j}(\vw^k),\\
  \vw^{k+1} &= \vv^{k+1} + \frac{(k\bmod F_i)}{(k\bmod F_i) + 3} (\vv^{k+1} - \vv^{k}),
\end{aligned}
\end{eqnarray}}
where $F_i$ is the restart frequency used in the interval $I_i$. 
%$F_i$ and $I_i$ are previously defined in~\ref{sec:SRNAG}.
%\emph{SRSGD leverages NAG-style momentum to accelerate SGD}, and we 
We implemented SRSGD in both PyTorch \cite{paszke2019pytorch} and Keras \cite{chollet2015keras}, by changing just a few lines on top of the existing SGD 
%with momentum 
optimizer. We provide a snippet of SRSGD code in Appendix~\ref{appendix:PyTorch} and \ref{appendix:Keras}. We formulate the convergence of SRSGD for general 
%convex or 
nonconvex problems 
%is given 
in Theorem~\ref{Theorem:convergence:SRSGD:main} and we provide its proof in Appendix~\ref{appendix:Convergence:SRSGD}.

\begin{theorem}
\label{Theorem:convergence:SRSGD:main}
Suppose $f(\vw)$ is $L$-smooth. Consider the sequence $\{\vw^k\}_{k\geq 0}$ generated by \eqref{eq:SRSGD} with mini-batch stochastic gradient and any restart frequency $F$ using any constant step size $s_k := s\leq 1/L$. Assume that the set $\mathcal{A} := \{k\in \mathbb{Z}^+| \mathbb{E}f(\vw^{k+1})\geq \mathbb{E}f(\vw^k) \}$ is finite, then we have 
{\small \begin{equation}
\label{eq:convergence:SRSGD}
\min_{1\leq k\leq K}\left\{\mathbb{E}\|\nabla f(\vw^k)\|_2^2\right\} = O(s+ \frac{1}{sK}).
\end{equation}}
Therefore for $\forall \epsilon>0$, to get $\epsilon$ error, we just need to set $s=O(\epsilon)$ and $K=O(1/\epsilon^2)$.
\end{theorem}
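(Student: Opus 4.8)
The plan is to run a standard first-order descent analysis for nonconvex objectives, but arranged so that the Nesterov extrapolation is treated as a bounded, restart-controlled perturbation of a plain stochastic gradient step. Throughout, write $\vg^k := \frac1m\sum_{j=1}^m \nabla f_{i_j}(\vw^k)$ for the mini-batch gradient, which is an unbiased estimator of $\nabla f(\vw^k)$, and $\mu_k := \frac{(k\bmod F)}{(k\bmod F)+3}\in[0,1)$ for the momentum coefficient. The update \eqref{eq:SRSGD} splits into a gradient step $\vv^{k+1}=\vw^k-s\vg^k$ followed by an extrapolation $\vw^{k+1}=\vv^{k+1}+\mu_k(\vv^{k+1}-\vv^k)$, and the structural fact I will lean on is that $\mu_k=0$ whenever $k\equiv 0\pmod F$, so the extrapolation is switched off at every restart. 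I also assume, as is standard in this setting, a uniform bound on the second moment $\mathbb{E}\|\vg^k\|_2^2$ of the stochastic gradient; this is what ultimately produces the nonvanishing $O(s)$ term.

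First I would apply the $L$-smoothness descent lemma to the gradient step $\vw^k\mapsto\vv^{k+1}$, take expectations, and use unbiasedness to get $\mathbb{E}f(\vv^{k+1})\le \mathbb{E}f(\vw^k)-s\,\mathbb{E}\|\nabla f(\vw^k)\|_2^2+\tfrac{Ls^2}{2}\mathbb{E}\|\vg^k\|_2^2$, then rearrange to put $s\,\mathbb{E}\|\nabla f(\vw^k)\|_2^2$ on the left. Writing $\mathbb{E}f(\vw^k)-\mathbb{E}f(\vv^{k+1})=\big(\mathbb{E}f(\vw^k)-\mathbb{E}f(\vw^{k+1})\big)+\big(\mathbb{E}f(\vw^{k+1})-\mathbb{E}f(\vv^{k+1})\big)$ and summing over $k=1,\dots,K$ separates the bound into three pieces: a telescoping term $\sum_k\big(\mathbb{E}f(\vw^k)-\mathbb{E}f(\vw^{k+1})\big)=\mathbb{E}f(\vw^1)-\mathbb{E}f(\vw^{K+1})\le f(\vw^1)-\inf_\vw f(\vw)$, bounded by a $K$-independent constant; the variance remainder $\tfrac{Ls^2}{2}\sum_k\mathbb{E}\|\vg^k\|_2^2=O(Ks^2)$, which becomes the $O(s)$ floor after dividing by $sK$; and the accumulated extrapolation error $\sum_k\big(\mathbb{E}f(\vw^{k+1})-\mathbb{E}f(\vv^{k+1})\big)$.

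Controlling this last sum is the crux. Using $L$-smoothness on the extrapolation $\vv^{k+1}\mapsto\vw^{k+1}$ and rewriting the resulting inner product $\langle\nabla f(\vv^{k+1}),\vv^{k+1}-\vv^k\rangle$ again via smoothness, each summand splits into a momentum-weighted difference $\mu_k\big(\mathbb{E}f(\vv^{k+1})-\mathbb{E}f(\vv^k)\big)$ and a quadratic displacement term $O(\mathbb{E}\|\vv^{k+1}-\vv^k\|_2^2)$. The displacement terms sum to $O(Ks^2)$ and feed the same $O(s)$ floor, while the momentum-weighted differences would telescope cleanly for constant $\mu$; for the varying, periodically reset $\mu_k\in[0,1)$ a summation by parts leaves boundary and coefficient-difference terms that are harmless provided $\mathbb{E}f(\vv^k)$ stays bounded. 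This is where the finiteness of $\mathcal{A}=\{k:\mathbb{E}f(\vw^{k+1})\ge\mathbb{E}f(\vw^k)\}$ is decisive: it forces $\mathbb{E}f(\vw^k)$ to be eventually nonincreasing and hence bounded, which keeps those terms a $K$-independent constant and rules out the linear-in-$k$ blowup that Theorem~\ref{Theorem:divergence-NASGD-Main} exhibits for NASGD without restart.

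Assembling the three pieces yields $s\sum_{k=1}^K\mathbb{E}\|\nabla f(\vw^k)\|_2^2\le C+O(Ks^2)$ for a constant $C$; dividing by $sK$ and bounding the minimum by the average gives $\min_{1\le k\le K}\mathbb{E}\|\nabla f(\vw^k)\|_2^2=O\!\big(s+\tfrac{1}{sK}\big)$, and the choice $s=O(\epsilon)$, $K=O(1/\epsilon^2)$ delivers $\epsilon$ error. The telescoping, the descent lemma, and the final balancing of $s$ and $K$ are all routine; the delicate step is the one flagged above, namely showing that the restart schedule together with the finiteness of $\mathcal{A}$ bounds the accumulated extrapolation error by a constant independent of $K$ rather than merely termwise, thereby converting the error accumulation of Theorem~\ref{Theorem:divergence-NASGD-Main} into a controllable remainder.
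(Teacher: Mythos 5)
Your overall architecture matches the paper's proof in Appendix~\ref{appendix:Convergence:SRSGD} almost exactly: the descent lemma on the gradient step, smoothness applied to the extrapolation step to compare $f(\vw^k)$ with $f(\vv^k)$, a uniform gradient bound feeding the $O(s)$ floor through $\sum_k\|\vv^k-\vv^{k-1}\|_2^2=O(Ks^2)$ (the paper's Lemma~\ref{lemma:convergence:SRSGD-7}, which uses precisely the fact that restarting every $F$ steps keeps $\mu_k\le\bar\theta<1$), and the finiteness of $\mathcal{A}$ to control the accumulated momentum-weighted differences. Whether one telescopes $\mathbb{E}f(\vw^k)$ or $\mathbb{E}f(\vv^k)$ is cosmetic. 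The one substantive divergence is in how you handle the sum $\sum_k\mu_k\bigl(\mathbb{E}f(\vv^{k+1})-\mathbb{E}f(\vv^k)\bigr)$, and there your justification as stated does not go through.

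You claim that after summation by parts the coefficient-difference terms are a $K$-independent constant ``provided $\mathbb{E}f(\vv^k)$ stays bounded.'' But the total variation of the momentum coefficients is not bounded: $\mu_k$ climbs from $0$ to $\bar\theta$ and is reset to $0$ in every window of length $F$, so $\sum_{k\le K}|\mu_{k+1}-\mu_k|\gtrsim \bar\theta K/F$. Mere boundedness of $\mathbb{E}f(\vv^k)$ therefore only yields $\bigl|\sum_k(\mu_{k+1}-\mu_k)\,\mathbb{E}f(\vv^{k+1})\bigr|=O(K)$, which after dividing by $sK$ leaves an $O(1/s)$ residue and destroys the bound; to rescue the summation-by-parts route you would have to exploit cancellation across each restart cycle, which again requires controlling the variation of $\mathbb{E}f(\vv^k)$, not just its size. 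The paper avoids this entirely: it bounds $\mu_k\bigl(\mathbb{E}f(\vv^{k+1})-\mathbb{E}f(\vv^k)\bigr)\le\bar\theta\max\bigl(0,\;\mathbb{E}f(\vv^{k+1})-\mathbb{E}f(\vv^k)\bigr)$, notes that the increment is positive only for the finitely many $k\in\mathcal{A}$ so the whole sum is a constant $\bar R$ independent of $K$, and simply drops the nonpositive terms. Equivalently, finiteness of $\mathcal{A}$ gives finite total variation of $\mathbb{E}f(\vv^k)$ (eventually monotone and bounded below), which is strictly stronger than boundedness and is what the argument actually needs. With that single step repaired --- using the very hypothesis you already invoke, just more directly --- your proof coincides with the paper's.
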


\section{Experimental Results}\label{section:experiments}
\renewcommand\arraystretch{1.3}
\setlength{\tabcolsep}{4pt}
\begin{table*}[t!]
\caption{Classification test error (\%) on CIFAR10 using the SGD, %baseline, 
SGD $+$ NM, and SRSGD. We report the results of SRSGD with two different restarting schedules: linear (lin) and exponential (exp). The numbers of iterations after which we restart the momentum in the lin schedule are 30, 60, 90, 120 for the 1st, 2nd, 3rd, and 4th stage. 
%learning rate. 
Those numbers for the exp schedule are 40, 50, 63, 78. We also include the reported 
%baseline 
results from \cite{he2016identity} (in parentheses) in addition to our reproduced results.}\label{tab:cifar10-acc}
\vspace{2mm}
\centering
\fontsize{7.5pt}{0.75em}\selectfont
\begin{tabular}{c|c|c|c|c|c|c|c}
\hline
Network & \# Params&SGD (baseline) & SGD$+$NM & SRSGD & SRSGD  & Improve over & Improve over \\
 & & & & (lin) & (exp) & SGD (lin/exp) & SGD$+$NM (lin/exp) \\
\hline
Pre-ResNet-110 &$1.1$M & $5.25 \pm 0.14$ ($6.37$) & $5.24 \pm 0.16$ & \pmb{$4.93 \pm 0.13$} & $5.00 \pm 0.47$ & $\pmb{0.32}/0.25$ & $\pmb{0.31}/0.24$ \\
Pre-ResNet-290 &$3.0$M & $5.05 \pm 0.23$ & $5.04 \pm 0.12$ & \pmb{$4.37 \pm 0.15$} & $4.50 \pm 0.18$ & $\pmb{0.68}/0.55$ & $\pmb{0.67}/0.54$ \\
Pre-ResNet-470 &$4.9$M & $4.92 \pm 0.10$ & $4.97 \pm 0.15$ & \pmb{$4.18 \pm 0.09$} & $4.49 \pm 0.19$ & $\pmb{0.74}/0.43$ & $\pmb{0.79}/0.48$ \\
Pre-ResNet-650 &$6.7$M & $4.87 \pm 0.14$ & $4.80 \pm 0.14$ & \pmb{$4.00 \pm 0.07$} & $4.40 \pm 0.13$ & $\pmb{0.87}/0.47$ & $\pmb{0.80}/0.40$ \\
%Pre-ResNet-830 &$8.6$M & $95.00 \pm 0.23$ & $95.91 \pm 0.10$ & $95.71 \pm 0.13$ & $\pmb{0.91}/0.71$ \\
Pre-ResNet-1001&$10.3$M & $4.84 \pm 0.19$ ($4.92$) & $4.62 \pm 0.14$ & \pmb{$3.87 \pm 0.07$} & $4.13 \pm 0.10$ & $\pmb{0.97}/0.71$ & $\pmb{0.75}/0.49$ \\
\hline
\end{tabular}
\end{table*}

We evaluate SRSGD on a variety of DL benchmarks for image classification, including CIFAR10, CIFAR100, and ImageNet. In all experiments, we show the advantage of SRSGD over the widely used and well-calibrated SGD baselines with a constant momentum of $0.9$ and decreasing learning rate at certain epochs, and we denote this optimizer as SGD. We also compare SRSGD with the well-calibrated SGD but switch momentum to the Nesterov momentum 
%(i.e., lookahead momentum) 
of $0.9$, 
and we denoted this optimizer as SGD $+$ NM. 
%we compare SRSGD with SGD with Nesterov momentum of $0.9$, which we denote as SGD $+$ NM. 
We fine tune the SGD/SGD $+$ NM baselines to obtain the best performance, and we then adopt the same set of parameters for training with SRSGD. In the SRSGD experiments, we tune the restart frequencies on small DNNs and apply the tuned restart frequencies to large DNNs. We provide the detailed description of datasets and experimental settings in Appendix~\ref{sec:all-datasets-implementation}.

%%%%%%%%%%%%%%%%%%%%%% CIFAR %%%%%%%%%%%%%%%%%%%%%%%%%%%%
\subsection{CIFAR10 and CIFAR100}
\label{sec:cifar-exp}
We summarize our results for CIFAR in Table~\ref{tab:cifar10-acc} and~\ref{tab:cifar100-acc}. We also explore two different restarting frequency schedules for SRSGD: \emph{linear} and \emph{exponential} schedule. These schedules are governed by two parameters: the initial restarting frequency $F_{1}$ and the growth rate $r$. In both scheduling schemes, during training, the restarting frequency at the 1st learning rate stage is set to $F_{1}$. Then the restarting frequency at the $(k+1)$-th learning rate stage is determined by: 
{\begin{align}
    F_{k+1} = 
    \begin{cases}
        F_{1} \times r^{k}, &\text{exponential schedule} \\
        F_{1} \times (1 + (r-1) \times k), &\text{linear schedule}.
    \end{cases}\nonumber
\end{align}}
We have conducted a hyper-parameter search for $F_{1}$ and $r$ for both scheduling schemes. For CIFAR10, $(F_{1} = 40, r = 1.25)$ and $(F_{1} = 30, r = 2)$ are good initial restarting frequencies and growth rates for the exponential and linear schedules, respectively. For CIFAR100, those values are $(F_{1} = 45, r = 1.5)$ for the exponential schedule and $(F_{1} = 50, r = 2)$ for the linear schedule. 

\renewcommand\arraystretch{1.3}
\setlength{\tabcolsep}{4pt}
\begin{table*}[t!]
\caption{Classification test error (\%) on CIFAR100 using the SGD, SGD $+$ NM, and SRSGD. We report the results of SRSGD with two different restarting schedules: linear (lin) and exponential (exp). The numbers of iterations after which we restart the momentum in the lin schedule are 50, 100, 150, 200 for the 1st, 2nd, 3rd, and 4th stage. Those numbers for the exp schedule are 45, 68, 101, 152. We also include the reported 
%baseline 
results from \cite{he2016identity} (in parentheses) in addition to our reproduced results.}\label{tab:cifar100-acc}
\vspace{2mm}
\centering
\fontsize{7.5pt}{0.75em}\selectfont
\begin{tabular}{c|c|c|c|c|c|c|c}
\hline
Network & \# Params&SGD (baseline) & SGD$+$NM & SRSGD & SRSGD  & Improve over & Improve over \\
 & & & & (lin) & (exp) & SGD (lin/exp) & SGD$+$NM (lin/exp) \\
\hline
Pre-ResNet-110 &$1.2$M & $23.75 \pm 0.20$ & $23.65 \pm 0.36$ &\pmb{$23.49 \pm 0.23$} & $23.50 \pm 0.39$ & $\pmb{0.26}/0.25$ & $\pmb{0.16}/0.15$ \\
Pre-ResNet-290 &$3.0$M & $21.78 \pm 0.21$ & $21.68 \pm 0.21$ & \pmb{$21.49 \pm 0.27$} & $21.58 \pm 0.20$ & $\pmb{0.29}/0.20$ & $\pmb{0.19}/0.10$ \\
Pre-ResNet-470 &$4.9$M & $21.43 \pm 0.30$ & $21.21 \pm 0.30$ & $20.71 \pm 0.32$ & \pmb{$20.64 \pm 0.18$} & $0.72/\pmb{0.79}$ & $0.50/\pmb{0.57}$ \\
Pre-ResNet-650 &$6.7$M & $21.27 \pm 0.14$ & $21.04 \pm 0.38$ &\pmb{$20.36 \pm 0.25$}& $20.41 \pm 0.21$ & $\pmb{0.91}/0.86$ & $\pmb{0.68}/0.63$ \\
Pre-ResNet-1001&$10.4$M & $20.87 \pm 0.20$ ($22.71$) & $20.13 \pm 0.16$ & $19.75 \pm 0.11$ & \pmb{$19.53 \pm 0.19$} & $1.12/\pmb{1.34}$ & $0.38/\pmb{0.60}$\\
\hline
\end{tabular}
\end{table*}

{\bf Improvement in Accuracy Increases with Depth:} We observe that the linear schedule of restart 
%ing frequency 
yields better test error on CIFAR than the exponential schedule for most of the models except for Pre-ResNet-470 and Pre-ResNet-1001 on CIFAR100 (see Table~\ref{tab:cifar10-acc} and~\ref{tab:cifar100-acc}). SRSGD with either linear or exponential restart schedule 
%schedule for restarting frequency 
outperforms the SGD.
%baseline. 
Furthermore, the advantage of SRSGD over SGD is greater for deeper networks. This observation holds strictly when using the linear schedule (see Fig.~\ref{fig:error-vs-depth}) and is overall true when using the exponential schedule with only a few exceptions.

%%%%%%%%%%%%%%%%%%%%%%%%%%%%%%%%%%%%%%%%%%%%%%%%%%%%%%%%%%%%%%%%%%%
\begin{figure}[h!]
\centering
\includegraphics[width=1.0\linewidth]{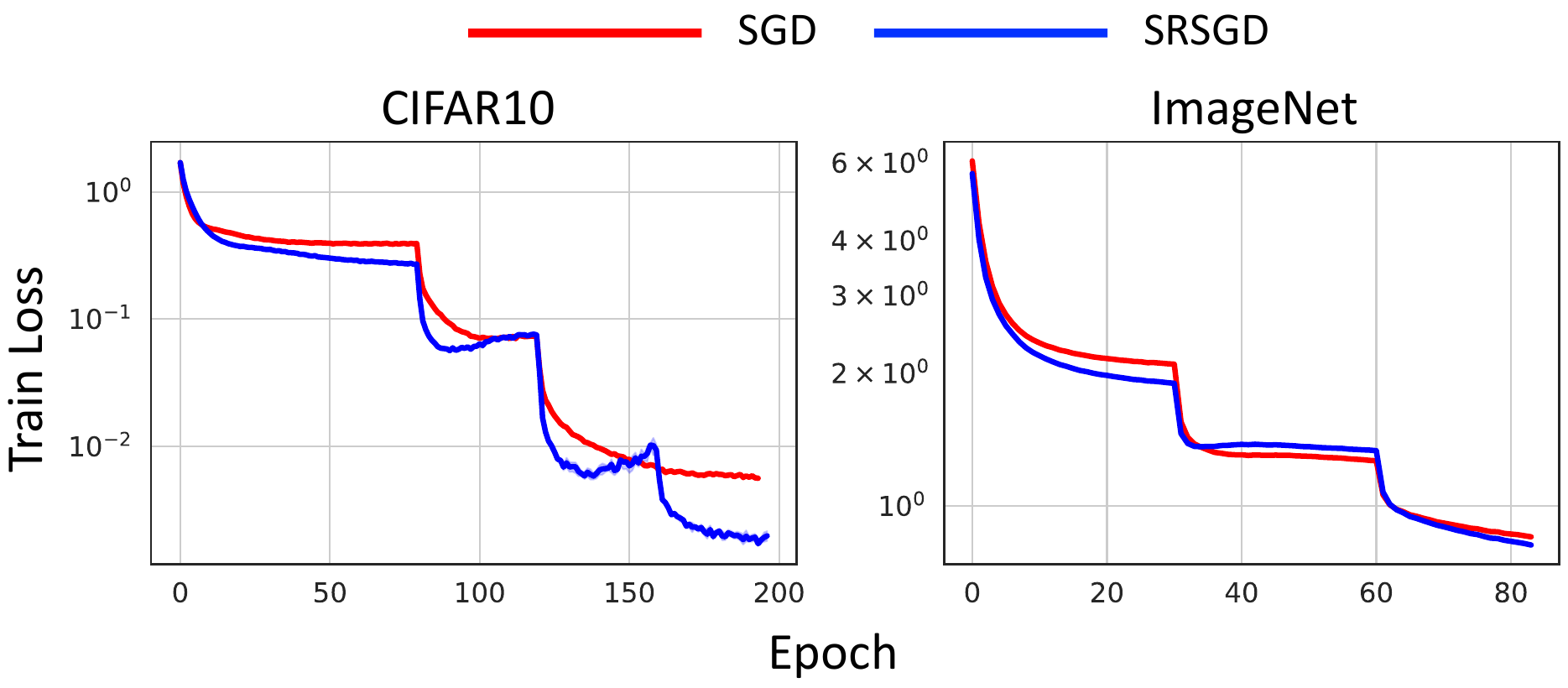}
\caption{Training loss vs. training epoch of ResNet models trained with SRSGD (blue) and the SGD baseline with momentum (red).}
\label{fig:train-loss-vs-epoch}
\end{figure}

{\bf Faster Convergence Reduces the Training Time by Half:} SRSGD also converges faster than SGD. 
%the SGD baseline. 
This is expected since 
%we have observed that 
SRSGD can avoid the error accumulation with inexact oracle and converges faster than SGD + Momentum in our MNIST case study in Section~\ref{section:SRSGD}. For CIFAR, Fig.~\ref{fig:train-loss-vs-epoch} (left) shows that SRSGD yields smaller training loss than SGD during the training. Interestingly, SRSGD converges 
%very 
quickly to good loss values at the 2nd and 3rd stages. 
%learning rate. 
This suggests that \emph{the model can be trained with SRSGD in many fewer epochs compared to SGD} while achieving similar error rate. 

Our numerical results in Table~\ref{tab:cifar-short-train} confirm the hypothesis above. We train Pre-ResNet models with SRSGD \emph{in only 100 epochs}, decreasing the learning rate by a factor of 10 at the 80th, 90th, and 95th epoch while using the same linear schedule for restarting frequency as before with $(F_{1} = 30, r = 2)$ for CIFAR10 and $(F_{1} = 50, r = 2)$ for CIFAR100. We compare the test error of the trained models with those trained by the SGD baseline in 200 epochs. We observe that SRSGD trainings consistently yield lower test errors than SGD except for the case of Pre-ResNet-110 even though \emph{the number of training epochs of our method is only half of the number of training epochs required by SGD}. For Pre-ResNet-110, SRSGD training in 110 epochs with learning rate decreased at the 80th, 90th, and 100th epoch achieves the same error rate as the 200-epoch SGD training on CIFAR10. On CIFAR100, SRSGD training for Pre-ResNet-110 needs 140 epochs with learning rate decreased at the 80th, 100th and 120th epoch to achieve an 0.02\% improvement in error rate over the 200-epoch SGD. %Both 110 and 140 epochs are still much less than 200 epochs. 

\renewcommand\arraystretch{1.3}
\setlength{\tabcolsep}{4pt}
\begin{table}[t!]
\caption{Comparison of classification errors on CIFAR10/100 (\%) between SRSGD training \emph{with only 100 epochs} and SGD baseline training with 200 epochs. Using only half the number of training epochs, SRSGD achieves comparable results to SGD.} \label{tab:cifar-short-train}
\vspace{2.0mm}
\centering
\fontsize{7.5pt}{0.75em}\selectfont
\begin{tabular}{c|c|c|c|c}
\hline
&\multicolumn{2}{c}{CIFAR10} & \multicolumn{2}{c}{CIFAR100}\\
\hline
Network & SRSGD & Improvement & SRSGD & Improvement \\
\hline
Pre-ResNet-110 & $5.43 \pm 0.18$ & $-0.18$ & $23.85 \pm 0.19$ & $-0.10$ \\
Pre-ResNet-290 & $4.83 \pm 0.11$ & $0.22$ & $21.77 \pm 0.43$ & $0.01$ \\
Pre-ResNet-470 & $4.64 \pm 0.17$ & $0.28$ & $21.42 \pm 0.19$ & $0.01$ \\
Pre-ResNet-650 & $4.43 \pm 0.14$ & $0.44 $ & $21.04 \pm 0.20$ & $0.23$  \\
Pre-ResNet-1001 & $4.17 \pm 0.20$ & $0.67$ & $20.27 \pm 0.11$ & $ 0.60$ \\
\hline
Pre-ResNet-110 & $5.25 \pm 0.10$ (110 epochs) & $0.00$ & $23.73 \pm 0.23$ (140 epochs) & $0.02$ \\
\hline
% \hline
\end{tabular}
\end{table}

\begin{comment}
\renewcommand\arraystretch{1.3}
\setlength{\tabcolsep}{4pt}
\begin{table}[t!]
\caption{Comparison of classification errors on CIFAR10/100 (\%) between SRSGD training \emph{with only 100 epochs} and SGD baseline training with 200 epochs. Using only half the number of training epochs, SRSGD achieves comparable results to SGD.} \label{tab:cifar-short-train}
%\vspace{2mm}
\centering
\fontsize{7pt}{0.7em}\selectfont
\begin{tabular}{c|c|c}
\multicolumn{3}{c}{CIFAR10} \\
\hline
Network & SRSGD & Improvement over  \\
& & SGD baseline \\
\hline
Pre-ResNet-110 & $5.43 \pm 0.18$ & $-0.18$ \\
Pre-ResNet-290 & $4.83 \pm 0.11$ & $0.22$ \\
Pre-ResNet-470 & $4.64 \pm 0.17$ & $0.28$ \\
Pre-ResNet-650 & $4.43 \pm 0.14$ & $0.44 $ \\
Pre-ResNet-1001 & $4.17 \pm 0.20$ & $0.67$ \\
\hline
Pre-ResNet-110 & $5.25 \pm 0.10$ (110 epochs) & $0.00$ \\
\hline
\multicolumn{3}{c}{CIFAR100} \\
\hline
Network & SRSGD & Improvement over  \\
& & SGD baseline \\
\hline
Pre-ResNet-110 & $23.85 \pm 0.19$ & $-0.10$ \\
Pre-ResNet-290 & $21.77 \pm 0.43$ & $0.01$ \\
Pre-ResNet-470 & $21.42 \pm 0.19$ & $0.01$ \\
Pre-ResNet-650 & $21.04 \pm 0.20$ & $0.23$ \\
Pre-ResNet-1001 & $20.27 \pm 0.11$ & $ 0.60$ \\
\hline
Pre-ResNet-110 & $23.73 \pm 0.23$ (140 epochs) & $0.02$ \\
\hline
% \hline
\end{tabular}
\end{table}
\end{comment}

%%%%%%%%%%%%%%%%%%%%%% ImageNet %%%%%%%%%%%%%%%%%%%%%%%%%%%%
\subsection{ImageNet}
Next we discuss our experimental results on the 1000-way ImageNet classification task \cite{russakovsky2015imagenet}. 
We conduct our 
%ImageNet 
experiments on ResNet-50, 101, 152, and 200 with 5 different seeds. We use the official Pytorch implementation\footnote{Implementation available at \href{https://github.com/pytorch/examples/tree/master/imagenet}{https://github.com/pytorch/examples/tree/master/imagenet}} for all of our ResNet models \cite{paszke2019pytorch}. Following common practice, we train each model for 90 epochs and decrease the learning rate by a factor of 10 at the 30th and 60th epoch. We use an initial learning rate of 0.1, momentum value of 0.9, and weight decay value of 0.0001. Additional details and comparison between SRSGD and SGD $+$ NM are given in Appendix~\ref{appendix:ImageNet:Others}.

We report single crop validation
errors of ResNet models trained with SGD and SRSGD on ImageNet in Table~\ref{tab:imagenet-acc}. In contrast to our CIFAR experiments, we observe that for ResNets trained on ImageNet with SRSGD, linearly decreasing the restarting frequency to 1 at the last learning rate (i.e., after the 60th epoch) helps improve the generalization of the models. Thus, in our experiments, we set the restarting frequency to a linear schedule until epoch 60. From epoch 60 to 90, the restarting frequency is linearly decreased to 1. We use $(F_{1} = 40, r = 2)$.

\renewcommand\arraystretch{1.3}
\setlength{\tabcolsep}{4pt}
\begin{table*}[!ht]
\caption{Single crop validation
errors (\%) on ImageNet of ResNets trained with SGD baseline and SRSGD. We report the results of SRSGD with the increasing restarting frequency in the first two learning rates. In the last learning rate, the restarting frequency is linearly decreased from 70 to 1. For baseline results, we also include the reported single-crop validation errors 
\cite{he@github} (in parentheses).}\label{tab:imagenet-acc}
\vspace{2mm}
\centering
\fontsize{7.5pt}{0.75em}\selectfont
\begin{tabular}{c|c|c|c|c|c|c|c}
\hline
Network & \# Params & \multicolumn{2}{c|}{SGD} & \multicolumn{2}{c|}{SRSGD} & \multicolumn{2}{c}{Improvement} \\
\hline
 &  & top-1 & top-5  & top-1 & top-5  & top-1 & top-5  \\
\hline
ResNet-50 & $25.56$M & $24.11 \pm 0.10$ ($24.70$) & $7.22 \pm 0.14$ ($7.80$) & \pmb{$23.85 \pm 0.09$} & \pmb{$7.10 \pm 0.09$} & $0.26$ & $0.12$ \\
ResNet-101 & $44.55$M & $22.42 \pm 0.03$ ($23.60$) & $6.22 \pm 0.01$ ($7.10$) & \pmb{$22.06 \pm 0.10$} & \pmb{$6.09 \pm 0.07$} & $0.36$ & $0.13$ \\
ResNet-152 & $60.19$M & $22.03 \pm 0.12$ ($23.00$) & $6.04 \pm 0.07$ ($6.70$) & \pmb{$21.46 \pm 0.07$} & \pmb{$5.69 \pm 0.03$} & $0.57$ & $0.35$ \\
ResNet-200 & $64.67$M & $22.13 \pm 0.12$ & $6.00 \pm 0.07$ & \pmb{$20.93 \pm 0.13$} & \pmb{$5.57 \pm 0.05$} & $1.20$ & $0.43$ \\
\hline
\end{tabular}
\end{table*}

{\bf Advantage of SRSGD continues to grow with depth:} Similar to the CIFAR experiments, we observe that SRSGD outperforms the SGD baseline for all ResNet models that we study. As shown in Fig.~\ref{fig:error-vs-depth}, \emph{the advantage of SRSGD over SGD grows with network depth}, just as in our CIFAR experiments with Pre-ResNet architectures. 

{\bf Avoiding Overfitting in ResNet-200:} ResNet-200 is an interesting model that demonstrates that \emph{SRSGD is better than the SGD baseline at avoiding overfitting.\footnote{By overfitting, we mean that the model achieves low training error but high test error.}} The ResNet-200 trained with SGD has a top-1 error of 22.18\%, higher than the ResNet-152 trained with SGD, which achieves a top-1 error of 21.9\% (see Table~\ref{tab:imagenet-acc}). As pointed out in \cite{he2016identity}, it is because ResNet-200 suffers from overfitting. The ResNet-200 trained with our SRSGD has a top-1 error of 21.08\%, which is 1.1\% lower than the ResNet-200 trained with the SGD baseline and also lower than the ResNet-152 trained with  both SRSGD and SGD, an improvement by 0.21\% and 0.82\%, respectively.  

\renewcommand\arraystretch{1.3}
\setlength{\tabcolsep}{4pt}
\begin{table}[h!]
\caption{Comparison of single crop validation
errors on ImageNet (\%) between SRSGD training \emph{with fewer epochs} and SGD training with full 90 epochs.}\label{tab:imagenet-short-train}
\vspace{2mm}
\centering
\fontsize{7.5pt}{0.75em}\selectfont
\begin{tabular}{c|c|c|c|c|c|c|c}
\hline
%Network & SRSGD & Epoch & Improvement  \\
%& & Reduction & over SGD \\
Network & SRSGD & Reduction & Improvement & Network & SRSGD & Reduction & Improvement \\
\hline
ResNet-50 & $24.30 \pm 0.21$ & $10$ & $-0.19$ & ResNet-152 & $21.79 \pm 0.07$ & $15$ & $0.24$ \\
ResNet-101 & $22.32 \pm 0.06$ & $10$ & $0.1$ & ResNet-200 & $21.92 \pm 0.17$ & $30$ & $0.21$ \\
%ResNet-152 & $21.79 \pm 0.07$ & $15$ & $0.24$ \\
%ResNet-200 & $21.92 \pm 0.17$ & $30$ & $0.21$ \\
\hline
% \hline
\end{tabular}
\end{table}

{\bf Training ImageNet in Fewer Number of Epochs: } As in the CIFAR experiments, we note that when training on ImageNet, SRSGD converges faster than SGD at the first and last learning rate while quickly reaching a good loss value at the second learning rate (see Fig.~\ref{fig:train-loss-vs-epoch}). This observation suggests that ResNets can be trained with SRSGD in fewer epochs while still achieving comparable error rates to the same models trained by the SGD baseline using all 90 epochs. We summarize the results in Table~\ref{tab:imagenet-short-train}. On ImageNet, we note that SRSGD helps reduce the number of training epochs for very deep networks (ResNet-101, 152, 200). For smaller networks like ResNet-50, training with fewer epochs slightly decreases the accuracy.

%%%%%%%%%%%%%%%%%%%%%%%%%%%%%%%%%%%%%%%%%%%%%%%%%%%%%%%%%%%%%%%%%%%

%%%%%%%%%%%%%%%%%%%%%%%%%%%%%%%%%%%%%%%%%%%%%%%%%%%%%%%%%%%%%%%%%%%

\section{Empirical Analysis}\label{section:Empirical-Analysis}

%\subsection{Error Rate vs. Reduction in Epochs}
\paragraph{Error Rate vs. Reduction in Epochs.}
\begin{figure}[!ht]
\centering
\includegraphics[width=1.0\linewidth]{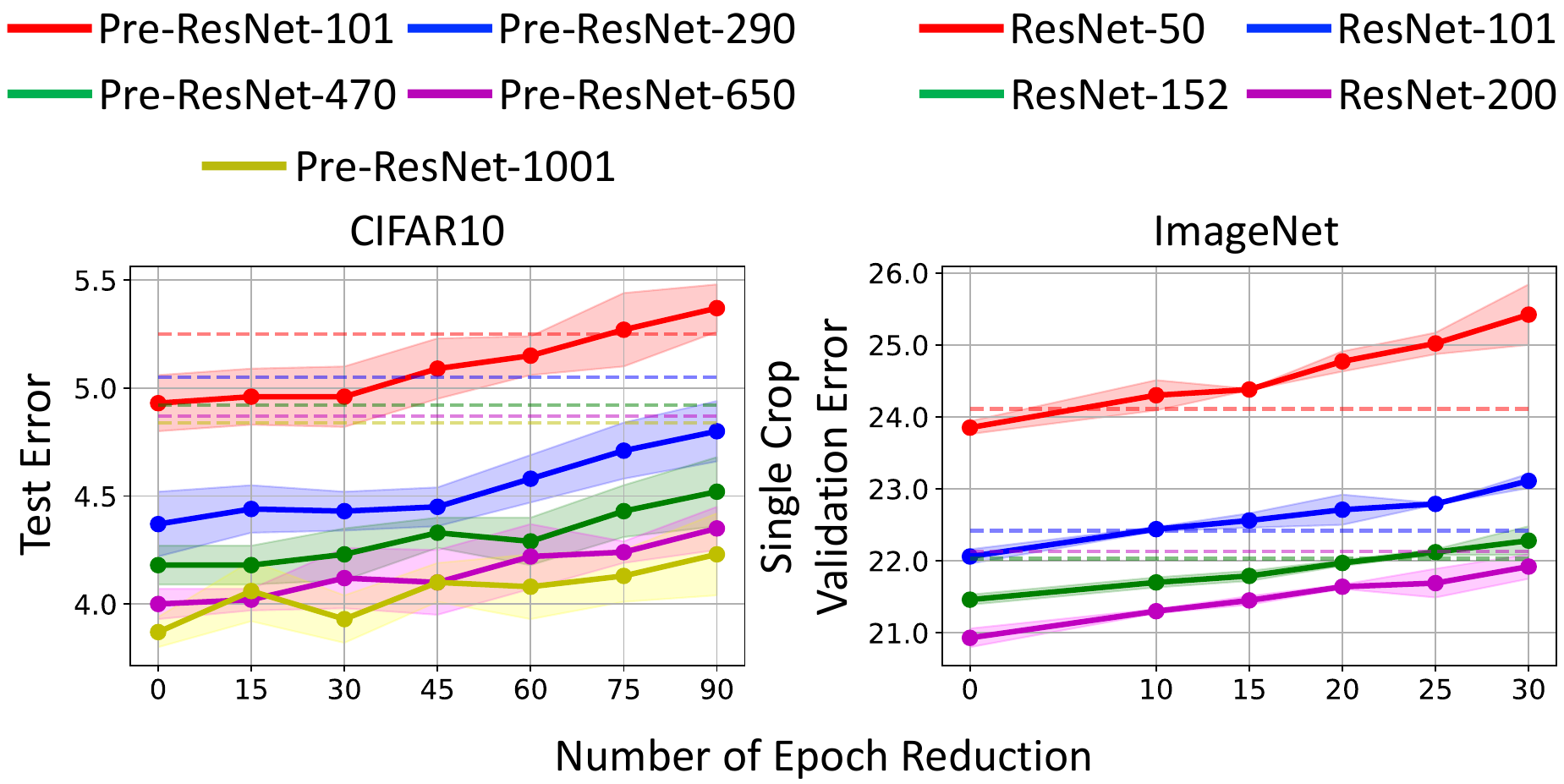}
\caption{Test error vs.\ number of epoch reduction in CIFAR10 and ImageNet training. The dashed lines are test errors of the SGD baseline. For CIFAR, SRSGD training with fewer epochs can achieve comparable results to SRSGD training with full 200 epochs. For ImageNet, training with less epochs slightly decreases the performance of SRSGD but still achieves comparable results to the SGD baseline training.}
\label{fig:error-vs-epoch-reduction}
\end{figure}

We find that SRSGD training using fewer epochs yield comparable error rate to both the SGD baseline and the SRSGD full training with 200 epochs on CIFAR. We conduct an ablation study to understand the impact of reducing the number of epochs on the final error rate when training with SRSGD on CIFAR10 and ImageNet. In the CIFAR10 experiments, we reduce the number of epochs from 15 to 90 while in the ImageNet experiments, we reduce the number of epochs from 10 to 30.
We summarize our results in Fig.~\ref{fig:error-vs-epoch-reduction} and provide detailed results in Appendix~\ref{sec:error-rate-vs-epoch-reduction-appendix}. For CIFAR10, we can train with 30 epochs less while still maintaining a comparable error rate to the full SRSGD training, and with a better error rate than the SGD baseline. For ImageNet, SRSGD training with fewer epochs decreases the accuracy but still obtains comparable results to the 90-epoch SGD baseline as shown in Table~\ref{tab:imagenet-short-train}.
% \subsubsection{SRSGD on Neural Machine Translation}

%\subsection{Impact of Restarting Frequency}
\paragraph{Impact of Restarting Frequency}
We examine the impact of restarting frequency on the network training. We choose a case study of training Pre-ResNet-290 on CIFAR10 using SRSGD with a linear schedule scheme for the restarting frequency. We fix the growth rate $r=2$ and vary the initial restarting frequency $F_{1}$ from 1 to 80 in increments of 10. As shown in Fig.~\ref{fig:ablation_restarting_frequency},  SRSGD with large $F_{1}$, e.g. $F_{1}=80$, approximates NASGD (yellow). As discussed in Section~\ref{section:SRSGD}, it suffers from error accumulation due to stochastic gradients and converges slowly. SRSGD with small $F_{1}$, e.g. $F_{1}=1$, approximates SGD without momentum (green). It converges faster initially but reaches a worse local minimum (i.e. greater loss). Typical SRSGD (blue) converges faster than NASGD and to a better local minimum than both NASGD and SGD without momentum. It also achieves the best test error. We provide more results in Appendix~\ref{sec:impact-restarting-frequency-appendix} and~\ref{appendix:FullTrainingLessEpochs}.

\begin{figure}[!ht]
\centering
\begin{tabular}{c}
\includegraphics[width=1.0\columnwidth]{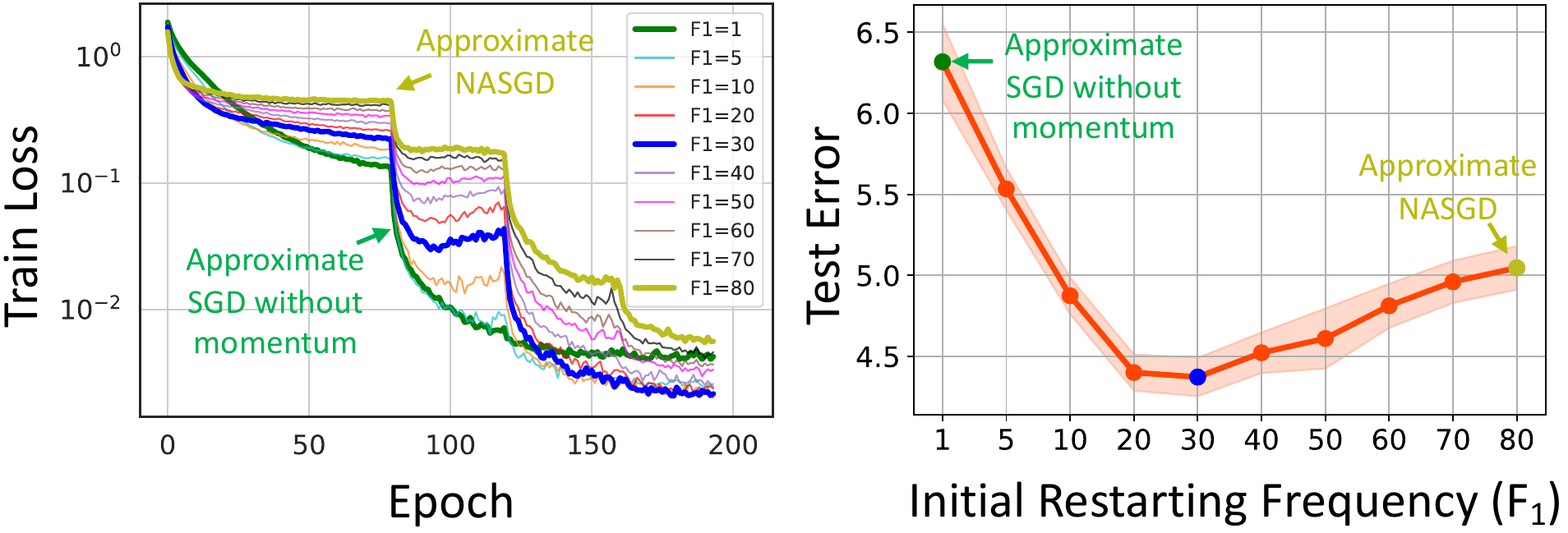}\\
\end{tabular}
\caption{Training loss and test error of Pre-ResNet-290 trained on CIFAR10 with different initial restarting frequencies $F_1$ (linear schedule). SRSGD with small $F_1$ approximates SGD without momentum, while SRSGD with large $F_1$ approximates NASGD.}
\label{fig:ablation_restarting_frequency}
\end{figure}

\section{Additional Related Work}\label{section:related_work}
\begin{comment}
\begin{figure}[!t]
\centering
\includegraphics[width=1.0\linewidth]{figs/ablation_restarting_frequency.pdf}
\caption{Training loss and test error of Pre-ResNet-290 trained on CIFAR10 with different initial restarting frequencies $F_1$ (linear schedule). SRSGD with small $F_1$ approximates SGD without momentum, while SRSGD with large $F_1$ approximates NASGD.}
\label{fig:ablation_restarting_frequency}
\end{figure}
\end{comment}

Momentum has long been used to accelerate SGD. \cite{sutskever2013importance} showed that SGD with scheduled momentum and a %well-designed 
good initialization can handle 
%deal with 
the curvature issues in training DNNs and enable the trained models to generalize well. \cite{kingma2014adam,dozat2016incorporating} integrated momentum with adaptive step size to accelerate SGD. These works all leverage constant momentum, while our work utilizes NAG momentum with restart. AR and SR have been used to accelerate NAG with exact gradient \cite{nemirovskii1985optimal,nesterov2013gradient,iouditski2014primal,lin2014adaptive,renegar2014efficient,freund2018new,roulet2015computational,o2015adaptive,giselsson2014monotonicity,su2014differential}. These studies of restart NAG momentum are for convex optimization with exact gradient. Our work focuses on SGD for nonconvex optimization.  Many efforts have also been devoted to accelerating first-order algorithms with noise-corrupted gradients \cite{cohen2018acceleration,aybat2018robust}.

\section{Conclusions}\label{section:conclusion}
We propose the Scheduled Restart SGD (SRSGD), with two major changes from the widely used SGD with constant momentum (without ambiguity we call it SGD). First, we replace the momentum in SGD with the increasing momentum in Nesterov accelerated gradient (NAG). Second, we restart the momentum according to a schedule to prevent error accumulation when the stochastic gradient is used. For image classification, SRSGD can significantly improve the accuracy of the trained DNNs. Also, compared to the SGD baseline, SRSGD requires fewer training epochs to reach to the same trained model's accuracy. There are numerous avenues for future work: 1) deriving the optimal restart scheduling and the corresponding convergence rate of SRSGD, 2) integrating the scheduled restart NAG momentum with adaptive learning rate algorithms, e.g. Adam, and 3) integrating SRSGD with optimizers that remove noise on the fly, e.g., Laplacian smoothing SGD \cite{osher2018laplacian}.

\section*{Acknowledgments}
This material is based on research sponsored by the National Science Foundation under grant number DMS-1924935 and DMS-1554564 (STROBE). 
%The Air Force Research Laboratory under grant numbers FA9550-18-0167 and MURI FA9550-18-1-0502, the Office of Naval Research under grant number N00014-18-1-2527.

\bibliography{example_paper.bib}
\bibliographystyle{plain}

\appendix
\newpage
\section{Uncontrolled Bound of NASGD}\label{appendix:uncontrolled:NASGD}
Consider the following optimization problem
\begin{equation}
\label{eq:divergence-NASGD:1}
\min_\vw = f(\vw) ,
\end{equation}
where $f(\vw)$ is $L$-smooth and convex. 

Start from $\vw^k$, GD update, with step size $\frac{1}{r}$, can be obtained based on the minimization of the functional
\begin{equation}
\label{eq:divergence-NASGD:2}
\min_\vv Q_r(\vv, \vw^k) := \langle \vv-\vw^k, \nabla f(\vw^k) \rangle  + \frac{r}{2}\|\vv - \vw^k\|_2^2.
\end{equation}

With direct computation, we can get that
$$Q_r(\vv^{k+1}, \vw^k) - \min Q_r(\vv, \vw^k)=\frac{\|{\bf g}^k-\nabla f({\bf w}^k)\|_2}{2r},$$
where ${\bf g}^k:=\frac{1}{m}\sum_{j=1}^m\nabla f_{i_j}(\vw^k)$. We assume the variance is bounded, which gives
The stochastic gradient rule, $\mathcal{R}_s$, satisfies $\mathbb{E}[Q_r(\vv^{k+1}, \vw^k) - \min Q_r(\vv, \vw^k) | \chi^k ]\leq \delta$,
with $\delta$ being a constant and $\chi^k$ being the sigma algebra generated by $\vw^1, \vw^2, \cdots, \vw^k$, i.e.,
$$
\chi^k := \sigma(\vw^1, \vw^2, \cdots, \vw^k).
$$

NASGD can be reformulated as

\begin{eqnarray}
\label{eq:divergence-NASGD:3}
\begin{aligned}
\vv^{k+1} &\approx& \min_\vv Q_r(\vv, \vw^k) \ \mbox{with rule}\ \mathcal{R}_s,\\
\vw^{k+1} &=& \vv^{k+1} + \frac{t_k-1}{t_{k+1}}(\vv^{k+1} - \vv^{k}),
\end{aligned}
\end{eqnarray}
where $t_0=1$ and $t_{k+1} = (1+\sqrt{1+4t_k^2})/2$.

\subsection{Preliminaries}
To proceed, we introduce several definitions and some useful properties in variational and convex analysis. More detailed background can be found at \cite{mordukhovich2006variational,nesterov1998introductory,rockafellar2009variational,rockafellar1970convex}. 

\begin{comment}
The {\it subgradient set} of a function $J$ at $\vw$ is the set given by
$$
\partial J(\vw) := \{\bv| J(\vv) \geq J(\vw) + \langle \bv, \vv-\vw\rangle,\ \forall \vv \in {\rm dom}(J)\},
$$
and we say that $\bv$ is a {\it subgradient vector}. Note that if $J$ is convex and differentiable, then its gradient at $\vw$ is a subgradient and so $\partial J(\vw) =\{\nabla J(\vw)\}$. But a subgradient can exist even when $J$ is not differentiable at $\vw$. 
\end{comment}

Let $f$ be a convex function, we say that $f$ is {\it $L$-smooth (gradient Lipschitz)} 
%{\it gradient Lipschitz} with constant $L_J$ 
if $f$ is differentiable and
$$
\|\nabla f(\vv)-\nabla f(\vw)\|_2 \leq L\|\vv-\vw\|_2,
%L_J\|\vw-\vv\|_2,
$$
and we say $f$ is {\it $\nu$-strongly convex} 
%with constant $\nu$ of 
if for any $\vw, \vv \in {\rm dom}(J)$
%and $\bv\in \partial J(\vw)$
$$
f(\vw) \geq f(\vv) + \langle \nabla f(\vv), \vw-\vv\rangle + \frac{\nu}{2}\|\vw-\vv\|_2^2.
%\langle \bv, \vv-\vw\rangle + \frac{\nu}{2}\|\vv-\vw\|_2^2.
$$
Below of this subsection, we list several basic but useful lemmas, the proof can be found in \cite{nesterov1998introductory}.

\begin{lemma}
\label{lemma:divergence-NASGD:2}
If $f$ is {\it $\nu$-strongly convex}, 
%with constant $\nu$, 
then for any $\vv\in {\rm dom}(J)$ we have
\begin{equation}
\label{eq:divergence-NASGD:5}
f(\vv) - f(\vv^*) \geq \frac{\nu}{2}\|\vv-\vv^*\|_2^2,
\end{equation}
where $\vv^*$ is the minimizer of $f$.
\end{lemma}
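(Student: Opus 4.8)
The plan is to derive the inequality directly from the definition of $\nu$-strong convexity together with the first-order optimality condition at the minimizer. The only structural facts I will use are that $f$ is differentiable (so that $\nabla f$ is defined on the domain) and that $\vv^*$ is a global minimizer of $f$; no additional machinery is needed.

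First I would instantiate the strong convexity inequality
$$f(\vw) \geq f(\vv) + \langle \nabla f(\vv), \vw-\vv\rangle + \frac{\nu}{2}\|\vw-\vv\|_2^2$$
at a carefully chosen pair of points: take the ``base point'' to be the minimizer $\vv^*$ and the ``evaluation point'' to be the arbitrary $\vv$ in the statement. Renaming accordingly (i.e. putting $\vw = \vv$ and using $\vv^*$ in the role of the base point) yields
$$f(\vv) \geq f(\vv^*) + \langle \nabla f(\vv^*), \vv-\vv^*\rangle + \frac{\nu}{2}\|\vv-\vv^*\|_2^2.$$

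Next I would invoke the first-order optimality condition. Since $\vv^*$ minimizes the differentiable function $f$, we have $\nabla f(\vv^*) = \vzero$, so the inner-product term vanishes identically. Substituting into the previous display gives exactly
$$f(\vv) - f(\vv^*) \geq \frac{\nu}{2}\|\vv-\vv^*\|_2^2,$$
which is the desired conclusion.

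The argument is essentially a single substitution, so there is no genuine analytic difficulty; the closest thing to an obstacle is justifying that $\nabla f(\vv^*) = \vzero$. This is immediate in the unconstrained setting, which is the relevant case here since the problem is posed over all of $\RR^d$. For a constrained domain one would instead use the variational inequality $\langle \nabla f(\vv^*), \vv-\vv^*\rangle \geq 0$ for all feasible $\vv$, which drops a nonnegative term and hence preserves the direction of the bound. In either case the strong-convexity contribution $\frac{\nu}{2}\|\vv-\vv^*\|_2^2$ is what survives and delivers the claimed quadratic growth lower bound.
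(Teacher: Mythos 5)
Your proof is correct: instantiating the strong-convexity inequality at the base point $\vv^*$ and killing the linear term via the first-order optimality condition $\nabla f(\vv^*)=\vzero$ is exactly the standard argument, and your remark about the unconstrained setting (which is the one relevant here, since the lemma is applied to the unconstrained minimizer of $Q_r(\cdot,\vw^k)$) correctly handles the only subtlety. The paper does not spell out a proof of this lemma at all --- it defers to the cited textbook of Nesterov --- and your argument is precisely the proof one would find there.
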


\begin{lemma}
\label{lemma:divergence-NASGD:3}
If $f$ is $L$-smooth, 
%{\it gradient-Lipschitz} with constant $L_J$, 
for any $\vw, \vv\in {\rm dom}(f)$,
$$
f(\vw) \leq f(\vv) + \langle \nabla f(\vv), \vw-\vv \rangle + 
\frac{L}{2}\|\vw-\vv\|_2^2.
%\frac{L_J}{2}\|\vv-\vw\|_2^2.
$$
\end{lemma}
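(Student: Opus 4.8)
The plan is to reduce the multivariate inequality to a one-dimensional statement by restricting $f$ to the line segment joining $\vv$ and $\vw$, and then to control the variation of the gradient along that segment using only the Lipschitz hypothesis. Concretely, I would introduce the scalar function $g(t) := f(\vv + t(\vw - \vv))$ for $t \in [0,1]$, so that $g(0) = f(\vv)$ and $g(1) = f(\vw)$, and by the chain rule its derivative is $g'(t) = \langle \nabla f(\vv + t(\vw - \vv)), \vw - \vv\rangle$.

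The first key step is to express the functional gap as an integral via the fundamental theorem of calculus,
$$f(\vw) - f(\vv) = \int_0^1 \langle \nabla f(\vv + t(\vw - \vv)), \vw - \vv\rangle \, dt,$$
and then to subtract the linear term $\langle \nabla f(\vv), \vw - \vv\rangle$ and absorb it into the integrand, which yields
$$f(\vw) - f(\vv) - \langle \nabla f(\vv), \vw - \vv\rangle = \int_0^1 \langle \nabla f(\vv + t(\vw - \vv)) - \nabla f(\vv), \vw - \vv\rangle \, dt.$$

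The second step is to bound the integrand. I would apply the Cauchy--Schwarz inequality to the inner product and then invoke the $L$-smoothness assumption $\|\nabla f(\vu) - \nabla f(\vv)\|_2 \leq L\|\vu - \vv\|_2$ with $\vu = \vv + t(\vw - \vv)$, so that the gradient difference is at most $L\,\|t(\vw - \vv)\|_2 = L\,t\,\|\vw - \vv\|_2$. The integrand is therefore dominated by $L\,t\,\|\vw - \vv\|_2^2$, and integrating $\int_0^1 L\,t \, dt = L/2$ delivers the claimed quadratic upper bound $\tfrac{L}{2}\|\vw - \vv\|_2^2$.

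I do not expect a genuine obstacle here, as this is the standard descent lemma for gradient-Lipschitz functions; the only point requiring care is the factor of $t$ in the bound on the gradient difference along the segment, which is precisely what sharpens the crude estimate $L\|\vw - \vv\|_2^2$ to the optimal constant $\tfrac{L}{2}$. An alternative route would bypass calculus and instead derive the inequality from a co-coercivity characterization of $L$-smoothness, but the integral argument above is the most direct and relies only on differentiability of $f$, which is already assumed.
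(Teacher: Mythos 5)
Your proof is correct and complete: the restriction to the segment, the fundamental theorem of calculus, Cauchy--Schwarz, and the Lipschitz bound with the factor $t$ yield exactly the stated constant $L/2$, and you rightly observe that convexity is not needed. The paper does not prove this lemma itself but defers to Nesterov's textbook, where the argument is precisely the integral one you give, so there is nothing to reconcile.
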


\subsection{Uncontrolled Bound of NASGD}
In this part, we denote
\begin{equation}
\label{eq:divergence-NASGD:7}
\Tilde{\vv}^{k+1} := \min_{\vv} Q_r(\vv, \vw^k).
\end{equation}

\begin{lemma}
\label{lemma:divergence-NASGD:4}
If the constant $r>0$, then
\begin{equation}
\label{eq:divergence-NASGD:8}
\mathbb{E}\left(\|\vv^{k+1} - \Tilde{\vv}^{k+1}\|_2^2 | \chi^k\right) \leq \frac{2\delta}{r}.
\end{equation}
\end{lemma}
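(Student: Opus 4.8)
The plan is to exploit the fact that, for a fixed (and $\chi^k$-measurable) point $\vw^k$, the map $\vv \mapsto Q_r(\vv, \vw^k)$ is a simple quadratic with Hessian equal to $r\mI$, hence $r$-strongly convex in $\vv$. First I would identify its unique minimizer: setting $\nabla_\vv Q_r = \nabla f(\vw^k) + r(\vv - \vw^k)$ to zero gives $\Tilde{\vv}^{k+1} = \vw^k - \frac{1}{r}\nabla f(\vw^k)$, which is precisely the exact GD step with step size $1/r$ and is itself $\chi^k$-measurable. This identification is what lets the function-value suboptimality be turned into a distance bound.

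Next I would apply Lemma~\ref{lemma:divergence-NASGD:2} to the function $Q_r(\cdot, \vw^k)$ with strong-convexity modulus $\nu = r$ and minimizer $\Tilde{\vv}^{k+1}$. Evaluating the resulting quadratic-growth inequality at the point $\vv^{k+1}$ produced by the stochastic rule $\mathcal{R}_s$ yields the pointwise estimate $\frac{r}{2}\|\vv^{k+1} - \Tilde{\vv}^{k+1}\|_2^2 \leq Q_r(\vv^{k+1}, \vw^k) - \min_\vv Q_r(\vv, \vw^k)$. I would then take the conditional expectation given $\chi^k$ on both sides and invoke the defining property of the rule $\mathcal{R}_s$, namely $\mathbb{E}[\,Q_r(\vv^{k+1}, \vw^k) - \min_\vv Q_r(\vv, \vw^k)\mid \chi^k\,] \leq \delta$. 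Dividing through by $r/2$ delivers exactly $\mathbb{E}(\|\vv^{k+1} - \Tilde{\vv}^{k+1}\|_2^2 \mid \chi^k) \leq 2\delta/r$, as claimed.

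There is no serious obstacle here: the argument rests entirely on recognizing that $Q_r$ has constant curvature $r$, which makes the strong-convexity lower bound directly convert the expected suboptimality $\delta$ into a bound on the expected squared distance to the GD iterate $\Tilde{\vv}^{k+1}$. The only point requiring mild care is to apply strong convexity in the $\vv$ variable with $\vw^k$ held fixed, so that the minimizer $\Tilde{\vv}^{k+1}$ is $\chi^k$-measurable and taking the conditional expectation is legitimate; given that, the result follows immediately from Lemma~\ref{lemma:divergence-NASGD:2} and the assumed bound on $\mathcal{R}_s$.
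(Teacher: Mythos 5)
Your proposal is correct and follows essentially the same route as the paper: apply the strong-convexity quadratic-growth bound (Lemma~\ref{lemma:divergence-NASGD:2}) to $Q_r(\cdot,\vw^k)$ with modulus $r$ and minimizer $\Tilde{\vv}^{k+1}$, then combine with the defining bound $\mathbb{E}[Q_r(\vv^{k+1},\vw^k)-\min_\vv Q_r(\vv,\vw^k)\mid\chi^k]\leq\delta$ of the rule $\mathcal{R}_s$. Your added remarks on the explicit form of the minimizer and its $\chi^k$-measurability are sound but not needed beyond what the paper already does.
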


\begin{proof}
Note that $Q_r(\vv, \vw^k)$ is strongly convex with constant $r$, and $\Tilde{\vv}^{k+1}$ in \eqref{eq:divergence-NASGD:7} is the minimizer of $Q_r(\vv, \vw^k)$. With Lemma~\ref{lemma:divergence-NASGD:2} we have
\begin{equation}
\label{eq:divergence-NASGD:9}
Q_r(\vv^{k+1}, \vw^k) - Q_r(\Tilde{\vv}^{k+1}, \vw^k) \geq \frac{r}{2} \|\vv^{k+1}-\Tilde{\vv}^{k+1}\|_2^2.
\end{equation}

Notice that
$$
\mathbb{E}\left[ Q_r(\vv^{k+1}, \vw^k) - Q_r(\Tilde{\vv}^{k+1}, \vw^k) \right] = \mathbb{E}\left[ Q_r(\vv^{k+1}, \vw^k) - \min_\vv Q_r(\vv, \vw^k) \right] \leq \delta.
$$

The inequality \eqref{eq:divergence-NASGD:8} can be established by combining the above two inequalities.
\end{proof}

\begin{lemma}
\label{lemma:divergence-NASGD:5}
If the constant satisfy $r>L$, then we have
\begin{eqnarray}
\label{eq:divergence:NASGD-10}
&&\mathbb{E}\left( f(\Tilde{\vv}^{k+1}) + \frac{r}{2}\|\Tilde{\vv}^{k+1}-\vw^k\|_2^2 - (f(\vv^{k+1}) + \frac{r}{2}\|\vv^{k+1}-\vw^k\|_2^2 )  \right) \\ \nonumber
&\geq& -\tau \delta - \frac{r-L}{2}\mathbb{E}[\|\vw^k-\Tilde{\vv}^{k+1}\|_2^2],
\end{eqnarray}
where 
$\tau = \frac{L^2}{r(r-L)}$.
\end{lemma}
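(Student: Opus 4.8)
The plan is to derive a deterministic lower bound for $\phi(\tilde{\vv}^{k+1}) - \phi(\vv^{k+1})$ conditioned on $\chi^k$, where I abbreviate $\phi(\vx) := f(\vx) + \frac{r}{2}\|\vx-\vw^k\|_2^2$ and $\vu := \vv^{k+1} - \tilde{\vv}^{k+1}$, and then take the conditional expectation and feed in Lemma~\ref{lemma:divergence-NASGD:4}. The starting point is that $\tilde{\vv}^{k+1}$ is the \emph{exact} minimizer of $Q_r(\cdot,\vw^k)$, so $\tilde{\vv}^{k+1}-\vw^k = -\tfrac{1}{r}\nabla f(\vw^k)$. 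Completing the square in the quadratic part gives the exact identity
\[
\tfrac{r}{2}\|\tilde{\vv}^{k+1}-\vw^k\|_2^2 - \tfrac{r}{2}\|\vv^{k+1}-\vw^k\|_2^2 = \langle \nabla f(\vw^k), \vu\rangle - \tfrac{r}{2}\|\vu\|_2^2 .
\]

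For the $f$-difference I would invoke the descent lemma (Lemma~\ref{lemma:divergence-NASGD:3}) at the pair $(\vv^{k+1},\tilde{\vv}^{k+1})$, obtaining $f(\tilde{\vv}^{k+1}) - f(\vv^{k+1}) \geq -\langle \nabla f(\tilde{\vv}^{k+1}), \vu\rangle - \tfrac{L}{2}\|\vu\|_2^2$. Adding the two contributions, the linear terms collapse into $\langle \nabla f(\vw^k) - \nabla f(\tilde{\vv}^{k+1}), \vu\rangle$, which by Cauchy--Schwarz together with $L$-smoothness is at least $-L\|\vw^k - \tilde{\vv}^{k+1}\|_2\,\|\vu\|_2$. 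Next I would apply Young's inequality to this cross term with the weight tuned so the coefficient of $\|\vw^k-\tilde{\vv}^{k+1}\|_2^2$ is exactly $\tfrac{r-L}{2}$, namely $L\|\vw^k-\tilde{\vv}^{k+1}\|_2\|\vu\|_2 \leq \tfrac{r-L}{2}\|\vw^k-\tilde{\vv}^{k+1}\|_2^2 + \tfrac{L^2}{2(r-L)}\|\vu\|_2^2$. This is exactly where the hypothesis $r>L$ and the constant $\tau = \tfrac{L^2}{r(r-L)}$ enter: after taking $\mathbb{E}(\cdot\mid\chi^k)$ (under which $\|\vw^k-\tilde{\vv}^{k+1}\|_2$ is deterministic) and substituting $\mathbb{E}(\|\vu\|_2^2\mid\chi^k)\leq 2\delta/r$ from Lemma~\ref{lemma:divergence-NASGD:4}, the Young $\|\vu\|_2^2$-term turns into precisely $\tfrac{L^2}{2(r-L)}\cdot\tfrac{2\delta}{r} = \tau\delta$, and the $\tfrac{r-L}{2}$-term becomes the stated $\tfrac{r-L}{2}\mathbb{E}\|\vw^k-\tilde{\vv}^{k+1}\|_2^2$.

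The step I expect to be the main obstacle is the bookkeeping of the several quadratic-in-$\vu$ terms so that the error collapses to exactly $\tau\delta$. The quadratic expansion contributes an unavoidable $-\tfrac{r}{2}\|\vu\|_2^2$ and the descent lemma an additional $-\tfrac{L}{2}\|\vu\|_2^2$, both of which also get multiplied by $2\delta/r$ under the expectation; isolating the clean constant $\tau$ requires either absorbing these into the $\delta$-error via a sharper handling of the cross term (e.g.\ exploiting that $\mathbb{E}(\vu\mid\chi^k)=\vzero$ for the unbiased stochastic oracle, which kills the cross term and lets one route the remaining quadratic terms through Lemma~\ref{lemma:divergence-NASGD:4}) or accepting a constant of the same order. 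In any case the structurally important output is a bound of the form $-O(\delta) - \tfrac{r-L}{2}\mathbb{E}\|\vw^k-\tilde{\vv}^{k+1}\|_2^2$, and it is this shape—rather than the precise value of $\tau$—that drives the subsequent telescoping toward the $O(k)$ error-accumulation conclusion.
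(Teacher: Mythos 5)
Your proposal is correct in substance but takes a genuinely different route to the cross-term inequality than the paper does. The paper gets there by combining convexity of $f$ at $\vv^{k+1}$ (inequality \eqref{eq:divergence:NASGD-11}) with the defining property of the $\delta$-inexact rule, $\mathbb{E}\bigl[Q_r(\Tilde{\vv}^{k+1},\vw^k)-Q_r(\vv^{k+1},\vw^k)\bigr]\geq-\delta$ (inequality \eqref{eq:divergence:NASGD-12}); you instead exploit the closed form $\Tilde{\vv}^{k+1}=\vw^k-\tfrac{1}{r}\nabla f(\vw^k)$ (the paper's own \eqref{eq:divergence:NASGD-20}) to turn the quadratic difference into an exact identity and handle the $f$-difference with the descent lemma (Lemma~\ref{lemma:divergence-NASGD:3}), so your argument for this step uses only $L$-smoothness and never convexity or the rule inequality in that direction --- only its consequence $\mathbb{E}(\|\vu\|_2^2\mid\chi^k)\leq 2\delta/r$ from Lemma~\ref{lemma:divergence-NASGD:4}. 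From the cross term onward the two proofs coincide: the same Young/Peter--Paul split with weight $L^2/(r-L)$ (the paper's \eqref{eq:divergnce:NASGD-14}), chosen precisely so the coefficient of $\mathbb{E}\|\vw^k-\Tilde{\vv}^{k+1}\|_2^2$ is exactly $\tfrac{r-L}{2}$ --- which you correctly preserve, and which is the coefficient that must be exact for the cancellation in Lemma~\ref{lemma:divergence-NASGD:6}. The one discrepancy you flag is real: your route carries the extra $-\tfrac{r+L}{2}\|\vu\|_2^2$ from the square-completion and the descent lemma, which under Lemma~\ref{lemma:divergence-NASGD:4} costs an additional $-\tfrac{r+L}{r}\delta\geq-2\delta$, so you prove the bound with constant $\tau+\tfrac{r+L}{r}$ rather than $\tau$. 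This is harmless --- the constant in front of $\delta$ only enters Theorem~\ref{Theorem:divergence-NASGD-Main} through the $O(k^3)$ accumulation $\bigl(\tfrac{2\tau\delta}{r}+R^2\bigr)\sum_i t_i^2$, and indeed the paper's own derivation, tracked carefully through \eqref{eq:divergence:NASGD-15}, also picks up an extra additive $\delta$ beyond $\tau\delta$ --- so your version of the lemma serves the downstream argument equally well. Your side remark that unbiasedness of the stochastic gradient would make $\mathbb{E}(\vu\mid\chi^k)=\vzero$ and kill the cross term entirely is a valid strengthening, but it imports an assumption beyond the abstract $\delta$-rule the appendix works with, so it is better left as the optional refinement you present it as.
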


\begin{proof}
The convexity of $f$ gives us
\begin{equation}
\label{eq:divergence:NASGD-11}
0\leq \langle \nabla f(\vv^{k+1}), \vv^{k+1} - \Tilde{\vv}^{k+1} \rangle + f(\Tilde{\vv}^{k+1}) - f(\vv^{k+1}),
\end{equation}

From the definition of the stochastic gradient rule $\mathcal{R}_s$, we have
\begin{eqnarray}
\label{eq:divergence:NASGD-12}
-\delta &\leq& \mathbb{E}\left(Q_r(\Tilde{\vv}^{k+1}, \vw^k ) - Q_r(\vv^{k+1}, \vw^k) \right)\\ \nonumber
&=& \mathbb{E}\left[\langle \Tilde{\vv}^{k+1}-\vw^k, \nabla f(\vw^k) \rangle + \frac{r}{2}\|\Tilde{\vv}^{k+1}-\vw^k \|_2^2 \right] - \\ \nonumber
&&\mathbb{E}\left[ \langle \vv^{k+1}-\vw^k, \nabla f(\vw^k) \rangle + \frac{r}{2}\|\vv^{k+1} - \vw^k \|_2^2 \right].
\end{eqnarray}
With \eqref{eq:divergence:NASGD-11} and \eqref{eq:divergence:NASGD-12}, we have
\begin{eqnarray}
\label{eq:divergence:NASGD-13}
-\delta &\leq& \left(f(\vv^{k+1})+\frac{r}{2}\|\vv^{k+1}-\vw^k\|_2^2 \right) - \left(f(\Tilde{\vv}^{k+1})+\frac{r}{2}\|\Tilde{\vv}^{k+1}-\vw^k\|_2^2 \right)  + \\ \nonumber
&&\mathbb{E}\langle \nabla f(\vw^k)-\nabla f(\Tilde{\vv}^{k+1}), \vv^{k+1}-\Tilde{\vv}^{k+1} \rangle.
\end{eqnarray}

With the Schwarz inequality $\langle \va, \vb\rangle \leq \frac{\|\va\|_2^2}{2\mu} + \frac{\mu}{2}\|\vb\|_2^2$ with $\mu = \frac{L^2}{r-L}$, $a=\nabla f(\vv^{k+1}) - \nabla f(\Tilde{\vv}^{k+1})$ and $b=\vw^k-\Tilde{\vv}^{k+1}$,
\begin{eqnarray}
\label{eq:divergnce:NASGD-14}
&&\langle \nabla f(\vw^k) - \nabla f(\Tilde{\vv}^{k+1}), \vv^{k+1}-\Tilde{\vv}^{k+1}\rangle \\ \nonumber
&\leq& \frac{(r-L)}{2L^2}\|\nabla f(\vw^k)-\nabla f(\Tilde{\vv}^{k+1}) \|_2^2 + \frac{L^2}{2(r-L)}\|\vv^{k+1} - \Tilde{\vv}^{k+1}\|_2^2 \\ \nonumber
&\leq& \frac{(r-L)}{2}\|\vw^k -\Tilde{\vv}^{k+1}\|_2^2 + \frac{L^2}{2(r-L)} \|\vv^{k+1}-\Tilde{\vv}^{k+1}\|_2^2.
\end{eqnarray}
Combining \eqref{eq:divergence:NASGD-13} and \eqref{eq:divergnce:NASGD-14}, we have
\begin{eqnarray}
\label{eq:divergence:NASGD-15}
-\delta &\leq& \mathbb{E}\left(f(\vv^{k+1}) + \frac{r}{2}\|\vv^{k+1}-\vw^k\|_2^2 \right) - \mathbb{E}\left(f(\Tilde{\vv}^{k+1}) + \frac{r}{2}\|\Tilde{\vv}^{k+1}-\vw^k\|_2^2\right) \\ \nonumber
&+&  \frac{L^2}{2(r-L)}\mathbb{E}\|\vv^{k+1}-\Tilde{\vv}^{k+1}\|_2^2 + \frac{r-L}{2}\mathbb{E}\|\vw^k - \Tilde{\vv}^{k+1}\|_2^2.
\end{eqnarray}
By rearrangement of the above inequality \eqref{eq:divergence:NASGD-15} and using Lemma~\ref{lemma:divergence-NASGD:4}, we obtain the result.
\end{proof}

\begin{lemma}
\label{lemma:divergence-NASGD:6}
If the constants satisfy $r>L$, then we have the following bounds
\begin{equation}
\label{eq:divergence:NASGD-16}
\mathbb{E}\left(f(\vv^k)-f(\vv^{k+1})\right) \geq \frac{r}{2}\mathbb{E}\|\vw^k-\vv^{k+1}\|_2^2 + r\mathbb{E}\langle \vw^k -\vv^k, \Tilde{\vv}^{k+1}-\vw^k \rangle - \tau\delta,
\end{equation}

\begin{equation}
\label{eq:divergence:NASGD-17}
\mathbb{E}\left(f(\vv^*)-f(\vv^{k+1})\right) \geq \frac{r}{2}\mathbb{E}\|\vw^k-\vv^{k+1}\|_2^2 + r\mathbb{E}\langle \vw^k -\vv^*, \Tilde{\vv}^{k+1}-\vw^k \rangle - \tau\delta,
\end{equation}
where $\tau := \frac{L^2}{r(r-L)}$ and $\vv^*$ is the minimum .
\end{lemma}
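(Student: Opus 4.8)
The plan is to prove both bounds \eqref{eq:divergence:NASGD-16} and \eqref{eq:divergence:NASGD-17} in one stroke, since they are merely the choices $\vy=\vv^k$ and $\vy=\vv^*$ of a single \emph{exact-step} inequality that I then transfer to the inexact iterate $\vv^{k+1}$ via Lemma~\ref{lemma:divergence-NASGD:5}. The starting observation is that the auxiliary point $\Tilde{\vv}^{k+1}$ of \eqref{eq:divergence-NASGD:7} is the exact gradient step $\Tilde{\vv}^{k+1}=\vw^k-\tfrac1r\nabla f(\vw^k)$, so that $\nabla f(\vw^k)=r(\vw^k-\Tilde{\vv}^{k+1})$; every inner product below is obtained by this substitution.

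First I would establish, for an arbitrary anchor $\vy$, the deterministic estimate
\[
f(\Tilde{\vv}^{k+1})+\tfrac r2\|\Tilde{\vv}^{k+1}-\vw^k\|_2^2 \le f(\vy)+r\langle \vw^k-\Tilde{\vv}^{k+1},\,\vw^k-\vy\rangle-\tfrac{r-L}{2}\|\Tilde{\vv}^{k+1}-\vw^k\|_2^2.
\]
This follows from two ingredients. Applying Lemma~\ref{lemma:divergence-NASGD:3} with base point $\vw^k$ and evaluation point $\Tilde{\vv}^{k+1}$, then inserting $\nabla f(\vw^k)=r(\vw^k-\Tilde{\vv}^{k+1})$, gives $f(\Tilde{\vv}^{k+1})\le f(\vw^k)-(r-\tfrac L2)\|\Tilde{\vv}^{k+1}-\vw^k\|_2^2$. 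Convexity of $f$ at $\vw^k$ gives $f(\vw^k)\le f(\vy)+r\langle \vw^k-\Tilde{\vv}^{k+1},\vw^k-\vy\rangle$. Summing these and moving $\tfrac r2\|\Tilde{\vv}^{k+1}-\vw^k\|_2^2$ to the left produces the displayed inequality.

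The crucial point I would emphasize is that the smoothness step must be kept \emph{sharp}: retaining the true constant $\tfrac L2$, rather than the crude bound $\tfrac r2$ that $r\ge L$ would permit, leaves precisely the residual $-\tfrac{r-L}{2}\|\Tilde{\vv}^{k+1}-\vw^k\|_2^2$. Taking expectations and adding this to the rearranged form of Lemma~\ref{lemma:divergence-NASGD:5},
\[
\mathbb{E}\!\left(f(\vv^{k+1})+\tfrac r2\|\vv^{k+1}-\vw^k\|_2^2\right)\le \mathbb{E}\!\left(f(\Tilde{\vv}^{k+1})+\tfrac r2\|\Tilde{\vv}^{k+1}-\vw^k\|_2^2\right)+\tau\delta+\tfrac{r-L}{2}\mathbb{E}\|\vw^k-\Tilde{\vv}^{k+1}\|_2^2,
\]
the two $\tfrac{r-L}{2}\|\cdot\|_2^2$ terms cancel identically, leaving only the single clean error term $\tau\delta$. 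This exact cancellation is the whole reason the final bound carries $\tau\delta$ and not a larger penalty; securing it is the only genuine obstacle, while the rest is routine convexity/smoothness bookkeeping.

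Finally I would specialize: $\vy=\vv^k$ yields \eqref{eq:divergence:NASGD-16} and $\vy=\vv^*$ yields \eqref{eq:divergence:NASGD-17}. In each case I rewrite $\langle \vw^k-\Tilde{\vv}^{k+1},\vw^k-\vy\rangle=-\langle \vw^k-\vy,\,\Tilde{\vv}^{k+1}-\vw^k\rangle$ to match the sign convention of the statement, and rearrange into the ``$\mathbb{E}(f(\vy)-f(\vv^{k+1}))\ge\cdots$'' form, with $\tfrac r2\mathbb{E}\|\vw^k-\vv^{k+1}\|_2^2$ emerging from the quadratic term carried along the left-hand side.
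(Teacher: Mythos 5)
Your proof is correct and follows essentially the same route as the paper's: smoothness of $f$ at $\Tilde{\vv}^{k+1}$ around $\vw^k$, convexity at $\vw^k$, the closed form $\Tilde{\vv}^{k+1}=\vw^k-\tfrac1r\nabla f(\vw^k)$, and then Lemma~\ref{lemma:divergence-NASGD:5} to pass from $\Tilde{\vv}^{k+1}$ to $\vv^{k+1}$ with exact cancellation of the $\tfrac{r-L}{2}\|\Tilde{\vv}^{k+1}-\vw^k\|_2^2$ terms. The only difference is presentational: you carry a generic anchor $\vy$ and specialize to $\vv^k$ and $\vv^*$ at the end, whereas the paper runs the two cases separately.
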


\begin{proof}
With Lemma~\ref{lemma:divergence-NASGD:3}, we have
\begin{equation}
\label{eq:divergence:NASGD-18}
-f(\Tilde{\vv}^{k+1}) \geq -f(\vw^k) - \langle \Tilde{\vv}^{k+1}-\vw^k, \nabla f(\vw^k) \rangle - \frac{L}{2}\|\Tilde{\vv}^{k+1}-\vw^k \|_2^2.
\end{equation}
Using the convexity of $f$, we have
$$
f(\vv^k) - f(\vw^k) \geq \langle \vv^k-\vw^k, \nabla f(\vw^k)\rangle,
$$
i.e.,
\begin{equation}
\label{eq:divergence:NASGD-19}
f(\vv^k) \geq f(\vw^k) + \langle \vv^k-\vw^k, \nabla f(\vw^k)\rangle.    
\end{equation}

According to the definition of $\Tilde{\vv}^{k+1}$ in \eqref{eq:divergence-NASGD:2}, i.e.,
$$
\Tilde{\vv}^{k+1} = \min_\vv Q_r(\vv, \vw^k) = \min_\vv \langle \vv-\vw^k, \nabla f(\vw^k)\rangle + \frac{r}{2}\|\vv-\vw^k\|_2^2,
$$
and the optimization condition gives
\begin{equation}
\label{eq:divergence:NASGD-20}
\Tilde{\vv}^{k+1} = \vw^k - \frac{1}{r}\nabla f(\vw^k).    
\end{equation}

Substituting \eqref{eq:divergence:NASGD-20} into \eqref{eq:divergence:NASGD-19}, we obtain
\begin{equation}
\label{eq:divergence:NASGD-21}
f(\vv^k) \geq f(\vw^k) + \langle \vv^k-\vw^k, r(\vw^k-\Tilde{\vv}^{k+1}) \rangle.
\end{equation}

Direct summation of \eqref{eq:divergence:NASGD-18} and \eqref{eq:divergence:NASGD-21} gives
\begin{equation}
\label{eq:divergence:NASGD-22}
f(\vv^k) - f(\Tilde{\vv}^{k+1})\geq \left(r-\frac{L}{2}\right)\|\Tilde{\vv}^{k+1}-\vw^k\|_2^2 + r\langle \vw^k-\vv^k, \Tilde{\vv}^{k+1}-\vw^k \rangle.
\end{equation}
Summing \eqref{eq:divergence:NASGD-22} and \eqref{eq:divergence:NASGD-10}, we obtain the inequality \eqref{eq:divergence:NASGD-16}
\begin{equation}
\label{eq:divergence:NASGD-23}
\mathbb{E}\left[f(\vv^k) - f(\vv^{k+1})\right] \geq 
\frac{r}{2}\mathbb{E}\|\vw^k-\vv^{k+1}\|_2^2 + r\mathbb{E}\langle \vw^k -\vv^k, \Tilde{\vv}^{k+1}-\vw^k \rangle - \tau\delta.
\end{equation}

On the other hand, with the convexity of $f$, we have
\begin{equation}
\label{eq:divergence:NASGD-24}
f(\vv^*) - f(\vw^k) \geq \langle \vv^*-\vw^k, \nabla f(\vw^k) \rangle = \langle \vv^*-\vw^k, r(\vw^k-\Tilde{\vv}^{k+1})\rangle.
\end{equation}
The summation of \eqref{eq:divergence:NASGD-18} and \eqref{eq:divergence:NASGD-24} resulting in
\begin{equation}
\label{eq:divergence:NASGD-25}
f(\vv^*) - f(\Tilde{\vv}^{k+1}) \geq \left(r-\frac{L}{2}\right)\|\vw^k - \Tilde{\vv}^{k+1}\|_2^2 + r\langle \vw^k-\vv^*, \Tilde{\vv}^{k+1}-\vw^k \rangle.
\end{equation}

Summing \eqref{eq:divergence:NASGD-25} and \eqref{eq:divergence:NASGD-10}, we obtain
\begin{equation}
\label{eq:divergence:NASGD-26}
\mathbb{E}\left(f(\vv^*)-f(\vv^{k+1})\right) \geq \frac{r}{2}\mathbb{E}\|\vw^k-\vv^{k+1}\|_2^2 + r\mathbb{E}\langle \vw^k -\vv^*, \Tilde{\vv}^{k+1}-\vw^k \rangle - \tau\delta,
\end{equation}
which is the same as \eqref{eq:divergence:NASGD-17}.
\end{proof}

\begin{comment}
In the following, we consider a different version of NASGD which is given in the following
\begin{eqnarray}
\label{eq:divergence-NASGD:27-1}
\begin{aligned}
\vv^{k+1} &\approx& \min_\vv Q_r(\vv, \vw^k) \ \mbox{with rule}\ \mathcal{R}_s,\\
\vw^{k+1} &=& \vv^{k+1} + \frac{t_k-1}{t_{k+1}}(\vv^{k+1} - \vv^{k})
\end{aligned}
\end{eqnarray}
where
$$
t_{k+1} = \frac{1+\sqrt{1+4t_k^2}}{2},
$$
with $t_0=1$.
\end{comment}

\begin{theorem}[Uncontrolled Bound of NASGD (Theorem~1 restate)]
Let the constant satisfies $r<L$ and the sequence $\{\vv^k\}_{k\geq0}$ be generated by NASGD, then we have
\begin{equation}
\label{eq:divergence-NASGD:27}
\mathbb{E}[f(\vv^k) -\min_\vv f(\vv)] = O(k).
\end{equation}
\end{theorem}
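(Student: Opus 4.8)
The plan is to run the classical estimate-sequence / Lyapunov argument for NAG (the same bookkeeping that gives FISTA its $O(1/k^2)$ rate), but to keep track of exactly where the inexactness constant $\delta$ enters and then let it accumulate. Throughout I would write $b_k := \mathbb{E}[f(\vv^k) - f(\vv^*)]$ for the expected gap and recall from \eqref{eq:divergence:NASGD-20} that the exact step satisfies $\tilde{\vv}^{k+1} - \vw^k = -\frac{1}{r}\nabla f(\vw^k)$. First I would take the two one-step bounds \eqref{eq:divergence:NASGD-16} and \eqref{eq:divergence:NASGD-17} from Lemma~6, multiply \eqref{eq:divergence:NASGD-16} by $(t_{k+1}-1)\geq 0$ and add \eqref{eq:divergence:NASGD-17}. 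The left-hand side collapses to $(t_{k+1}-1)b_k - t_{k+1}b_{k+1}$ (the $f(\vv^*)$ terms cancel), and the two inner products merge into a single term $r\,\mathbb{E}\langle \va_k, \tilde{\vv}^{k+1}-\vw^k\rangle$ with $\va_k := t_{k+1}\vw^k - (t_{k+1}-1)\vv^k - \vv^*$.

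Second, I would multiply through by $t_{k+1}$ and use the defining identity $t_{k+1}^2 - t_{k+1} = t_k^2$ to turn the left-hand side into the telescoping quantity $t_k^2 b_k - t_{k+1}^2 b_{k+1}$. On the right-hand side I would complete the square: the terms $\frac{r t_{k+1}^2}{2}\mathbb{E}\|\vw^k-\vv^{k+1}\|_2^2 + r t_{k+1}\mathbb{E}\langle \va_k,\tilde{\vv}^{k+1}-\vw^k\rangle$ are precisely those that, in the noise-free case, assemble into $\frac{r}{2}(\mathbb{E}\|\vu_{k+1}\|_2^2 - \mathbb{E}\|\vu_k\|_2^2)$ for the auxiliary sequence $\vu_k := t_k\vv^k - (t_k-1)\vv^{k-1} - \vv^*$. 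Defining the energy $E_k := t_k^2 b_k + \frac{r}{2}\mathbb{E}\|\vu_k\|_2^2$, the whole recursion should reduce to a one-step inequality of the form $E_{k+1} \leq E_k + C\,t_{k+1}^2\,\delta$ for an absolute constant $C$.

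Third, I would simply iterate this inequality. Since $t_k = \Theta(k)$, we have $\sum_{j\leq k} t_j^2 = \Theta(k^3)$, so $E_k \leq E_0 + C\delta\sum_{j\leq k}t_j^2 = O(k^3)$. Because $t_k^2 b_k \leq E_k$ and $t_k^2 = \Theta(k^2)$, this yields $b_k = \mathbb{E}[f(\vv^k)-f(\vv^*)] \leq E_k/t_k^2 = O(k)$, which is the claim. (Here I read the hypothesis as $r>L$, i.e.\ $s = 1/r \leq 1/L$, matching the step-size assumption and the preceding lemmas; the constant $\tau$ from Lemma~5 is absorbed into $C$.)

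The hard part will be the completion of the square in the second step, because the first-order term carries the \emph{exact} point $\tilde{\vv}^{k+1}$ while the quadratic term and the vector $\vu_{k+1}$ carry the \emph{inexact} iterate $\vv^{k+1}$. Writing $\bm{\xi}^{k+1} := \vv^{k+1} - \tilde{\vv}^{k+1}$ and expanding $\|\vw^k-\vv^{k+1}\|_2^2$ produces cross terms of the form $t_{k+1}^2\,\mathbb{E}\langle \vw^k-\tilde{\vv}^{k+1}, \bm{\xi}^{k+1}\rangle$ and $t_{k+1}\,\mathbb{E}\langle\vu_{k+1},\bm{\xi}^{k+1}\rangle$, which couple the small error $\bm{\xi}^{k+1}$ (of size $\sqrt{\delta}$ by Lemma~\ref{lemma:divergence-NASGD:4}) with vectors that may grow with $k$; a naive Cauchy--Schwarz bound would inflate the accumulation to $O(k^4)$ and spoil the conclusion. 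The fix is a Young-inequality split that returns the $\|\vw^k-\tilde{\vv}^{k+1}\|_2^2$ and $\|\vu_{k+1}\|_2^2$ parts back into the positive quadratic terms already present on the right, leaving only residuals of order $t_{k+1}^2\,\mathbb{E}\|\bm{\xi}^{k+1}\|_2^2 = O(t_{k+1}^2\delta)$ via the bound $\mathbb{E}\|\bm{\xi}^{k+1}\|_2^2 \leq 2\delta/r$. Keeping every residual at the scale $t_{k+1}^2\delta$, so that it sums to $O(k^3)$ rather than more, is the delicate accounting that makes the final bound come out to $O(k)$.
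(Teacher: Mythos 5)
Your proposal is correct and follows essentially the same route as the paper's own proof in Appendix~A: combining the two one-step bounds of Lemma~6 with weights $(t_k-1)$ and $1$, multiplying by $t_k$ and using $t_{k-1}^2=t_k^2-t_k$ to telescope the energy $t_k^2\,\mathbb{E}[f(\vv^k)-f(\vv^*)]+\mathbb{E}\|t_{k-1}\vv^k-(t_{k-1}-1)\vv^{k-1}-\vv^*\|_2^2$, handling the $\vv^{k+1}-\Tilde{\vv}^{k+1}$ cross term by a Young split against Lemma~4's bound $2\delta/r$, and summing the $O(t_k^2\delta)$ residuals to $O(k^3)$ before dividing by $t_k^2=\Theta(k^2)$. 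Your reading of the hypothesis as $r>L$ (step size $\le 1/L$) matches the preceding lemmas, and your flagged concern about the cross term is exactly the point the paper resolves the same way.
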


\begin{proof}
We denote 
$$
F^k := \mathbb{E}(f(\vv^k)-f(\vv^*)).
$$
By $\eqref{eq:divergence:NASGD-16} \times (t_k-1) + \eqref{eq:divergence:NASGD-17}$, we have
\begin{eqnarray}
\label{eq:divergence-NASGD:28}
\frac{2[(t_k-1)F^k - t_kF^{k+1}]}{r} &\geq& t_k\mathbb{E}\|\vv^{k+1}-\vw^k\|_2^2 \\ \nonumber
&+& 2\mathbb{E}\langle \Tilde{\vv}^{k+1}-\vw^k, t_k\vw^k-(t_k-1)\vv^k-\vv^* \rangle - \frac{2\tau t_k\delta}{r}.
\end{eqnarray}
With $t_{k-1}^2 = t_k^2-t_k$, $\eqref{eq:divergence-NASGD:28} \times t_k$ yields
\begin{eqnarray}
\label{eq:divergence-NASGD:29}
\frac{2[t_{k-1}^2F^k-t_k^2F^{k+1}]}{r} &\geq& \mathbb{E}\|t_k\vv^{k+1}-t_k\vw^k\|_2^2 \\ \nonumber
&+& 2t_k\mathbb{E}\langle \Tilde{\vv}^{k+1}-\vw^k, t_k\vw^k-(t_k-1)\vv^k-\vv^*\rangle - \frac{2\tau t_k^2\delta}{r}
\end{eqnarray}

Substituting $\va=t_k\vv^{k+1} - (t_k-1)\vv^k-\vv^*$ and $\vb=t_k\vw^k-(t_k-1)\vv^k-\vv^*$ into identity
\begin{equation}
\label{eq:divergence-NASGD:30}
\|\va-\vb\|_2^2 + 2\langle \va-\vb, \vb\rangle = \|\va\|_2^2 - \|\vb\|_2^2.
\end{equation}
It follows that
\begin{eqnarray}
\label{eq:divergence-NASGD-31}
&&\mathbb{E}\|t_k\vv^{k+1}-t_k\vw^k\|_2^2 + 2t_k\mathbb{E}\langle \Tilde{\vv}^{k+1}-\vw^k, t_k\vw^k-(t_k-1)\vv^k-\vv^* \rangle\\ \nonumber
&=& \mathbb{E}\|t_k\vv^{k+1}-t_k\vw^k\|_2^2 + 2t_k\mathbb{E}\langle \vv^{k+1}-\vw^k, t_k\vw^k-(t_k-1)\vv^k-\vv^* \rangle \\ \nonumber
&&+ 2t_k\mathbb{E}\langle \Tilde{\vv}^{k+1}-\vv^{k+1}, t_k\vw^k-(t_k-1)\vv^k-\vv^* \rangle\\ \nonumber
&\underbrace{=}{\eqref{eq:divergence-NASGD:30}}& \mathbb{E}\|t_k\vv^{k+1}-(t_k-1)\vv^k-\vv^*\|_2^2 - \|t_k\vw^k-(t_k-1)\vv^k-\vv^*\|_2^2 \\ \nonumber
&&+ 2t_k\mathbb{E}\langle \Tilde{\vv}^{k+1}-\vv^{k+1}, t_k\vw^k-(t_k-1)\vv^k-\vv^* \rangle\\ \nonumber
&=& \mathbb{E}\|t_k\vv^{k+1}-(t_k-1)\vv^k-\vv^*\|_2^2 - \mathbb{E}\|t_{k-1}\vv^k-(t_{k-1}-1)\vv^{k-1}-\vv^*\|_2^2  \\ \nonumber
&+&2t_k \mathbb{E}\langle \Tilde{\vv}^{k+1}-\vv^{k+1}, t_{k-1}\vv^k-(t_{k-1}-1)\vv^{k-1}-\vv^*\rangle. \\ \nonumber
\end{eqnarray}

In the third identity, we used the fact $t_k\vw^k = t_k\vv^k+(t_{k-1}-1)(\vv^k-\vv^{k-1})$. If we denote $u^k = \mathbb{E}\|t_{k-1}\vv^k-(t_{k-1}-1)\vv^{k-1}-\vv^* \|_2^2$,\ \   \eqref{eq:divergence-NASGD:29} can be rewritten as
\begin{eqnarray}
\label{eq:divergence-NASGD:32}
\frac{2t_k^2F^{k+1}}{r} + u^{k+1} &\leq& \frac{2t_{k-1}^2F^k}{r} + u^k + \frac{2\tau t_k^2\delta}{r}\\ \nonumber
&+& 2t_k\mathbb{E}\langle \vv^{k+1}-\Tilde{\vv}^{k+1}, t_{k-1}\vv^k - (t_{k-1}-1)\vv^{k-1}-\vv^* \rangle\\ \nonumber
&\leq& \frac{2t_k^2F^k}{r} + u^k + \frac{2\tau t_k^2\delta}{r} + t_{k-1}^2R^2,
\end{eqnarray}
where we used
\begin{eqnarray*}
&&2t_k\mathbb{E}\langle \vv^{k+1}-\Tilde{\vv}^{k+1}, t_{k-1}\vv^k-(t_{k-1}-1)\vv^{k-1}-\vv^*\rangle\\
&\leq& t_k^2\mathbb{E}\|\vv^{k+1}-\Tilde{\vv}^{k+1}\|_2^2 + \mathbb{E}\|t_{k-1}\vv^k-(t_{k-1}\vv^k- (t_{k-1}-1)\vv^{k-1}-\vv^* ) \|_2^2\\
&=& 2t_k^2\delta/r + t_{k-1}^2R^2.
\end{eqnarray*}
Denoting
$$
\xi_k := \frac{2t_{k-1}^2F^k}{r} + u^k,
$$
then, we have
\begin{equation}
\label{eq:divergence-NASGD:33}
\xi_{k+1} \leq \xi_0 + (\frac{2\tau \delta}{r} + R^2) \sum_{i=1}^k t_i^2 = O(k^3).
\end{equation}
With the fact, $\xi_k := \frac{2t_{k-1}^2F^k}{r} \geq \Omega(k^2)F^k$, we then proved the result.
\end{proof}

\section{NAG with $\delta$-Inexact Oracle \& Experimental Settings in Section 3.1}\label{appendix:Delta:Inexact:Gradient}
In \cite{devolder2014first}, the authors defines $\delta$-inexact gradient oracle for convex smooth optimization as follows:
\begin{definition}[$\delta$-Inexact Oracle] \label{Inexact-def} \cite{devolder2014first}
For a convex $L$-smooth function $f: \RR^d \rightarrow \RR$. For $\forall \vw\in \RR^d$ and exact first-order oracle returns a pair $(f(\vw), \nabla f(\vw)) \in \RR\times \RR^d$ so that for $\forall \vv \in \RR^d$ we have
$$
0\leq f(\vv) - \big(f(\vw) + \langle\nabla f(\vw), \vv-\vw \rangle\big) \leq \frac{L}{2}\|\vw - \vv\|_2^2.
$$
A $\delta$-inexact oracle returns a pair $\big(f^\delta(\vw), \nabla f^\delta(\vw)\big) \in \RR \times \RR^d$ so that $\forall \vv \in \RR^d$ we have
$$
0\leq f(\vv) - \big(f^\delta(\vw) + \langle\nabla f^\delta(\vw), \vv-\vw \rangle\big) \leq \frac{L}{2}\|\vw - \vv\|_2^2 + \delta.
$$
\end{definition}

We have the following convergence results of GD and NAG under a $\delta$-Inexact Oracle for convex smooth optimization.

\begin{theorem}\label{Inexact-convergence}\cite{devolder2014first}\footnote{We adopt the result from \cite{mrtz}.}
Consider 
$$
\min f(\vw),\ \ \vw \in \RR^d,
$$
where $f(\vw)$ is convex and $L$-smooth with $\vw^*$ being the minimum. Given access to $\delta$-inexact oracle, GD with step size $1/L$ returns a point $\vw^k$ after $k$ steps so that
$$
f(\vw^k) - f(\vw^*) = 
%\leq 
O\left(\frac{L}{k}\right) + \delta.
$$
On the other hand, NAG, with step size $1/L$ returns
$$
f(\vw^k) - f(\vw^*) = 
%\leq 
O\left(\frac{L}{k^2}\right) + O(k\delta).
$$
\end{theorem}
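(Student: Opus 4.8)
The plan is to derive both rates directly from the two inequalities that define the $\delta$-inexact oracle in Definition~\ref{Inexact-def}: the lower bound, which is a relaxed (in fact exact) convexity statement phrased in terms of the inexact pair $(f^\delta(\vw),\nabla f^\delta(\vw))$, and the upper bound, which is a relaxed descent/smoothness statement carrying the additive error $\delta$. The strategy is to run the classical convergence arguments for GD and for NAG while bookkeeping exactly where each $\delta$ enters and how it is weighted; the qualitative gap between the two bounds is entirely a statement about how the per-step errors aggregate.

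For the GD part, I would set $\vw^{k+1}=\vw^k-\frac{1}{L}\nabla f^\delta(\vw^k)$ and apply the upper inexact inequality with $\vv=\vw^{k+1}$, $\vw=\vw^k$. Substituting the step gives the relaxed descent lemma $f(\vw^{k+1})\le f^\delta(\vw^k)-\frac{1}{2L}\|\nabla f^\delta(\vw^k)\|_2^2+\delta$, and taking $\vv=\vw$ in the definition shows $f^\delta(\vw^k)\le f(\vw^k)$, so each step decreases $f$ up to the single additive error $\delta$. I would then invoke the lower inexact inequality at $\vv=\vw^*$ to convert this into a telescoping statement for the Lyapunov function $\frac{L}{2}\|\vw^k-\vw^*\|_2^2$, obtaining $f(\vw^{k+1})-f(\vw^*)\le \frac{L}{2}\|\vw^k-\vw^*\|_2^2-\frac{L}{2}\|\vw^{k+1}-\vw^*\|_2^2+\delta$. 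Summing over $k=0,\dots,N-1$, using monotonicity of the iterates to lower-bound the left side by $N\big(f(\vw^N)-f(\vw^*)\big)$, and dividing by $N$ yields $f(\vw^N)-f(\vw^*)\le \frac{L\|\vw^0-\vw^*\|_2^2}{2N}+\delta$. The crucial point is that each of the $N$ steps contributes exactly one $\delta$, so after dividing by $N$ the accumulated inexactness collapses to the constant $\delta$.

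For the NAG part, I would run the estimate-sequence / potential argument with the momentum weights $t_k$ (so $t_k=\Theta(k)$) and the auxiliary mirror sequence implicit in \eqref{eq:NAG}, using a potential of the form $\phi_k=t_k^2\big(f(\vw^k)-f(\vw^*)\big)+\frac{L}{2}\|\vz^k-\vw^*\|_2^2$. Combining the relaxed smoothness at the gradient step (which injects one $\delta$) with the relaxed convexity evaluated at both $\vw^k$ and $\vw^*$ inside the recursion, I expect a one-step inequality of the form $\phi_{k+1}\le\phi_k+\Theta(t_{k+1}^2)\,\delta$, i.e. the inexactness now enters weighted by the \emph{cumulative} factor $t_{k+1}^2=\Theta(k^2)$ rather than by a constant. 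Telescoping gives $t_N^2\big(f(\vw^N)-f(\vw^*)\big)\le \frac{L}{2}\|\vw^0-\vw^*\|_2^2+\delta\sum_{k=1}^N\Theta(t_k^2)$, and since $\sum_{k=1}^N t_k^2=\Theta(N^3)$ while $t_N^2=\Theta(N^2)$, dividing by $t_N^2$ produces $f(\vw^N)-f(\vw^*)\le \frac{O(L)}{N^2}+O(N\delta)$.

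The main obstacle is exactly this NAG one-step inequality: it is the accelerated momentum coupling that forces the error weight to be the full cumulative $\Theta(t_k^2)$ instead of an increment $\Theta(t_{k+1}^2-t_k^2)=\Theta(t_k)$; had it been the increment, the sum would telescope to $\Theta(t_N^2)$ and the accumulation would stay constant like GD. Getting this weighting correct requires carefully aligning the relaxed-convexity inequalities with the estimate-sequence recursion so that each $\delta$ attaches to the quadratically growing coefficient. Once that is established, the contrast $\sum_{k\le N}t_k^2/t_N^2=\Theta(N)$ versus the benign constant for GD is precisely what separates the $O(k\delta)$ accumulation of NAG from the robust additive-$\delta$ behavior of GD, which is the phenomenon the theorem is meant to expose.
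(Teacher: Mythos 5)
The paper does not actually prove this theorem: it is quoted from Devolder et al.\ \cite{devolder2014first} (via \cite{mrtz}) and used as motivation, so there is no in-paper argument to compare against line by line. The closest internal analogue is the proof of Theorem~\ref{Theorem:divergence-NASGD-Main} in Appendix~\ref{appendix:uncontrolled:NASGD}, and your NAG half matches that argument's structure exactly: the per-step error attaches to the full quadratic weight (there it appears as the $\tfrac{2\tau t_k^2\delta}{r}$ term in \eqref{eq:divergence-NASGD:32}), the sum $\sum_{i\le k} t_i^2=\Theta(k^3)$ does not telescope, and dividing by $t_k^2=\Theta(k^2)$ leaves the $O(k\delta)$ accumulation. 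Your diagnosis of \emph{why} acceleration is fragile --- cumulative weight $\Theta(t_k^2)$ versus the increment $\Theta(t_{k+1}^2-t_k^2)$ --- is the right one and is precisely what the paper's own appendix computation exhibits.

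The one genuine soft spot is in your GD half, at the step ``using monotonicity of the iterates to lower-bound the left side by $N\bigl(f(\vw^N)-f(\vw^*)\bigr)$.'' With a $\delta$-inexact oracle the descent lemma only gives $f(\vw^{k+1})\le f(\vw^k)-\tfrac{1}{2L}\|\nabla f^\delta(\vw^k)\|_2^2+\delta$, so the iterates are monotone only up to $+\delta$ per step; exact monotonicity can fail once the (inexact) gradient is small. If you patch this with $f(\vw^N)\le f(\vw^j)+(N-j)\delta$ and average, you reintroduce an $O(N\delta)$ term and lose the whole point of the GD bound. The correct repair, and the form in which Devolder et al.\ state the result, is to conclude from the telescoped sum $\sum_{k=1}^{N}\bigl(f(\vw^k)-f(\vw^*)\bigr)\le \tfrac{L}{2}\|\vw^0-\vw^*\|_2^2+N\delta$ that the \emph{best} iterate (or, by convexity, the averaged iterate) satisfies $f-f(\vw^*)\le \tfrac{L\|\vw^0-\vw^*\|_2^2}{2N}+\delta$; since the theorem only asserts that the method ``returns a point'' with the stated big-$O$ guarantee, this is consistent with the statement, but the returned point must be specified as the best or averaged one rather than the last. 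Everything else in your sketch is sound.
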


Theorem~\ref{Inexact-convergence} says that NAG may not robust to a $\delta$-inexact gradient. %We will 
In the following, we will study the numerical behavior of a variety of first-order algorithms for convex smooth optimizations with the following different inexact gradients.

{\bf Constant Variance Gaussian Noise:}
We consider the inexact oracle where the true gradient is contaminated with a Gaussian noise $\mathcal{N}(0, 0.001^2)$. We run $50$K iterations of different algorithms. For SRNAG, we restart after every $200$ iterations. Fig.~\ref{fig:Quadratic-Compare-Optimization} (b) shows the iteration vs.\ optimal gap, $f(\vx^k) - f(\vx^*)$, with $\vx^*$ being the minimum. NAG with the inexact gradient due to constant variance noise does not converge. GD performs almost the same as ARNAG asymptotically, because ARNAG restarts too often and almost degenerates into GD. GD with constant momentum outperforms the three schemes above, and SRNAG slightly outperforms GD with constant momentum.

{\bf Decaying Variance Gaussian Noise:}
Again, consider minimizing \eqref{eq:quadratic} with the same experimental setting as before except that $\nabla f(\vx)$ is now contaminated with a decaying Gaussian noise $\mathcal{N}(0, (\frac{0.1}{\floor{t/100}+1})^2)$. For SRNAG, we restart every $200$ iterations in the first $10k$ iterations, and restart every $400$ iterations in the remaining $40$K iterations. Fig.~\ref{fig:Logistic-Regression-Comparison} (c) shows the iteration vs. optimal gap by different schemes. ARNAG still performs almost the same as GD. The path of NAG is oscillatory. GD with constant momentum again outperforms the previous three schemes. Here SRNAG significantly outperforms all the other schemes.

{\bf Logisitic Regression for MNIST Classification:}
We apply the above schemes with stochastic gradient to train a logistic regression model for MNIST classification \cite{lecun-mnisthandwrittendigit-2010}. We consider five different schemes, namely, SGD, SGD $+$ (constant) momentum, NASGD, ASGD, and SRSGD.
%NAG with stochastic gradient (NASGD), adaptive restart NAG with stochastic gradient (ARSGD), and SR NAG with stochastic gradient (SRSGD). 
In ARSGD, we perform restart based on the loss value of the mini-batch training data. In SRSGD, we restart the NAG momentum after every $10$ iterations. 
We train the logistic regression model with a $\ell_2$ weight decay of $10^{-4}$ by running $20$ epochs using different schemes with batch size of $128$. The step sizes for all the schemes are set to $0.01$. Fig.~\ref{fig:Logistic-Regression-Comparison} plots the training loss vs.\ iteration. In this case, NASGD does not converge, and SGD with momentum does not speed up SGD. ARSGD's performance is on par with SGD's. Again, SRSGD gives the best performance with the smallest training loss among these five schemes.

\section{Convergence of SRSGD}\label{appendix:Convergence:SRSGD}
We prove the convergence of Nesterov accelerated SGD with scheduled restart, i.e., the convergence of SRSGD. We denote that $\theta^k := \frac{t_k-1}{t_{k+1}}$ in the Nesterov iteration and $\hat{\theta}^k$ is its use in the restart version, i.e., SRSGD. For any restart frequency $F$ (positive integer), we have $\hat{\theta}^k = \theta^{k-\floor{k/F}*F}$. In the restart version, we can see that
$$
\hat{\theta}^k \leq \theta^F =: \bar{\theta} < 1.
$$

\begin{lemma}
\label{lemma:convergence:SRSGD-7}
Let the constant satisfies $r>L$ and the sequence $\{\vv^k\}_{k\geq 0}$ be generated by the SRSGD with restart frequency $F$ (any positive integer), we have
\begin{equation}
\label{eq:convergence:SRSGD-34}
\sum_{i=1}^k \|\vv^i-\vv^{i-1}\|_2^2 \leq \frac{r^2kR^2}{(1-\bar{\theta})^2},
\end{equation}
where $\bar{\theta} := \theta^F < 1$ and 
$R:=\sup_{{\bf x}}\{\|\nabla f({\bf x})\|\}$.
\end{lemma}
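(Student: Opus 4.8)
The plan is to turn the coupled SRSGD recursion into a scalar linear recursion for the successive increments $\vv^{k}-\vv^{k-1}$ and then sum a geometric series. Writing $s:=1/r$ for the step size, the gradient step gives $\vv^{k+1}=\vw^k-s\,\mathbf{g}^k$ with $\mathbf{g}^k:=\frac1m\sum_{j=1}^m\nabla f_{i_j}(\vw^k)$, while the momentum step reads $\vw^k=\vv^k+\hat{\theta}^{k-1}(\vv^k-\vv^{k-1})$. Substituting the momentum identity into the gradient step and subtracting $\vv^k$ from both sides eliminates $\vw^k$ and yields
\[
\vv^{k+1}-\vv^k=\hat{\theta}^{k-1}(\vv^k-\vv^{k-1})-s\,\mathbf{g}^k .
\]
This is the workhorse identity: each increment is a contraction of the previous increment plus a bounded stochastic-gradient perturbation.

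First I would take $\ell_2$ norms and apply the triangle inequality. Using the bound $\hat{\theta}^{k-1}\le\bar{\theta}=\theta^F<1$ already established just before the lemma (valid at every index, and only improved at the restart steps where $\hat{\theta}^k=0$), together with $\|\mathbf{g}^k\|_2\le R$, I obtain the scalar recursion $a_k\le\bar{\theta}\,a_{k-1}+sR$ for $a_k:=\|\vv^{k+1}-\vv^k\|_2$, with base case $a_0=\|\vv^1-\vv^0\|_2=s\|\mathbf{g}^0\|_2\le sR$ (no momentum at the first step). Unrolling this recursion bounds every increment uniformly by the truncated geometric series $a_k\le sR\sum_{j=0}^{k}\bar{\theta}^{\,j}\le\frac{sR}{1-\bar{\theta}}$. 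Squaring and summing over $i=1,\dots,k$ — note $\vv^i-\vv^{i-1}$ is exactly $a_{i-1}$ — then produces the factor $k$ and gives $\sum_{i=1}^k\|\vv^i-\vv^{i-1}\|_2^2\le\frac{kR^2}{r^2(1-\bar{\theta})^2}$, which is the estimate claimed in the lemma once the step-size factor $s^2=1/r^2$ is accounted for.

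The part I expect to require the most care is justifying the two uniform inputs to the recursion. The first is the momentum bound $\hat{\theta}^{k-1}\le\bar{\theta}<1$: this rests on the monotonicity of $\theta^j=(t_j-1)/t_{j+1}$ in $j$ and on the restart rule forcing $k\bmod F\le F-1$, so one must check the bookkeeping precisely at the restart indices (where the coefficient drops to $0$ and only helps). The second, and more substantive, is the uniform bound $\|\mathbf{g}^k\|_2\le R$ on the \emph{stochastic} gradient, whereas $R$ is defined through the full gradient $\nabla f$; I would either adopt the assumption that each component gradient $\nabla f_{i_j}$ is norm-bounded by $R$, or read $R$ as a uniform bound on the mini-batch gradients. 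Granting these two facts, the remainder is a routine geometric-sum estimate, so I regard the modeling assumption on the gradient norm as the genuine crux rather than the algebra.
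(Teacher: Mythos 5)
Your argument is correct and reaches the stated $O(k)$ bound, but it takes a genuinely different route from the paper. The paper starts from the same relation between $\vv^{k+1}-\vv^k$, $\vv^k-\vv^{k-1}$, and the gradient step $\vv^{k+1}-\vw^k$, but instead of unrolling a recursion it applies the reverse triangle inequality to get $\|\vv^{k+1}-\vw^k\|_2 \ge \|\vv^{k+1}-\vv^k\|_2-\bar{\theta}\|\vv^k-\vv^{k-1}\|_2$, squares this, uses a Young-type bound to obtain $\|\vv^{k+1}-\vw^k\|_2^2\ge(1-\bar{\theta})\|\vv^{k+1}-\vv^k\|_2^2-\bar{\theta}(1-\bar{\theta})\|\vv^k-\vv^{k-1}\|_2^2$, and telescopes the sum against $\sum_{k}\|\vv^{k+1}-\vw^k\|_2^2\le kR^2/r^2$. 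Your route --- deriving the scalar recursion $a_k\le\bar{\theta}a_{k-1}+sR$, unrolling the geometric series to get the uniform per-step bound $\|\vv^{k+1}-\vv^k\|_2\le sR/(1-\bar{\theta})$, and only then summing squares --- buys a strictly stronger conclusion (a bound on every individual increment, not just on the sum) and quietly sidesteps a rigor gap in the paper's version: squaring the reverse-triangle lower bound is only legitimate when that lower bound is nonnegative, a case the paper does not address. Both proofs rest on the same two inputs you flag, namely $\hat{\theta}^k\le\bar{\theta}<1$ (stated just before the lemma) and reading $R$ as a uniform bound on the \emph{stochastic} gradient, which the paper also uses without comment. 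The only discrepancy is the power of $r$: with step size $1/r$ you correctly obtain $kR^2/\bigl(r^2(1-\bar{\theta})^2\bigr)$, whereas the lemma states $r^2kR^2/(1-\bar{\theta})^2$; the paper's own final display contains the same $r\leftrightarrow 1/r$ bookkeeping slip, so this is a constant-tracking issue in the source rather than a flaw in your argument.
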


\begin{proof}
It holds that
\begin{eqnarray}
\label{eq:convergence:SRSGD-35}
\|\vv^{k+1} - \vw^k\|_2 &=& \|\vv^{k+1}-\vv^k+ \vv^k - \vw^k\|_2 \\ \nonumber
&\geq& \|\vv^{k+1}-\vv^k\|_2 - \|\vv^k-\vw^k\|_2 \\ \nonumber
&\geq& \|\vv^{k+1}-\vv^k\|_2 - \bar{\theta}\|\vv^k-\vv^{k-1}\|_2.
\end{eqnarray}
Thus,
\begin{eqnarray}
\label{eq:convergence:SRSGD-36}
\|\vv^{k+1}-\vw^k\|_2^2 &\geq& \left(\|\vv^{k+1}-\vv^k\| - \bar{\theta}\|\vv^k-\vv^{k-1}\|\right)^2 \\ \nonumber
&=& \|\vv^{k+1}-\vv^k\|_2^2 - 2\bar{\theta}\|\vv^k-\vv^{k-1}\|_2\|\vv^k-\vv^{k-1}\|_2 + \bar{\theta}^2\|\vv^k-\vv^{k-1}\|_2^2 \\ \nonumber
&\geq& (1-\bar{\theta})\|\vv^{k+1}-\vv^k\|_2^2 - \bar{\theta}(1-\bar{\theta})\|\vv^{k+1}-\vv^k\|_2^2.
\end{eqnarray}
Summing \eqref{eq:convergence:SRSGD-36} from $k=1$ to $K$, we get
\begin{equation}
\label{eq:convergence:SRSGD-37}
(1-\bar{\theta})^2\sum_{k=1}^K \|\vv^k-\vv^{k-1}\|_2^2 \leq \sum_{k=1}^K \|\vv^{k+1}-\vw^k\| \leq r^2KR^2.
\end{equation}
\end{proof}

In the following, we denote
$$
\mathcal{A} := \{k\in Z^+| \mathbb{E}f(\vv^k) \geq \mathbb{E}f(\vv^{k-1})\}.
$$

\begin{theorem}[Convergence of SRSGD](Theorem~2 restate)
\label{Theorem:convergence:SRSGD-2}
For any $L$-smooth function $f$, let the constant satisfies $r > L$ and the sequence $\{\vv^k\}_{k\geq 0}$ be generated by the SRSGD with restart frequency $F$ (any positive integer). Assume that $\mathcal{A}$ is finite, then we have
\begin{equation}
\label{eq:convergence:SRSGD-38}
\min_{1\leq k\leq K} \{\mathbb{E}\|\nabla f(\vw^k)\|_2^2\} = O(\frac{1}{r}+\frac{1}{rK}).
\end{equation}
Therefore for $\forall \epsilon>0$, to get $\epsilon$ error bound, we just need to set $r=O(\frac{1}{\epsilon})$ and $K=O(\frac{1}{\epsilon^2})$.
\end{theorem}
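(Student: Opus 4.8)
The plan is to run a descent-lemma argument on the auxiliary gradient-step sequence $\{\vv^k\}$ rather than on $\{\vw^k\}$ directly, to sum the resulting one-step inequalities, and to telescope, treating the Nesterov momentum as a perturbation whose accumulated size is controlled by Lemma~\ref{lemma:convergence:SRSGD-7} together with the hypothesis that $\mathcal{A}$ is finite. Throughout I would use that the mini-batch gradient $\vg^k := \tfrac1m\sum_j \nabla f_{i_j}(\vw^k)$ is unbiased, $\mathbb{E}[\vg^k\mid\chi^k]=\nabla f(\vw^k)$, with bounded second moment $\mathbb{E}[\|\vg^k\|_2^2\mid\chi^k]\le G^2$ (which follows from the bounded-gradient constant $R=\sup_\vx\|\nabla f(\vx)\|_2$ of Lemma~\ref{lemma:convergence:SRSGD-7} together with bounded variance), and that $f$ is bounded below.

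First I would apply the $L$-smoothness descent inequality to the exact gradient step $\vv^{k+1}=\vw^k-s\vg^k$, take the conditional expectation given $\chi^k$ and then the full expectation, to obtain a one-step bound of the form $s\,\mathbb{E}\|\nabla f(\vw^k)\|_2^2 \le \mathbb{E}f(\vw^k)-\mathbb{E}f(\vv^{k+1}) + \tfrac{L s^2 G^2}{2}$, where the $-s\|\nabla f(\vw^k)\|_2^2$ term comes from unbiasedness and the $O(s^2)$ term from the bounded second moment with $s\le 1/L$. Summing over $k=1,\dots,K$ reduces the theorem to controlling $\sum_k\big(\mathbb{E}f(\vw^k)-\mathbb{E}f(\vv^{k+1})\big)$, which I would split as $\sum_k\big(\mathbb{E}f(\vv^k)-\mathbb{E}f(\vv^{k+1})\big) + \sum_k\big(\mathbb{E}f(\vw^k)-\mathbb{E}f(\vv^k)\big)$. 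The first sum telescopes to $\mathbb{E}f(\vv^1)-\mathbb{E}f(\vv^{K+1})$, which is $O(1)$ because $\{\mathbb{E}f(\vv^k)\}$ is eventually monotone (by finiteness of $\mathcal{A}$) and bounded below, hence convergent.

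The crux is the second sum, the momentum gap $\mathbb{E}f(\vw^k)-\mathbb{E}f(\vv^k)$. Using $\vw^k-\vv^k=\hat{\theta}^{k-1}(\vv^k-\vv^{k-1})$ and $L$-smoothness gives $f(\vw^k)-f(\vv^k)\le \hat{\theta}^{k-1}\langle\nabla f(\vv^k),\vv^k-\vv^{k-1}\rangle + \tfrac{L(\hat{\theta}^{k-1})^2}{2}\|\vv^k-\vv^{k-1}\|_2^2$. The naive estimate $\langle\nabla f(\vv^k),\vv^k-\vv^{k-1}\rangle\le R\|\vv^k-\vv^{k-1}\|_2$ only yields an $O(Ks)$ bound after summing, which is too weak, since it leaves a non-vanishing $O(1)$ term after dividing by $sK$. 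Instead I would convert the inner product into a function-value difference via smoothness, $\langle\nabla f(\vv^k),\vv^k-\vv^{k-1}\rangle\le f(\vv^k)-f(\vv^{k-1})+\tfrac{L}{2}\|\vv^k-\vv^{k-1}\|_2^2$, and take expectations; since $\hat{\theta}^{k-1}\le\bar{\theta}<1$ is deterministic, the summed contribution $\sum_k\hat{\theta}^{k-1}\big(\mathbb{E}f(\vv^k)-\mathbb{E}f(\vv^{k-1})\big)$ has positive terms only for $k\in\mathcal{A}$, which is finite, so it is $O(1)$. The remaining quadratic terms are handled by Lemma~\ref{lemma:convergence:SRSGD-7}, which bounds $\sum_k\|\vv^k-\vv^{k-1}\|_2^2$ by $O(Ks^2)$ (with $s=1/r$). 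Hence the momentum gap sums to $O(1)+O(Ks^2)$.

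Combining, $s\sum_{k=1}^K\mathbb{E}\|\nabla f(\vw^k)\|_2^2 \le O(1)+O(Ks^2)$; dividing by $sK$ and using $\min_{1\le k\le K}\mathbb{E}\|\nabla f(\vw^k)\|_2^2\le \tfrac1K\sum_k\mathbb{E}\|\nabla f(\vw^k)\|_2^2$ yields the claimed $O\!\big(s+\tfrac{1}{sK}\big)$, and setting $s=O(\epsilon)$, $K=O(1/\epsilon^2)$ gives $\epsilon$-stationarity. The main obstacle is precisely this cross/momentum-gap term: controlling it is where the finiteness of $\mathcal{A}$ is indispensable, turning an otherwise $O(Ks)$ accumulation into $O(1)+O(Ks^2)$, and it is also the step that leans most heavily on the bounded-gradient assumption through Lemma~\ref{lemma:convergence:SRSGD-7}.
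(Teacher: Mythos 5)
Your proposal is correct and follows essentially the same route as the paper's proof: a descent-lemma step on $\vv^{k+1}=\vw^k-s\,\vg^k$, the smoothness bound on the momentum gap $f(\vw^k)-f(\vv^k)$ via $\vw^k-\vv^k=\hat{\theta}(\vv^k-\vv^{k-1})$, the conversion of $\langle\nabla f(\vv^k),\vv^k-\vv^{k-1}\rangle$ into a function-value difference so that finiteness of $\mathcal{A}$ controls its sum, and Lemma~\ref{lemma:convergence:SRSGD-7} to bound the accumulated quadratic terms by $O(Ks^2)$. Your reorganization (splitting the summed decrease into a telescoping part and a momentum-gap part, rather than chaining the two per-step inequalities before summing) is cosmetic, and you correctly identify the same crux the paper relies on.
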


\begin{proof}
$L$-smoothness of $f$, i.e., Lipschitz gradient continuity, gives us
\begin{equation}
\label{eq:convergence:SRSGD-39}
f(\vv^{k+1}) \leq f(\vw^k) + \langle \nabla f(\vw^k), \vv^{k+1}-\vw^k \rangle + \frac{L}{2}\|\vv^{k+1}-\vw^k\|_2^2
\end{equation}

Taking expectation, we get
\begin{equation}
\label{eq:convergence:SRSGD-40}
\mathbb{E}f(\vv^{k+1}) \leq \mathbb{E}f(\vw^k) - r\mathbb{E}\|\nabla f(\vw^k)\|_2^2 + \frac{r^2LR^2}{2}.
\end{equation}

On the other hand, we have
\begin{equation}
\label{eq:convergence:SRSGD-41}
f(\vw^k) \leq f(\vv^k) + \hat{\theta}^k\langle \nabla f(\vv^k), \vv^k-\vv^{k-1} \rangle + \frac{L(\hat{\theta}^k)^2}{2}\|\vv^k-\vv^{k-1}\|_2^2.
\end{equation}

Then, we have
\begin{eqnarray}
\label{eq:convergence:SRSGD-42}
\mathbb{E}f(\vv^{k+1}) &\leq& \mathbb{E}f(\vv^k) + \hat{\theta}^k \mathbb{E}\langle \nabla f(\vv^k), \vv^k-\vv^{k-1}\rangle  \\ \nonumber
&+& \frac{L(\hat{\theta}^k)^2}{2}\mathbb{E}\|\vv^k-\vv^{k-1}\|_2^2 - r\mathbb{E}\|\nabla f(\vw^k)\|_2^2 + \frac{r^2LR^2}{2}.
\end{eqnarray}

We also have
\begin{equation}
\label{eq:convergence:SRSGD-43}
\hat{\theta}^k\langle \nabla f(\vv^k), \vv^k-\vv^{k-1} \rangle \leq \hat{\theta}^k\left(f(\vv^k)-f(\vv^{k-1}) + \frac{L}{2}\|\vv^k-\vv^{k-1}\|_2^2 \right).
\end{equation}

We then get that

\begin{equation}
\label{eq:convergence:SRSGD-44}
\mathbb{E}f(\vv^{k+1}) \leq \mathbb{E}f(\vv^k) + \hat{\theta}^k \left( \mathbb{E}f(\vv^k) - \mathbb{E}f(\vv^{k-1}) \right)  - r\mathbb{E}\|\nabla f(\vw^k)\|_2^2 + A_k,
\end{equation}
where
$$
A_k := \mathbb{E}\frac{L}{2}\|\vv^k-\vv^{k-1}\|_2^2 + \mathbb{E}\frac{L(\hat{\theta}^k)^2}{2}\mathbb{E}\|\vv^k-\vv^{k-1}\|_2^2 + \frac{r^2LR^2}{2}.
$$
Summing the inequality gives us
\begin{eqnarray}
\label{eq:convergence:SRSGD-45}
\mathbb{E}f(\vv^{K+1}) \leq \mathbb{E}f(\vv^0) &+& \Tilde{\theta}\sum_{k\in \mathcal{A}}\left( \mathbb{E}f(\vv^k) - \mathbb{E}f(\vv^{k-1}) \right)  \\ \nonumber
&-& r\sum_{k=1}^K\mathbb{E}\|\nabla f(\vw^k)\|_2^2 + \sum_{k=1}^K A_k.
\end{eqnarray}

It is easy to see that
$$
\sum_{k\in \mathcal{A}}\left(\mathbb{E}f(\vv^k) - \mathbb{E}f(\vv^{k-1}) \right) = \bar{R}<+\infty, 
$$
for the finiteness of $\mathcal{A}$, and
$$
\sum_{k=1}^K A_k = O(r^2K).
$$
\end{proof}

\section{Datasets and Implementation Details}
\label{sec:all-datasets-implementation}
\subsection{CIFAR}
\label{sec:datasets-implementation}
The CIFAR10 and CIFAR100 datasets \cite{krizhevsky2009learning} 
%for image classification 
consist of $50$K training images and $10$K test images from $10$ and $100$ classes, respectively. Both training and test data are color images of size $32 \times 32$. We run our CIFAR experiments on Pre-ResNet-110, 290, 470, 650, and 1001 with 5 different seeds \cite{he2016identity}. We train each model for $200$ epochs with batch size of $128$ and initial learning rate of $0.1$, which is decayed by a factor of 10 at the 80th, 120th, and 160th epoch. The weight decay rate is $5 \times 10^{-5}$ and the momentum for the SGD baseline is 0.9. Random cropping and random horizontal flipping are applied to training data. Our code is modified based on the Pytorch classification project~\cite{bearpaw@github},\footnote{Implementation available at \href{https://github.com/bearpaw/pytorch-classification}{https://github.com/bearpaw/pytorch-classification}} which was also used by Liu et al. \cite{liu2020on}. We provide the restarting frequencies for the exponential and linear scheme for CIFAR10 and CIFAR100 in Table~\ref{tab:restart-f-cifar} below. Using the same notation as in the main text, we denote $F_{i}$ as the restarting frequency at the $i$-th learning rate.

\renewcommand\arraystretch{1.3}
\setlength{\tabcolsep}{4pt}
\begin{table}[h!]
\caption{Restarting frequencies for CIFAR10 and CIFAR100 experiments} \label{tab:restart-f-cifar}
\vspace{2mm}
\centering
\fontsize{8pt}{1em}\selectfont
\begin{tabular}{c|c|c}
\hline
 & CIFAR10 & CIFAR100  \\
\hline
Linear schedule & $F_{1} = 30, F_{2} = 60, F_{3} = 90, F_{4} = 120 \,\, (r = 2)$ & $F_{1} = 50, F_{2} = 100, F_{3} = 150, F_{4} = 200 \,\,  (r = 2)$ \\
Exponential schedule & $F_{1} = 40, F_{2} = 50, F_{3} = 63, F_{4} = 78 \,\, (r = 1.25)$ & $F_{1} = 45, F_{2} = 68, F_{3} = 101, F_{4} = 152 \,\, (r = 1.50)$ \\
\hline
\end{tabular}
\end{table}

\subsection{ImageNet}
\label{sec:imagenet-details}
The ImageNet dataset contains roughly 1.28 million training color images and $50$K validation color images from 1000 classes \cite{russakovsky2015imagenet}. We run our ImageNet experiments on ResNet-50, 101, 152, and 200 with 5 different seeds. Following \cite{he2016deep, he2016identity}, we train each model for 90 epochs with a batch size of 256 and decrease the learning rate by a factor of 10 at the 30th and 60th epoch. The initial learning rate is 0.1, the momentum is 0.9, and the weight decay rate is $1 \times 10^{-5}$. Random $224 \times 224$ cropping and random horizontal flipping are applied to training data. We use the official Pytorch ResNet implementation \cite{paszke2019pytorch},\footnote{Implementation available at \href{https://github.com/pytorch/examples/tree/master/imagenet}{https://github.com/pytorch/examples/tree/master/imagenet}} and run our experiments on 8 Nvidia V100 GPUs. We report single-crop top-1 and top-5 errors of our models. In our experiments, we set $F_{1} = 40$ at the 1st learning rate,  $F_{2} = 80$ at the 2nd learning rate, and $F_{3}$ is linearly decayed from 80 to 1 at the 3rd learning rate (see Table~\ref{tab:restart-f-imagenet}).

\renewcommand\arraystretch{1.3}
\setlength{\tabcolsep}{4pt}
\begin{table}[h!]
\caption{Restarting frequencies for ImageNet experiments} \label{tab:restart-f-imagenet}
\vspace{2mm}
\centering
\fontsize{8pt}{1em}\selectfont
\begin{tabular}{c|c}
\hline
 & ImageNet  \\
\hline
Linear schedule & $F_{1} = 40, F_{2} = 80, F_{3} \text{: linearly decayed from 80 to 1 in the last 30 epochs}$ \\
\hline
\end{tabular}
\end{table}

\subsection{Training ImageNet in Fewer Number of Epochs:}
\label{sec:imagenet-few-epochs-appendix}
Table~\ref{tab:details-imagenet-short-train} contains the learning rate and restarting frequency schedule for our experiments on training ImageNet in fewer number of epochs, i.e. the reported results in Table~\ref{tab:imagenet-short-train} in the main text. Other settings are the same as in the full-training ImageNet experiments described in Section~\ref{sec:imagenet-details} above.

\renewcommand\arraystretch{1.3}
\setlength{\tabcolsep}{4pt}
\begin{table}[h!]
\caption{Learning rate and restarting frequency schedule for ImageNet short training, i.e. Table~\ref{tab:imagenet-short-train} in the main text.} \label{tab:details-imagenet-short-train}
\vspace{2mm}
\centering
\fontsize{8pt}{1em}\selectfont
\begin{tabular}{c|c}
\hline
 & ImageNet  \\
\hline
ResNet-50 & Decrease the learning rate by a factor of 10 at the 30th and 56th epoch. Train for a total of 80 epochs.  \\
 & $F_{1} = 60, F_{2} = 105, F_{3} \text{: linearly decayed from 105 to 1 in the last 24 epochs}$ \\
\hline
ResNet-101 & Decrease the learning rate by a factor of 10 at the 30th and 56th epoch. Train for a total of 80 epochs.  \\
 & $F_{1} = 40, F_{2} = 80, F_{3} \text{: linearly decayed from 80 to 1 in the last 24 epochs}$ \\
\hline
ResNet-152 & Decrease the learning rate by a factor of 10 at the 30th and 51th epoch. Train for a total of 75 epochs.  \\
 & $F_{1} = 40, F_{2} = 80, F_{3} \text{: linearly decayed from 80 to 1 in the last 24 epochs}$ \\
\hline
ResNet-200 & Decrease the learning rate by a factor of 10 at the 30th and 46th epoch. Train for a total of 60 epochs.  \\
 & $F_{1} = 40, F_{2} = 80, F_{3} \text{: linearly decayed from 80 to 1 in the last 14 epochs}$ \\
\hline
\end{tabular}
\end{table}

\paragraph{Additional Implementation Details:} Implementation details for the ablation study of error rate vs.\ reduction in epochs and the ablation study of impact of restarting frequency are provided in Section~\ref{sec:error-rate-vs-epoch-reduction-appendix} and~\ref{sec:impact-restarting-frequency-appendix} below.

%%%%%%%%%%%%%%%%%%%%%%%%%%%%%%%%%%%%%%%%%%%%%%%%%%%%%%%%%%%%%%
%%%%%%%%%%%%%%%%%%%%%%%%%%%%%%%%%%%%%%%%%%%%%%%%%%%%%%%%%%%%%%
\section{SRSGD vs. SGD and SGD $+$ NM on ImageNet Classification and Other Tasks}\label{appendix:ImageNet:Others}
\subsection{Comparing with SGD with Nesterov Momentum on ImageNet Classification}
In this section, we compare SRSGD with SGD with Nesterov constant momentum (SGD $+$ NM) in training ResNets for ImageNet classification. All hyper-parameters of SGD with constant Nesterov momentum used in our experiments are the same as those of SGD described in section~\ref{sec:imagenet-details}. We list the results in Table~\ref{tab:imagenet-acc-SRSGDvsSGDNM}. Again, SRSGD remarkably outperforms SGD $+$ NM in training ResNets for ImageNet classification, and as the network goes deeper the improvement becomes more significant. %\textcolor{red}{Add more discussion here.}
\renewcommand\arraystretch{1.3}
\setlength{\tabcolsep}{4pt}
\begin{table*}[!ht]
\caption{Single crop validation
errors (\%) on ImageNet of ResNets trained with SGD $+$ NM and SRSGD. We report the results of SRSGD with the increasing restarting frequency in the first two learning rates. In the last learning rate, the restarting frequency is linearly decreased from 70 to 1. For baseline results, we also include the reported single-crop validation errors 
\cite{he@github} (in parentheses).}\label{tab:imagenet-acc-SRSGDvsSGDNM}
%\vspace{2mm}
\centering
\fontsize{8pt}{0.8em}\selectfont
\begin{tabular}{c|c|c|c|c|c|c|c}
\hline
Network & \# Params & \multicolumn{2}{c|}{SGD $+$ NM} & \multicolumn{2}{c|}{SRSGD} & \multicolumn{2}{c}{Improvement} \\
\hline
 &  & top-1 & top-5  & top-1 & top-5  & top-1 & top-5  \\
\hline
ResNet-50 & $25.56$M & $24.27 \pm 0.07$  & $7.17 \pm 0.07$ & \pmb{$23.85 \pm 0.09$} & \pmb{$7.10 \pm 0.09$} & $0.42$ & $0.07$ \\
ResNet-101 & $44.55$M & $22.32 \pm 0.05$ & $6.18 \pm 0.05$ & \pmb{$22.06 \pm 0.10$} & \pmb{$6.09 \pm 0.07$} & $0.26$ & $0.09$ \\
ResNet-152 & $60.19$M & $21.77 \pm 0.14$ & $5.86 \pm 0.09$ & \pmb{$21.46 \pm 0.07$} & \pmb{$5.69 \pm 0.03$} & $0.31$ & $0.17$ \\
ResNet-200 & $64.67$M & $21.98 \pm 0.22$ & $5.99 \pm 0.20$ & \pmb{$20.93 \pm 0.13$} & \pmb{$5.57 \pm 0.05$} & $1.05$ & $0.42$ \\
\hline
\end{tabular}
\end{table*}

\subsection{Long Short-Term Memory (LSTM) Training for Pixel-by-Pixel MNIST}
In this task, we examine the advantage of SRSGD over SGD and SGD with Nesterov Momentum in training recurrent neural networks. In our experiments, we use an LSTM with different numbers of hidden units (128, 256, and 512) to classify samples from the well-known MNIST dataset~\cite{lecun@mnist}. We follow the implementation of~\cite{le2015simple} and feed each pixel of the image into the RNN sequentially. In addition, we choose a random permutation of $28 \times 28 = 784$ elements at the beginning of the experiment. This fixed permutation is applied to training and testing sequences. This task is known as permuted MNIST classification, which has become standard to measure the performance of RNNs and their ability to capture long term dependencies. 

{\bf \noindent Implementation and Training Details:} For the LSTM model, we initialize the forget bias to 1 and other biases to 0. All weights matrices are initialized orthogonally except for the hidden-to-hidden weight matrices, which are initialized to be identity matrices. We train each model for 350 epochs with the initial learning rate of 0.01. The learning rate was reduced by a factor of 10 at epoch 200 and 300. The momentum is set to 0.9 for SGD with standard and Nesterov constant momentum. The restart schedule for SRSGD is set to 90, 30, 90 . The restart schedule changes at epoch 200 and 300. In all experiments, we use batch size 128 and the gradients are clipped so that their L2 norm are at most 1. Our code is based on the code from the exponential RNN's Github.\footnote{Implementation available at \href{https://github.com/Lezcano/expRNN}{https://github.com/Lezcano/expRNN}} 

{\bf \noindent Results:} Our experiments corroborate the superiority of SRSGD over the two baselines. SRSGD yields much smaller test error and converges faster than SGD with standard and Nesterov constant momentum across all settings with different number of LSTM hidden units. We summarize our results in Table~\ref{tab:lstm-acc-SRSGDvsSGD} and Figure~\ref{fig:train-loss-vs-iters-lstm-pmnist}.  

\renewcommand\arraystretch{1.3}
\setlength{\tabcolsep}{4pt}
\begin{table*}[!ht]
\caption{Test
errors (\%) on Permuted MNIST of trained with SGD, SGD $+$ NM and SRSGD. The LSTM model has 128 hidden units. In all experiments, we use the initial learning rate of 0.01, which is reduced by a factor of 10 at epoch 200 and 300. All models are trained for 350 epochs. The momentum for SGD and SGD $+$ NM is set to 0.9. The restart schedule in SRSGD is set to 90, 30, and 90.}\label{tab:lstm-acc-SRSGDvsSGD}
%\vspace{2mm}
\centering
\fontsize{8pt}{0.8em}\selectfont
\begin{tabular}{c|c|c|c|c|c}
\hline
Network & No. Hidden Units & SGD & SGD $+$ NM & SRSGD & Improvement over SGD/SGD $+$ NM  \\
\hline
LSTM & 128 & $10.10 \pm 0.57$  & $9.75 \pm 0.69$ & \pmb{$8.61 \pm 0.30$} & $1.49$/$1.14$ \\
LSTM & 256 & $10.42 \pm 0.63$  & $10.09 \pm 0.61$ & \pmb{$9.03 \pm 0.23$} & $1.39$/$1.06$ \\
LSTM & 512 & $10.04 \pm 0.35$  & $9.55 \pm 1.09$ & \pmb{$8.49 \pm 1.59$} & $1.55$/$1.06$ \\
\hline
\end{tabular}
\end{table*}
\begin{figure}[ht!]
\centering
\includegraphics[width=0.7\linewidth]{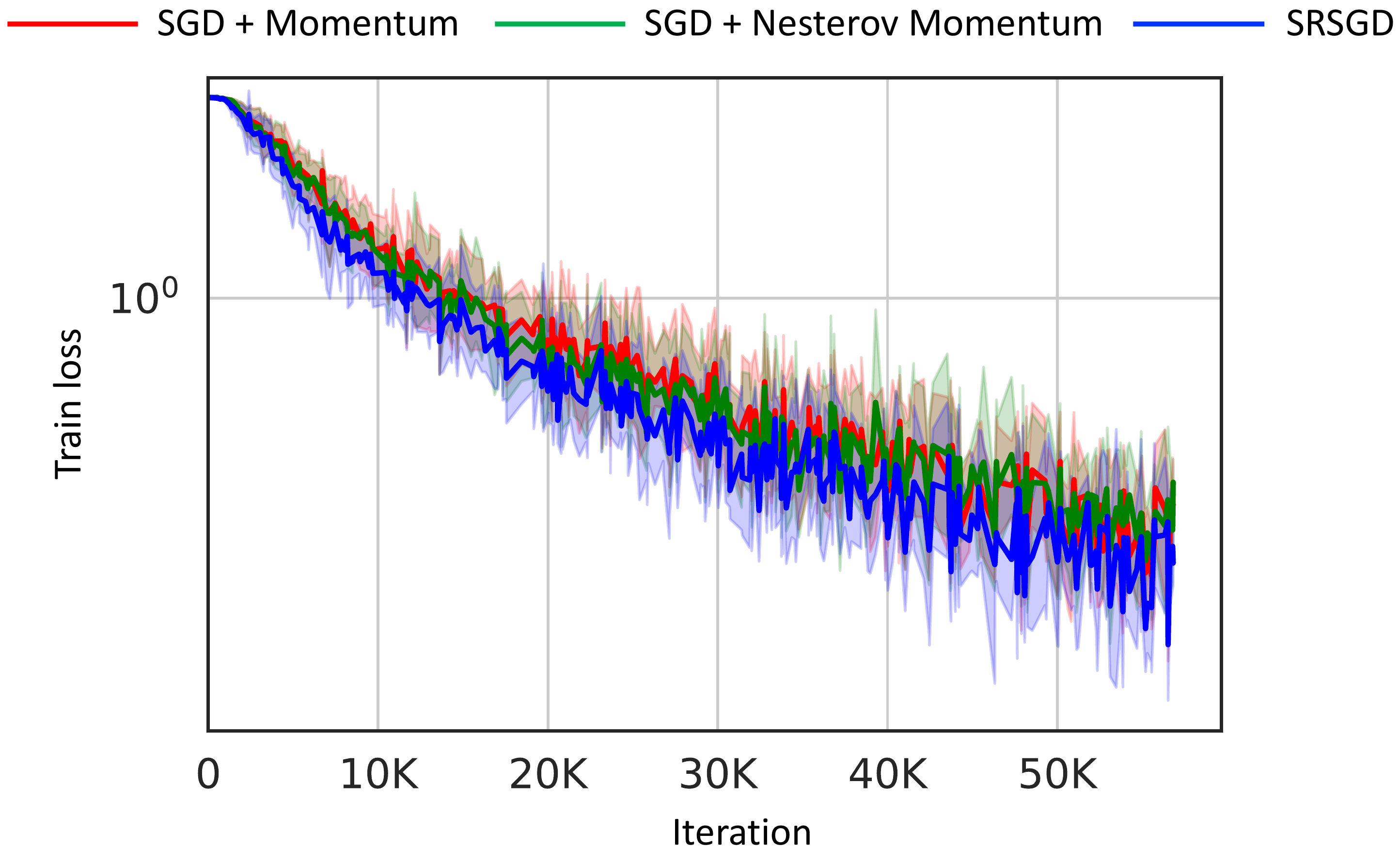}
% \vskip -0.4cm
\caption{Training loss vs. training iterations of LSTM trained with SGD (red), SGD + NM (green), and SRSGD (blue)  for PMNIST classification tasks.}
\label{fig:train-loss-vs-iters-lstm-pmnist}
\end{figure}

\subsection{Wasserstein Generative Adversarial Networks (WGAN) Training on MNIST}
We investigate the advantage of SRSGD over SGD with standard and Nesterov momentum in training deep generative models. In our experiments, we train a WGAN with gradient penalty~\cite{gulrajani2017improved} on MNIST. We evaluate our models using the discriminator's loss, i.e. the Earth Moving distance estimate, since in WGAN lower discriminator loss and better sample quality are correlated~\cite{pmlr-v70-arjovsky17a}.

{\bf \noindent Implementation and Training Details:} The detailed implementations of our generator and discriminator are given below. For the generator, we set latent\_dim to 100 and d to 32. For the discriminator, we set d to 32. We train each model for 350 epochs with the initial learning rate of 0.01. The learning rate was reduced by a factor of 10 at epoch 200 and 300. The momentum is set to 0.9 for SGD with standard and Nesterov constant momentum. The restart schedule for SRSGD is set to 60, 120, 180. The restart schedule changes at epoch 200 and 300. In all experiments, we use batch size 64. Our code is based on the code from the Pytorch WGAN-GP Github.\footnote{Implementation available at \href{https://github.com/arturml/pytorch-wgan-gp}{https://github.com/arturml/pytorch-wgan-gp}} 

\begin{lstlisting}[language=python]
import torch
import torch.nn as nn

class Generator(nn.Module):
    def __init__(self, latent_dim, d=32):
        super().__init__()
        self.net = nn.Sequential(
            nn.ConvTranspose2d(latent_dim, d * 8, 4, 1, 0),
            nn.BatchNorm2d(d * 8),
            nn.ReLU(True),

            nn.ConvTranspose2d(d * 8, d * 4, 4, 2, 1),
            nn.BatchNorm2d(d * 4),
            nn.ReLU(True),

            nn.ConvTranspose2d(d * 4, d * 2, 4, 2, 1),
            nn.BatchNorm2d(d * 2),
            nn.ReLU(True),

            nn.ConvTranspose2d(d * 2, 1, 4, 2, 1),
            nn.Tanh()
        )
    def forward(self, x):
        return self.net(x)

class Discriminator(nn.Module):
    def __init__(self, d=32):
        super().__init__()
        self.net = nn.Sequential(
            nn.Conv2d(1, d, 4, 2, 1),
            nn.InstanceNorm2d(d),
            nn.LeakyReLU(0.2),

            nn.Conv2d(d, d * 2, 4, 2, 1),
            nn.InstanceNorm2d(d * 2),
            nn.LeakyReLU(0.2),

            nn.Conv2d(d * 2, d * 4, 4, 2, 1),
            nn.InstanceNorm2d(d * 4),
            nn.LeakyReLU(0.2),

            nn.Conv2d(d * 4, 1, 4, 1, 0),
        )
    def forward(self, x):
        outputs = self.net(x)
        return outputs.squeeze()
\end{lstlisting}

{\bf \noindent Results:} Our SRSGD is still better than both the baselines. SRSGD achieves smaller discriminator loss, i.e. Earth Moving distance estimate, and converges faster than SGD with standard and Nesterov constant momentum. We summarize our results in Table~\ref{tab:gan-dloss-SRSGDvsSGD} and Figure~\ref{fig:em-estimate-vs-epoch-gan-mnist}.
We also demonstrate the digits generated by the trained WGAN in Figure~\ref{fig:mnist-gan-samples}. By visually evaluation, we observe that samples generated by the WGAN trained with SRSGD look slightly better than those generated by the WGAN trained with SGD with standard and Nesterov constant momentum.
\renewcommand\arraystretch{1.3}
\setlength{\tabcolsep}{4pt}
\begin{table*}[!ht]
\caption{Discriminator loss (i.e. Earth Moving distance estimate) of the WGAN with gradient penalty trained on MNIST with SGD, SGD $+$ NM and SRSGD. In all experiments, we use the initial learning rate of 0.01, which is reduced by a factor of 10 at epoch 200 and 300. All models are trained for 350 epochs. The momentum for SGD and SGD $+$ NM is set to 0.9. The restart schedule in SRSGD is set to 60, 120, and 180.}\label{tab:gan-dloss-SRSGDvsSGD}
%\vspace{2mm}
\centering
\fontsize{8pt}{0.8em}\selectfont
\begin{tabular}{c|c|c|c|c}
\hline
Task & SGD & SGD $+$ NM & SRSGD & Improvement over SGD/SGD $+$ NM  \\
\hline
MNIST & $0.71 \pm 0.10$  & $0.58 \pm 0.03$ & \pmb{$0.44 \pm 0.06$} & $0.27$/$0.14$ \\
\hline
\end{tabular}
\end{table*}

\begin{figure}[h!]
\centering
\includegraphics[width=0.8\linewidth]{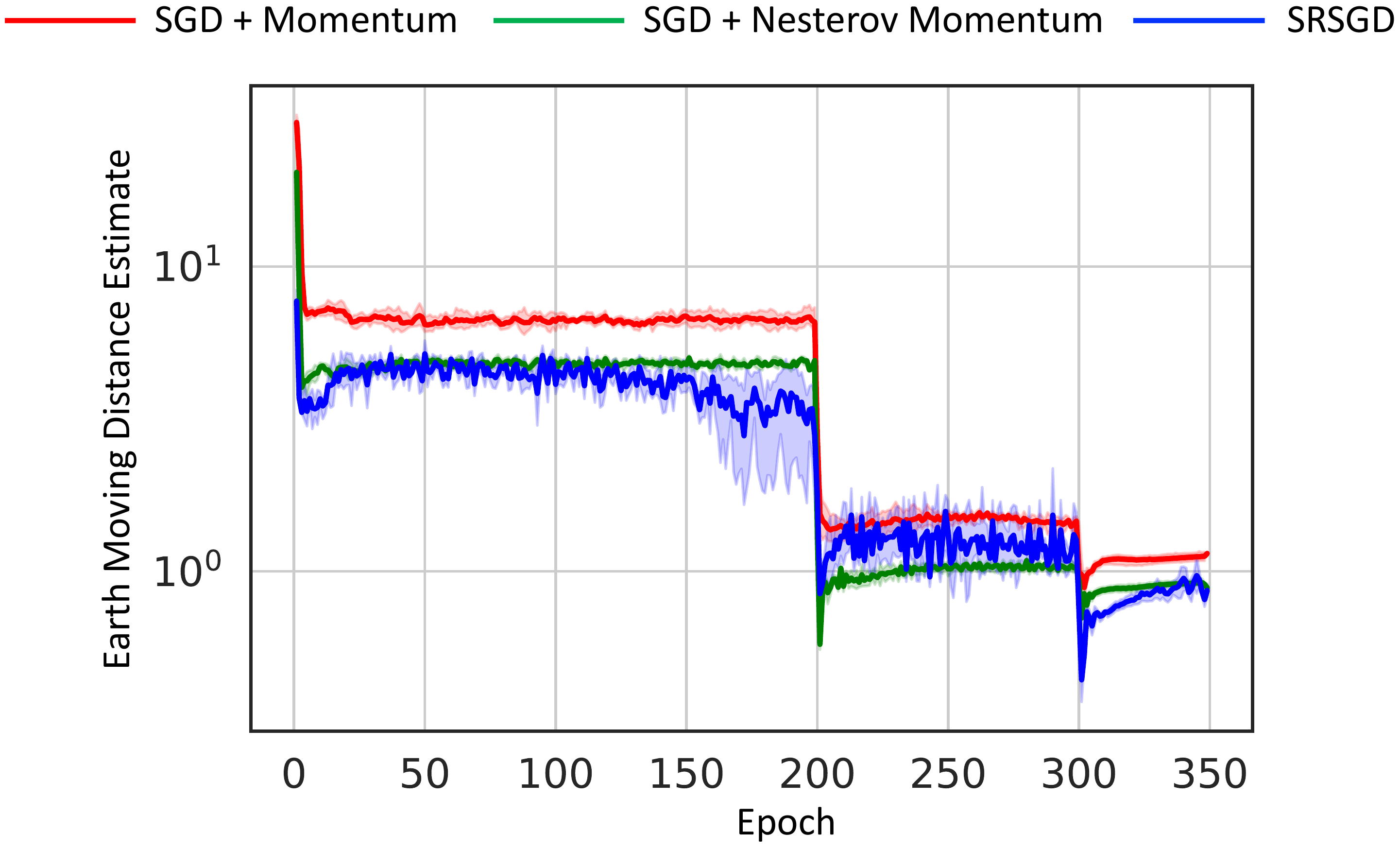}
% \vskip -0.4cm
\caption{Earth Moving distance estimate (i.e. discriminator loss) vs. training epochs of WGAN with gradient penalty trained with SGD (red), SGD + NM (green), and SRSGD (blue)  on MNIST.}
\label{fig:em-estimate-vs-epoch-gan-mnist}
\end{figure}

\begin{figure}[h!]
\centering
\includegraphics[width=1.0\linewidth]{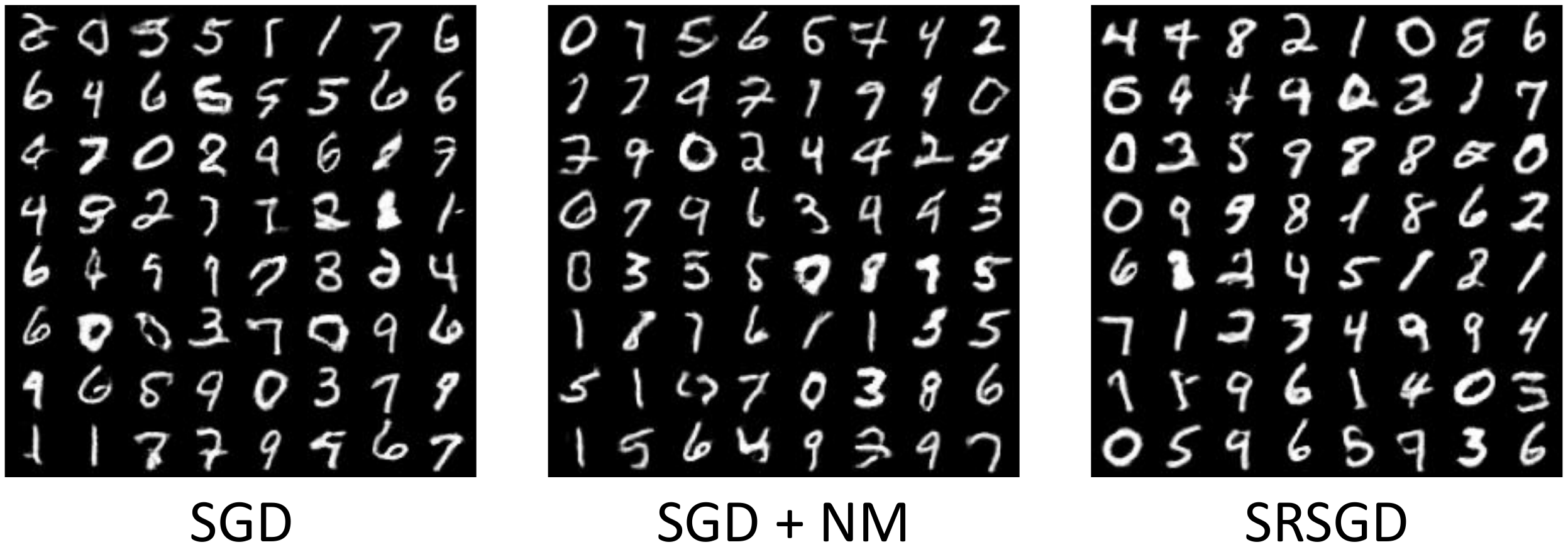}
% \vskip -0.4cm
\caption{MNIST digits generated by WGAN trained with gradient penalty by SGD (left), SGD + NM (middle), and SRSGD (right).}
\label{fig:mnist-gan-samples}
\end{figure}

%%%%%%%%%%%%%%%%%%%%%%%%%%%%%%%%%%%%%%%%%%%%%%%%%%%%%%%%%%%%%%
%%%%%%%%%%%%%%%%%%%%%%%%%%%%%%%%%%%%%%%%%%%%%%%%%%%%%%%%%%%%%%
\section{Error Rate vs.\ Reduction in Training Epochs}
\label{sec:error-rate-vs-epoch-reduction-appendix}
\subsection{Implementation Details}
{\bf CIFAR10 (Figure~\ref{fig:error-vs-epoch-reduction}, left, in the main text) and CIFAR100 (Figure~\ref{fig:cifar100-error-vs-epoch-reduction} in this Appendix):} Except for learning rate schedule, we use the same setting described in Section~\ref{sec:datasets-implementation} above and Section 4.1 in the main text. Table~\ref{tab:cifar-ablation-error-epoch-reduction-details} contains the learning rate schedule for each number of epoch reduction in Figure~\ref{fig:cifar100-error-vs-epoch-reduction} (left) in the main text and Figure~\ref{fig:cifar100-error-vs-epoch-reduction} below. 
\renewcommand\arraystretch{1.3}
\setlength{\tabcolsep}{4pt}
\begin{table}[h!]
\caption{Learning rate (LR) schedule for the ablation study of error rate vs.\ reduction in training epochs for CIFAR10 experiments, i.e. Figure~\ref{fig:cifar100-error-vs-epoch-reduction} (left) in the main text and for CIFAR100 experiments, i.e. Figure~\ref{fig:cifar100-error-vs-epoch-reduction} in this Appendix.} \label{tab:cifar-ablation-error-epoch-reduction-details}
\vspace{2mm}
\centering
\fontsize{8pt}{1em}\selectfont
\begin{tabular}{c|c}
\hline
\#of Epoch Reduction & LR Schedule  \\
\hline
0 & Decrease the LR by a factor of 10 at the 80th, 120th and 160th epoch. Train for a total of 200 epochs.\\
\hline
15 & Decrease the LR by a factor of 10 at the 80th, 115th and 150th epoch. Train for a total of 185 epochs.\\
\hline
30 & Decrease the LR by a factor of 10 at the 80th, 110th and 140th epoch. Train for a total of 170 epochs.\\
\hline
45 & Decrease the LR by a factor of 10 at the 80th, 105th and 130th epoch. Train for a total of 155 epochs.\\
\hline
60 & Decrease the LR by a factor of 10 at the 80th, 100th and 120th epoch. Train for a total of 140 epochs.\\
\hline
75 & Decrease the LR by a factor of 10 at the 80th, 95th and 110th epoch. Train for a total of 125 epochs.\\
\hline
90 & Decrease the LR by a factor of 10 at the 80th, 90th and 100th epoch. Train for a total of 110 epochs.\\
\hline
\end{tabular}
\end{table}

{\bf ImageNet (Figure~\ref{fig:cifar100-error-vs-epoch-reduction}, right, in the main text):} Except for the total number of training epochs, other settings are similar to experiments for training ImageNet in fewer number of epochs described in Section~\ref{sec:imagenet-few-epochs-appendix}. In particular, the learning rate and restarting frequency schedule still follow those in Table~\ref{tab:details-imagenet-short-train} above. We examine different numbers of training epochs: 90 (0 epoch reduction), 80 (10 epochs reduction), 75 (15 epochs reduction), 70 (20 epochs reduction), 65 (25 epochs reduction),  and 60 (30 epochs reduction).

\subsection{Additional Experimental Results}
Table~\ref{tab:cifar10-error-rate-vs-num-epochs} and Table~\ref{tab:imagenet-error-rate-vs-num-epochs} provide detailed test errors vs.\ number of training epoch reduction reported in Figure~\ref{fig:cifar100-error-vs-epoch-reduction} in the main text. We also conduct an additional ablation study of error rate vs.\ reduction in epochs for CIFAR100 and include the results in Figure~\ref{fig:cifar100-error-vs-epoch-reduction} and Table~\ref{tab:cifar100-error-rate-vs-num-epochs} below.

%%%%%%%%%%%%%%%%%%%%%%%%%%%%%%%%%%%%%%%%%%%%%%%%%%%%%%%%%%%%%%%%%%%%%%%
\renewcommand\arraystretch{1.3}
\setlength{\tabcolsep}{4pt}
\begin{table*}[h!]
\caption{Test error vs. number of training epochs for CIFAR10}\label{tab:cifar10-error-rate-vs-num-epochs}
\vspace{2mm}
\centering
\fontsize{8pt}{1em}\selectfont
\begin{tabular}{c|c|c|c|c|c|c|c}
\hline
Network & 110 (90 less)  & 125 (75 less) & 140 (60 less) & 155 (45 less) & 170 (30 less) & 185 (15 less) & 200 (full trainings) \\
\hline
Pre-ResNet-110 & $5.37 \pm 0.11$ & $5.27 \pm 0.17$ & $5.15 \pm 0.09$ & $5.09 \pm 0.14$ & $4.96 \pm 0.14$ & $4.96 \pm 0.13$ & \pmb{$4.93 \pm 0.13$} \\
Pre-ResNet-290 & $4.80 \pm 0.14$ & $4.71 \pm 0.13$ & $4.58 \pm 0.11$ & $4.45 \pm 0.09$ & $4.43 \pm 0.09$ & $4.44 \pm 0.11$ & \pmb{$4.37 \pm 0.15$} \\
Pre-ResNet-470 & $4.52 \pm 0.16$ & $4.43 \pm 0.12$ & $4.29 \pm 0.11$ &  $4.33 \pm 0.07$ & $4.23 \pm 0.12$ & \pmb{$4.18 \pm 0.09$} & \pmb{$4.18 \pm 0.09$} \\
Pre-ResNet-650 & $4.35 \pm 0.10$ & $4.24 \pm 0.05$ & $4.22 \pm 0.15$ & $4.10 \pm 0.15$ & $4.12 \pm 0.14$ & $4.02 \pm 0.05$ & \pmb{$4.00 \pm 0.07$} \\
Pre-ResNet-1001 & $4.23 \pm 0.19$ & $4.13 \pm 0.12$ &  $4.08 \pm 0.15$ & $4.10 \pm 0.09$ & $3.93 \pm 0.11$ & $4.06 \pm 0.14$ & \pmb{$3.87 \pm 0.07$} \\
\hline
% \hline
\end{tabular}
\end{table*}

%%%%%%%%%%%%%%%%%%%%%%%%%%%%%%%%%%%%%%%%%%%%%%%%%%%%%%%%%%%%%%%%%%%%%%%
\renewcommand\arraystretch{1.3}
\setlength{\tabcolsep}{4pt}
\begin{table*}[h!]
\caption{Top 1 single crop validation error vs. number of training epochs for ImageNet}\label{tab:imagenet-error-rate-vs-num-epochs}
\vspace{2mm}
\centering
\fontsize{8pt}{1em}\selectfont
\begin{tabular}{c|c|c|c|c|c|c}
\hline
Network & 60 (30 less)  & 65 (25 less) & 70 (20 less) & 75 (15 less) & 80 (10 less) & 90 (full trainings) \\
\hline
ResNet-50 & $25.42 \pm 0.42$ & $25.02 \pm 0.15$ & $24.77 \pm 0.14$ & $24.38 \pm 0.01$ & $24.30 \pm 0.21$ & \pmb{$23.85 \pm 0.09$} \\
ResNet-101 & $23.11 \pm 0.10$ & $22.79 \pm 0.01$ & $22.71 \pm 0.21$ & $22.56 \pm 0.10$ & $22.44 \pm 0.03$ & \pmb{$22.06 \pm 0.10$} \\
ResNet-152 & $22.28 \pm 0.20$ & $22.12 \pm 0.04$ & $21.97 \pm 0.04$ &  $21.79 \pm 0.07$ & $21.70 \pm 0.07$ & \pmb{$21.46 \pm 0.07$} \\
ResNet-200 & $21.92 \pm 0.17$ & $21.69 \pm 0.20$ & $21.64 \pm 0.03$ & $21.45 \pm 0.06$ & $21.30 \pm 0.03$ & \pmb{$20.93 \pm 0.13$} \\
\hline
% \hline
\end{tabular}
\end{table*}

%%%%%%%%%%%%%%%%%%%%%%%%%%%%%%%%%%%%%%%%%%%%%%%%%%%%%%%%%%%%%%%%%%%%%%%
\renewcommand\arraystretch{1.3}
\setlength{\tabcolsep}{4pt}
\begin{table*}[h!]
\caption{Test error vs. number of training epochs for CIFAR100}\label{tab:cifar100-error-rate-vs-num-epochs}
\vspace{2mm}
\centering
\fontsize{8pt}{1em}\selectfont
\begin{tabular}{c|c|c|c|c|c|c|c}
\hline
Network & 110 (90 less)  & 125 (75 less) & 140 (60 less) & 155 (45 less) & 170 (30 less) & 185 (15 less) & 200 (full trainings) \\
\hline
Pre-ResNet-110 & $24.06 \pm 0.26$ & $23.82 \pm 0.24$ & $23.82 \pm 0.28$ & $23.58 \pm 0.18$ & $23.69 \pm 0.21$ & $23.73 \pm 0.34$ & \pmb{$23.49 \pm 0.23$} \\
Pre-ResNet-290 & $21.96 \pm 0.45$ & $21.77 \pm 0.21$ & $21.67 \pm 0.37$ & $21.56 \pm 0.33$ & \pmb{$21.38 \pm 0.44$} & $21.47 \pm 0.32$ & $21.49 \pm 0.27$ \\
Pre-ResNet-470 & $21.35 \pm 0.17$ & $21.25 \pm 0.17$ & $21.21 \pm 0.18$ &  $21.09 \pm 0.28$ & $20.87 \pm 0.28$ & $20.81 \pm 0.32$ & \pmb{$20.71 \pm 0.32$} \\
Pre-ResNet-650 & $21.18 \pm 0.27$ & $20.95 \pm 0.13$ & $20.77 \pm 0.31$ & $20.61 \pm 0.19$ & $20.57 \pm 0.13$ & $20.47 \pm 0.07$ & \pmb{$20.36 \pm 0.25$} \\
Pre-ResNet-1001 & $20.27 \pm 0.17$ & $20.03 \pm 0.13$ &  $20.05 \pm 0.22$ & $19.74 \pm 0.18$ & $19.71 \pm 0.22$ & \pmb{$19.67 \pm 0.22$} & $19.75 \pm 0.11$ \\
\hline
% \hline
\end{tabular}
\end{table*}

\begin{figure}[!h]
\centering
\includegraphics[width=0.5\linewidth]{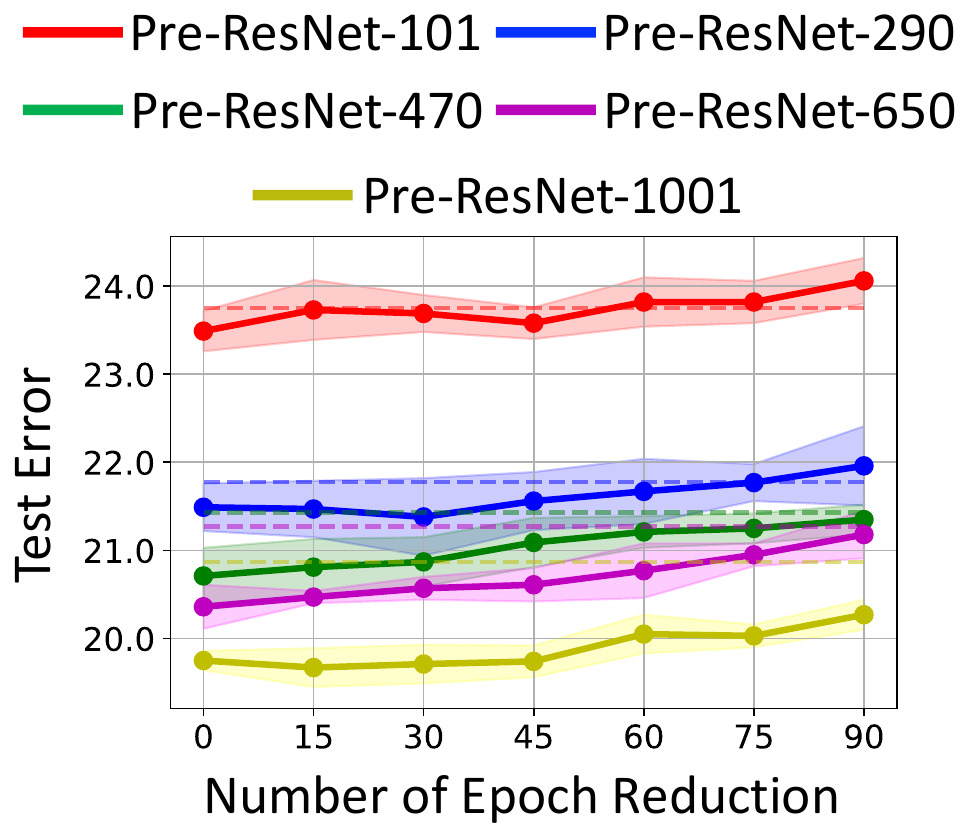}
%\vspace{-0.2in}
\caption{Test error vs.\ number of epoch reduction in CIFAR100 training. The dashed lines are test errors of the SGD baseline. For CIFAR100, SRSGD training with fewer epochs can achieve comparable results to SRSGD training with full 200 epochs. In some cases, such as with Pre-ResNet-290 and 1001, SRSGD training with fewer epochs achieves even better results than SRSGD training with full 200 epochs.}
\label{fig:cifar100-error-vs-epoch-reduction}
\end{figure}

%%%%%%%%%%%%%%%%%%%%%%%%%%%%%%%%%%%%%%%%%%%%%%%%%%%%%%%%%%%%%%%%%%%%%%%
\section{Impact of Restarting Frequency for ImageNet and CIFAR100}
\label{sec:impact-restarting-frequency-appendix}
\subsection{Implementation Details}
For the CIFAR10 experiments on Pre-ResNet-290 in Figure~\ref{fig:ablation_restarting_frequency} in the main text, as well as the CIFAR100 and ImageNet experiments in Figure~\ref{fig:ablation_restarting_frequency_cifar100} and ~\ref{fig:ablation_restarting_frequency_imagenet} in this Appendix, we vary the initial restarting frequency $F_{1}$. Other settings are the same as described in Section~\ref{sec:all-datasets-implementation} above.

\subsection{Additional Experimental Results}
To complete our study on the impact of restarting frequency in Section 5.2 in the main text, we examine the case of CIFAR100 and ImageNet in this section. We summarize our results in Figure~\ref{fig:ablation_restarting_frequency_cifar100} and ~\ref{fig:ablation_restarting_frequency_imagenet} below.

\begin{figure}[!h]
\centering
\includegraphics[width=0.9\linewidth]{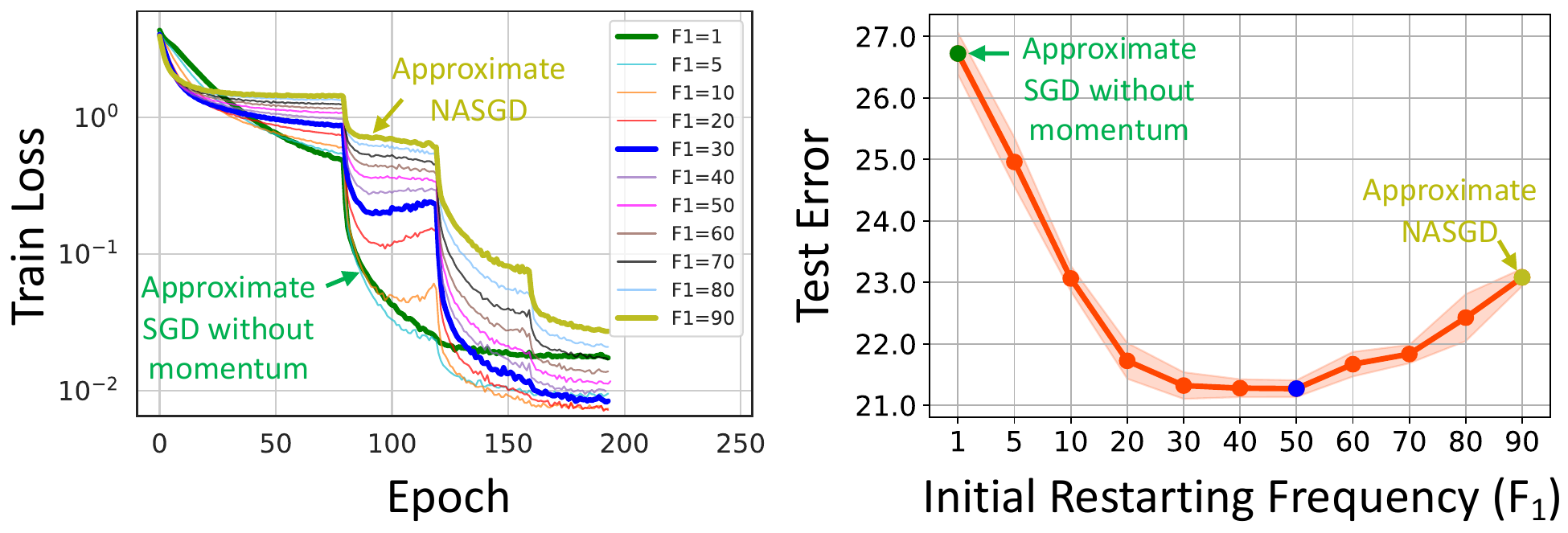}
% \vspace{-0.3in}
\caption{Training loss and test error of Pre-ResNet-290 trained on CIFAR100 with different initial restarting frequencies $F_1$ (linear schedule). SRSGD with small $F_1$ approximates SGD without momentum, while SRSGD with large $F_1$ approximates NASGD.}
\label{fig:ablation_restarting_frequency_cifar100}
\end{figure}

\begin{figure}[!h]
\centering
\includegraphics[width=0.9\linewidth]{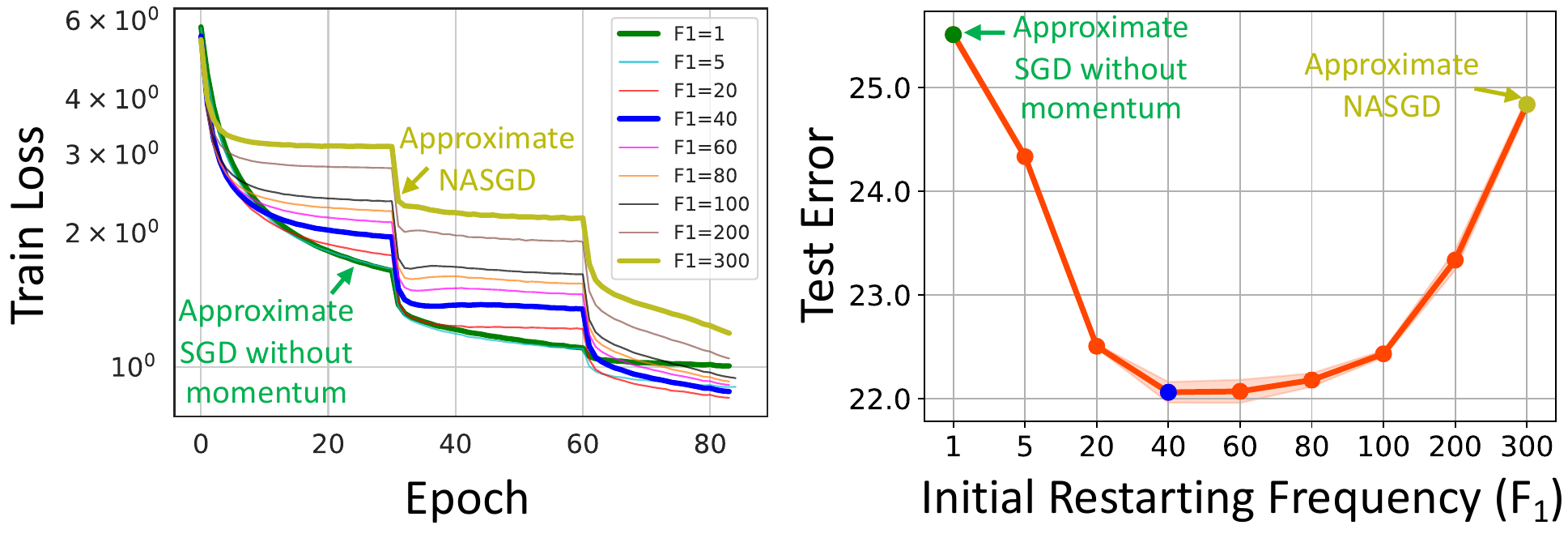}
% \vspace{-0.3in}
\caption{Training loss and test error of ResNet-101 trained on ImageNet with different initial restarting frequencies $F_1$. We use linear schedule and linearly decrease the restarting frequency to 1 at the last learning rate. SRSGD with small $F_1$ approximates SGD without momentum, while SRSGD with large $F_1$ approximates NASGD.}
\label{fig:ablation_restarting_frequency_imagenet}
\end{figure}

%%%%%%%%%%%%%%%%%%%%%%%%%%%%%%%%%%%%%%%%%%%%%%%%%%%%%%%%%%%%%%%%%%%%%%%
\section{Full Training with Less Epochs at the Intermediate Learning Rates}\label{appendix:FullTrainingLessEpochs}
We explore SRSGD full training (200 epochs on CIFAR and 90 epochs on ImageNet) with less number of epochs at the intermediate learning rates and report the results in Table~\ref{tab:cifar10-error-rate-vs-reduction-still-full},~\ref{tab:cifar100-error-rate-vs-reduction-still-full},~\ref{tab:imagenet-error-rate-vs-num-epochs-full-train} and Figure~\ref{fig:cifar10-error-rate-vs-reduction-still-full},~\ref{fig:cifar100-error-rate-vs-reduction-still-full},~\ref{fig:imagenet-error-rate-vs-reduction-still-full} below. The settings and implementation details here are similar to those in Section~\ref{sec:error-rate-vs-epoch-reduction-appendix} of this Appendix, but using all 200 epochs for CIFAR experiments and 90 epochs for ImageNet experiments.

\begin{figure}[!h]
\centering
\includegraphics[width=0.55\linewidth]{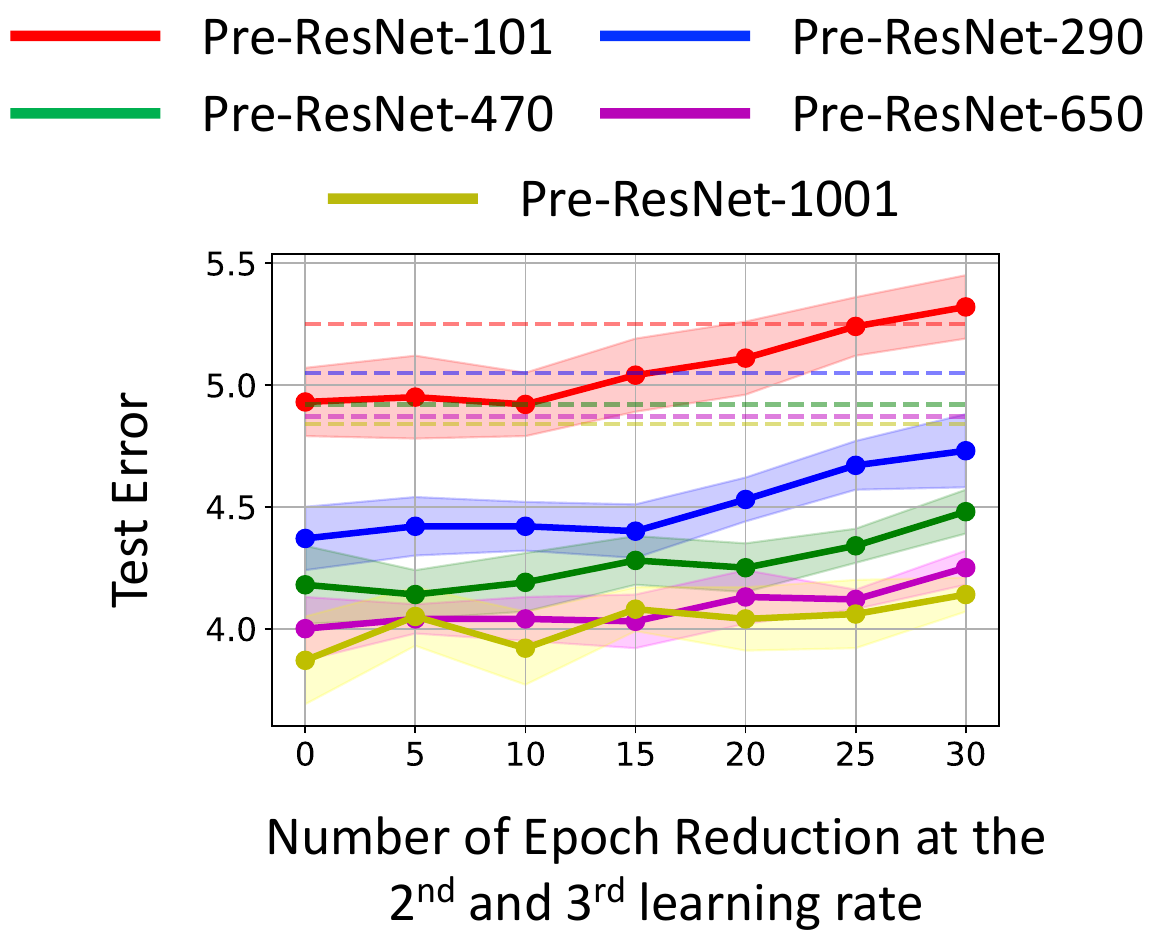}
%\vspace{-0.2in}
\caption{Test error when using new learning rate schedules with less training epochs at the 2nd and 3rd learning rate for CIFAR10. We still train in full 200 epochs in this experiment. On the x-axis, 10, for example, means we reduce the number of training epochs by 10 at each intermediate learning rate, i.e. the 2nd and 3rd learning rate. The dashed lines are test errors of the SGD baseline. }
\label{fig:cifar10-error-rate-vs-reduction-still-full}
\end{figure}

\renewcommand\arraystretch{1.3}
\setlength{\tabcolsep}{4pt}
\begin{table*}[h!]
\caption{Test error when using new learning rate schedules with less training epochs at the 2nd and 3rd learning rate for CIFAR10. We still train in full 200 epochs in this experiment. In the table, 80-90-100, for example, means we reduce the learning rate by factor of 10 at the 80th, 90th, and 100th epoch.}\label{tab:cifar10-error-rate-vs-reduction-still-full}
%\vspace{2mm}
\centering
\fontsize{8pt}{1em}\selectfont
\begin{tabular}{c|c|c|c|c|c|c|c}
\hline
Network & 80 - 90 - 100 & 80 - 95 - 110 & 80 - 100 - 120 & 80 - 105 - 130  & 80 - 110 - 140 & 80 - 115 - 150 & 80 - 120 - 160 \\
\hline
Pre-ResNet-110 & $5.32 \pm 0.14$ & $5.24 \pm 0.17$ & $5.11 \pm 0.13$ & $5.04 \pm 0.15$ & \pmb{$4.92 \pm 0.15$} & $4.95 \pm 0.12$ & $4.93 \pm 0.13$ \\
Pre-ResNet-290 & $4.73 \pm 0.13$ & $4.67 \pm 0.12$ & $4.53 \pm 0.10$ & $4.40 \pm 0.11$ & $4.42 \pm 0.09$ & $4.42 \pm 0.10$ & \pmb{$4.37 \pm 0.15$} \\
Pre-ResNet-470 & $4.48 \pm 0.16$ & $4.34 \pm 0.10$ & $4.25 \pm 0.12$ &  $4.28 \pm 0.10$ & $4.19 \pm 0.10$ & \pmb{$4.14 \pm 0.07$} & $4.18 \pm 0.09$ \\
Pre-ResNet-650 & $4.25 \pm 0.13$ & $4.12 \pm 0.06$ & $4.13 \pm 0.09$ & $4.03 \pm 0.11$ & $4.04 \pm 0.11$ & $4.04 \pm 0.04$ & \pmb{$4.00 \pm 0.07$} \\
Pre-ResNet-1001 & $4.14 \pm 0.18$ & $4.06 \pm 0.12$ &  $4.04 \pm 0.15$ & $4.08 \pm 0.09$ & $3.92 \pm 0.13$ & $4.05 \pm 0.14$ & \pmb{$3.87 \pm 0.07$} \\
\hline
% \hline
\end{tabular}
\end{table*}

%%%%%%%%%%%%%%%%%%%%%%%%%%%%%%%%%%%%%%%%%%%%%%%%%%%%%%%%%%%%%%%%%%%%%%%
\renewcommand\arraystretch{1.3}
\setlength{\tabcolsep}{4pt}
\begin{table*}[h!]
\caption{Test error when using new learning rate schedules with less training epochs at the 2nd and 3rd learning rate for CIFAR100. We still train in full 200 epochs in this experiment. In the table, 80-90-100, for example, means we reduce the learning rate by factor of 10 at the 80th, 90th, and 100th epoch.}\label{tab:cifar100-error-rate-vs-reduction-still-full}
%\vspace{2mm}
\centering
\fontsize{8pt}{1em}\selectfont
\begin{tabular}{c|c|c|c|c|c|c|c}
\hline
Network & 80 - 90 - 100 & 80 - 95 - 110 & 80 - 100 - 120 & 80 - 105 - 130  & 80 - 110 - 140 & 80 - 115 - 150 & 80 - 120 - 160 \\
\hline
Pre-ResNet-110 & $23.65 \pm 0.14$ & $23.96 \pm 0.26$ & $23.97 \pm 0.31$ & $23.53 \pm 0.13$ & $23.57 \pm 0.36$ & $23.68 \pm 0.24$ & \pmb{$23.49 \pm 0.23$} \\
Pre-ResNet-290 & $21.94 \pm 0.44$ & $21.71 \pm 0.27$ & $21.55 \pm 0.40$ & $21.44 \pm 0.31$ & \pmb{$21.37 \pm 0.45$} & $21.47 \pm 0.32$ & $21.49 \pm 0.27$ \\
Pre-ResNet-470 & $21.29 \pm 0.11$ & $21.21 \pm 0.14$ & $21.17 \pm 0.18$ &  $20.99 \pm 0.28$ & $20.81 \pm 0.22$ & $20.80 \pm 0.31$ & \pmb{$20.71 \pm 0.32$} \\
Pre-ResNet-650 & $21.11 \pm 0.24$ & $20.91 \pm 0.17$ & $20.66 \pm 0.33$ & $20.52 \pm 0.18$ & $20.51 \pm 0.16$ & $20.43 \pm 0.10$ & \pmb{$20.36 \pm 0.25$} \\
Pre-ResNet-1001 & $20.21 \pm 0.15$ & $20.00 \pm 0.11$ &  $19.86 \pm 0.19$ & \pmb{$19.55 \pm 0.19$} & $19.69 \pm 0.21$ & $19.60 \pm 0.17$ & $19.75 \pm 0.11$ \\
\hline
% \hline
\end{tabular}
\end{table*}

\begin{figure}[!h]
\centering
\includegraphics[width=0.55\linewidth]{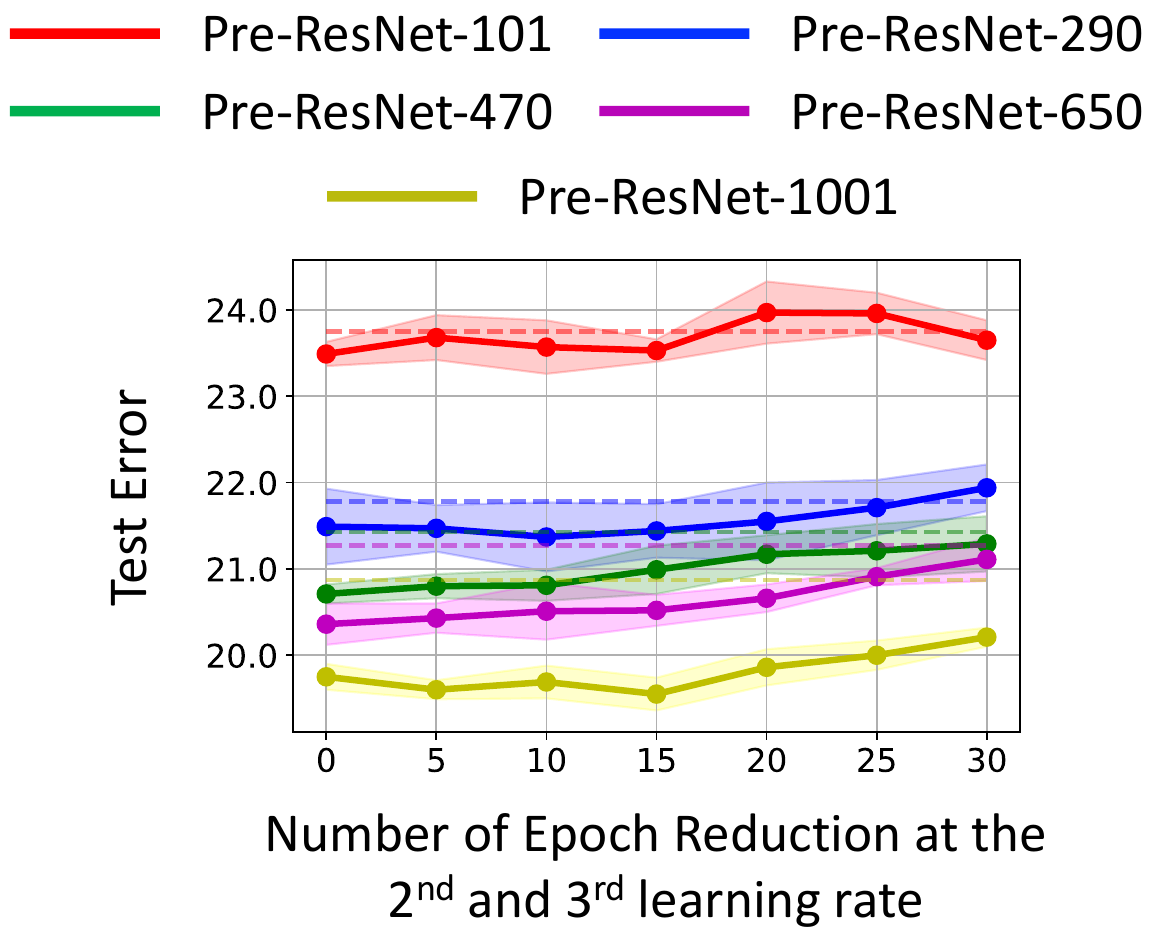}
%\vspace{-0.2in}
\caption{Test error when using new learning rate schedules with less training epochs at the 2nd and 3rd learning rate for CIFAR100. We still train in full 200 epochs in this experiment. On the x-axis, 10, for example, means we reduce the number of training epochs by 10 at each intermediate learning rate, i.e. the 2nd and 3rd learning rate. The dashed lines are test errors of the SGD baseline. }
\label{fig:cifar100-error-rate-vs-reduction-still-full}
\end{figure}
%%%%%%%%%%%%%%%%%%%%%%%%%%%%%%%%%%%%%%%%%%%%%%%%%%%%%%%%%%%%%%%%%%%%%%%
\renewcommand\arraystretch{1.3}
\setlength{\tabcolsep}{4pt}
\begin{table*}[h!]
\caption{Top 1 single crop validation error when using new learning rate schedules with less training epochs at the 2nd learning rate for ImageNet. We still train in full 90 epochs in this experiment. In the table, 30-40, for example, means we reduce the learning rate by factor of 10 at the 30th and 40th epoch.}\label{tab:imagenet-error-rate-vs-num-epochs-full-train}
\vspace{2mm}
\centering
\fontsize{8pt}{1em}\selectfont
\begin{tabular}{c|c|c|c|c|c}
\hline
Network & 30 - 40  & 30 - 45 & 30 - 50 & 30 - 55 & 30 - 60 \\
\hline
ResNet-50 & $24.44 \pm 0.16$ & $24.06 \pm 0.15$ & $24.05 \pm 0.09$ & $23.89 \pm 0.14$ & \pmb{$23.85 \pm 0.09$} \\
ResNet-101 & $22.49 \pm 0.09$ & $22.51 \pm 0.05$ & $22.24 \pm 0.01$ & $22.20 \pm 0.01$ & \pmb{$22.06 \pm 0.10$} \\
ResNet-152 & $22.02 \pm 0.01$ & $21.84 \pm 0.03$ &  $21.65 \pm 0.14$ & $21.55 \pm 0.06$ & \pmb{$21.46 \pm 0.07$} \\
ResNet-200 & $21.65 \pm 0.02$ & $21.27 \pm 0.14$ & $21.12 \pm 0.02$ & $21.07 \pm 0.01$ & \pmb{$20.93 \pm 0.13$} \\
\hline
% \hline
\end{tabular}
\end{table*}

\begin{figure}[!h]
\centering
\includegraphics[width=0.45\linewidth]{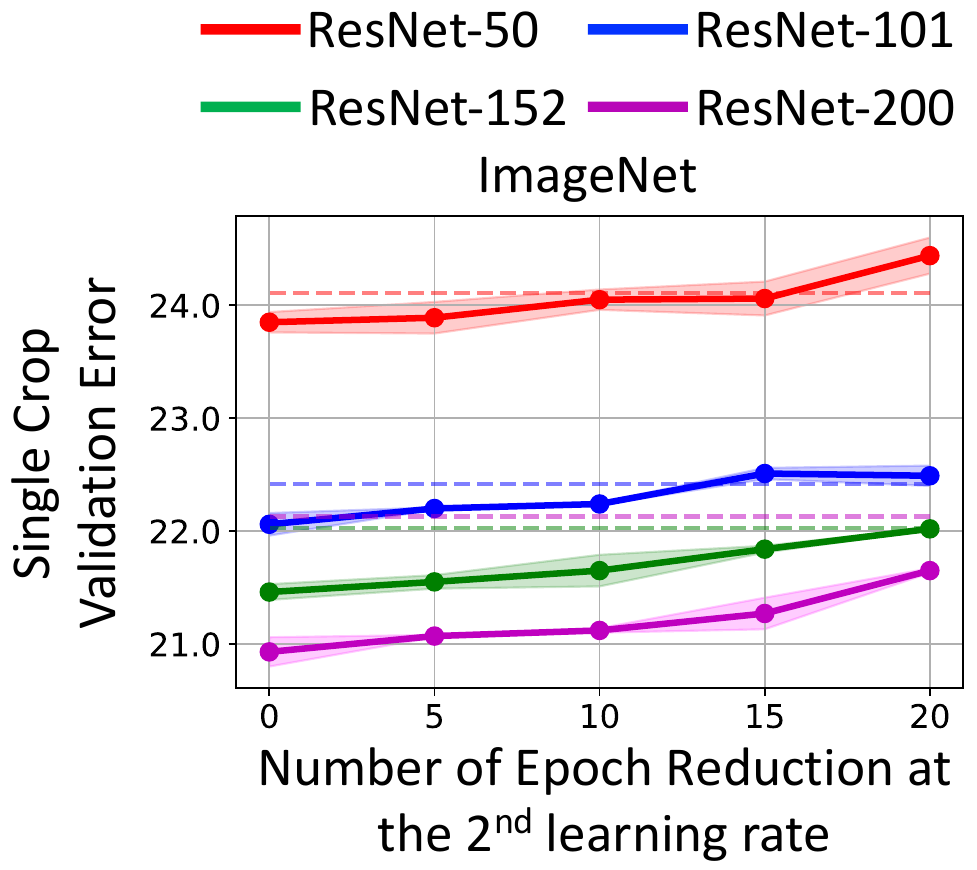}
%\vspace{-0.2in}
\caption{Test error when using new learning rate schedules with less training epochs at the 2nd learning rate for ImageNet. We still train in full 90 epochs in this experiment. On the x-axis, 10, for example, means we reduce the number of training epochs by 10 at the 2nd learning rate. The dashed lines are test errors of the SGD baseline. }
\label{fig:imagenet-error-rate-vs-reduction-still-full}
\end{figure}
%%%%%%%%%%%%%%%%%%%%%%%%%%%%%%%%%%%%%%%%%%%%%%%%%%%%%%%%%%%%%%%%%%%%%%%
\newpage

\section{Visualization of SRSGD's trajectory}
Here we visualize the training trajectory through bad minima of SRSGD, SGD with constant momentum, and SGD. In particular, we train a neural net classifier on a swiss roll data as in \cite{huang2019understanding} and find bad minima along its training. Each red dot in Figure~\ref{fig:trajectory} represents the trained model after each 10 epochs in the training. From each red dot, we search for nearby bad local minima, which are the blue dots. Those bad local minima achieve good training error but bad test error. We plots the trained models and bad local minima using PCA \cite{wold1987principal} and t-SNE \cite{maaten2008visualizing} embedding. The blue color bar is for the test accuracy of bad local minima; the red color bar is for the number of training epochs.

(CONTINUED NEXT PAGE)

\begin{figure}[!h]
\centering
\includegraphics[width=1.0\linewidth]{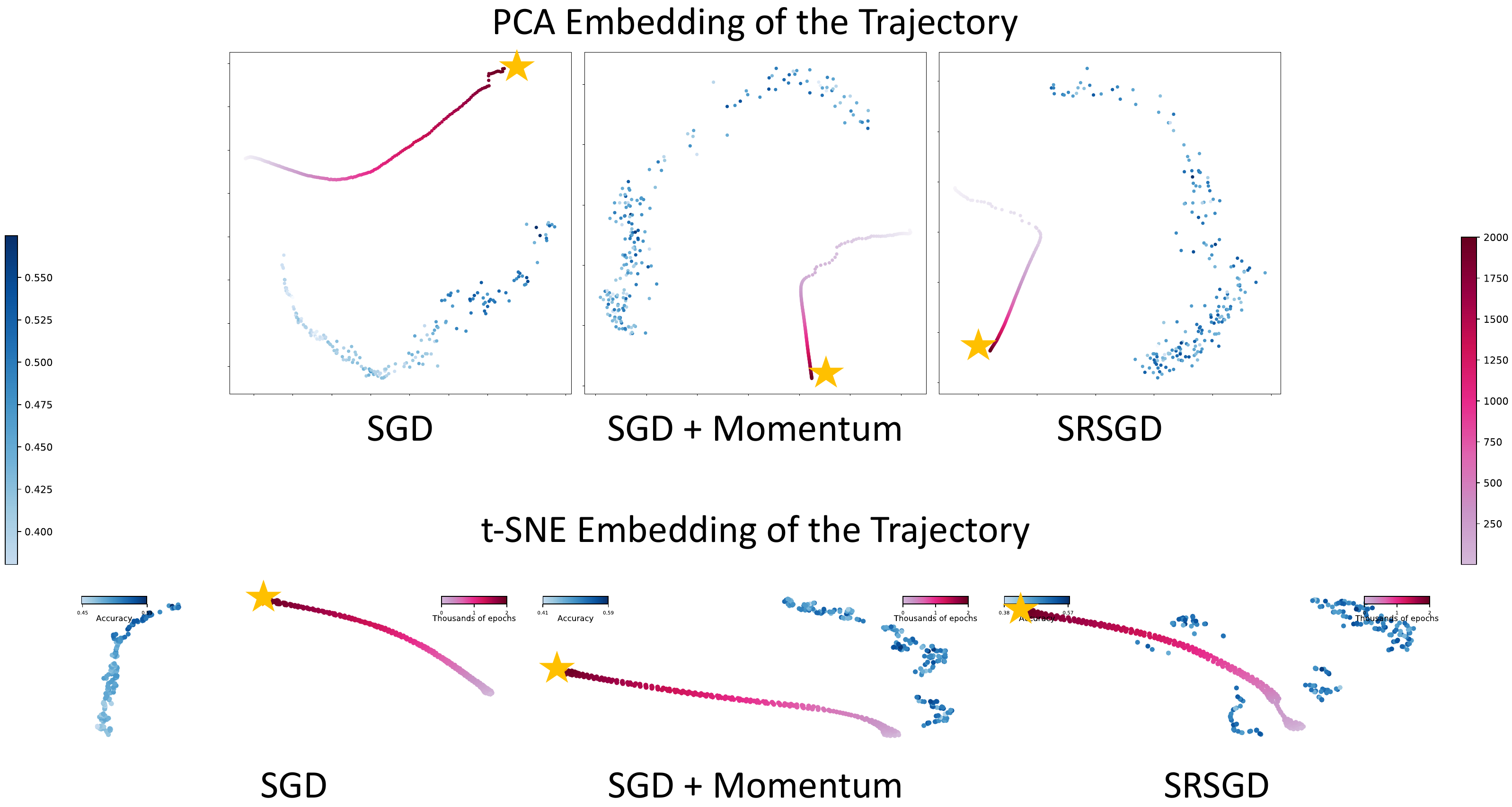}
\vspace{-0.2in}
\caption{Trajectory through bad minima of SGD, SGD with constant momentum, and SRSGD during the training: we train a neural net classifier and plot the iterates of
SGD after each ten epoch (red dots). We also plot locations of nearby “bad” minima with poor generalization
(blue dots). We visualize these using PCA and t-SNE embedding. The blue color bar is for the test accuracy of bad local minima while the red color bar is for the number of training epochs. All blue dots for SGD with constant momentum and SRSGD achieve near perfect train accuracy, but
with test accuracy below 59\%. All blue dots for SGD achieves average train accuracy of 73.11\% and
with test accuracy also below 59\%.  The final iterate (yellow star) of SGD, SGD with constant momentum, and SRSGD achieve 73.13\%, 99.25\%, and 100.0\% test accuracy, respectively.}
\label{fig:trajectory}
\end{figure}
\newpage
%%%%%%%%%%%%%%%%%%%%%%%%%%%%%%%%%%%%%%%%%%%%%%%%%%%%%%%%%%%%%%%%%%%%%%%
\section{SRSGD Implementation in Pytorch}
\label{appendix:PyTorch}
\begin{lstlisting}[language=python]
import torch
from .optimizer import Optimizer, required

class SRSGD(Optimizer):
    """
    Scheduled Restart SGD.
    Args:
        params (iterable): iterable of parameters to optimize 
                            or dicts defining parameter groups.
        lr (float): learning rate.
        weight_decay (float, optional): weight decay (L2 penalty) (default: 0)
        iter_count (integer): count the iterations mod 200
    Example:
         >>> optimizer = torch.optim.SRSGD(model.parameters(), lr=0.1,
                            weight_decay=5e-4, iter_count=1)
        >>> optimizer.zero_grad()
        >>> loss_fn(model(input), target).backward()
        >>> optimizer.step()
        >>> iter_count = optimizer.update_iter()
    Formula:
        v_{t+1} = p_t - lr*g_t
        p_{t+1} = v_{t+1} + (iter_count)/(iter_count+3)*(v_{t+1} - v_t)
    """
    def __init__(self, params, lr=required, weight_decay=0., 
                    iter_count=1, restarting_iter=100):
        if lr is not required and lr < 0.0:
            raise ValueError("Invalid learning rate: {}".format(lr))
        if weight_decay < 0.0:
            raise ValueError("Invalid weight_decay value: {}".format(weight_decay))
        if iter_count < 1:
            raise ValueError("Invalid iter count: {}".format(iter_count))
        if restarting_iter < 1:
            raise ValueError("Invalid iter total: {}".format(restarting_iter))
        
        defaults = dict(lr=lr, weight_decay=weight_decay, iter_count=iter_count, 
                            restarting_iter=restarting_iter)
        super(SRSGD, self).__init__(params, defaults)
    
    def __setstate__(self, state):
        super(SRSGD, self).__setstate__(state)
    
    def update_iter(self):
        idx = 1
        for group in self.param_groups:
            if idx == 1:
                group['iter_count'] += 1
                if group['iter_count'] >= group['restarting_iter']:
                    group['iter_count'] = 1
            idx += 1 
        return group['iter_count'], group['restarting_iter']
    
    def step(self, closure=None):
        """
        Perform a single optimization step.
        Arguments: closure (callable, optional): A closure that 
                    reevaluates the model and returns the loss.
        """
        loss = None
        if closure is not None:
            loss = closure()
        
        for group in self.param_groups:
            weight_decay = group['weight_decay']
            momentum = (group['iter_count'] - 1.)/(group['iter_count'] + 2.)
            for p in group['params']:
                if p.grad is None:
                    continue
                d_p = p.grad.data
                if weight_decay !=0:
                    d_p.add_(weight_decay, p.data)
                
                param_state = self.state[p]
                
                if 'momentum_buffer' not in param_state:
                    buf0 = param_state['momentum_buffer'] 
                         = torch.clone(p.data).detach()
                else:
                    buf0 = param_state['momentum_buffer']
                
                buf1 = p.data - group['lr']*d_p
                p.data = buf1 + momentum*(buf1 - buf0)
                param_state['momentum_buffer'] = buf1
        
        iter_count, iter_total = self.update_iter()
        
        return loss

\end{lstlisting}

%%%%%%%%%%%%%%%%%%%%%%%%%%%%%%%%%%%%%%%%%%%%%%%%%%%%%%%%%%%%%%%%%%%%%%%
\section{SRSGD Implementation in Keras}\label{appendix:Keras}
\begin{lstlisting}[language=python]
import numpy as np
import tensorflow as tf
from keras import backend as K
from keras.optimizers import Optimizer
from keras.legacy import interfaces
if K.backend() == 'tensorflow':
    import tensorflow as tf

class SRSGD(Optimizer):
    """Scheduled Restart Stochastic gradient descent optimizer.
    Includes support for Nesterov momentum
    and learning rate decay.
    # Arguments
        learning_rate: float >= 0. Learning rate.
    """

    def __init__(self, learning_rate=0.01, iter_count=1, restarting_iter=40, **kwargs):
        learning_rate = kwargs.pop('lr', learning_rate)
        self.initial_decay = kwargs.pop('decay', 0.0)
        super(SRSGD, self).__init__(**kwargs)
        with K.name_scope(self.__class__.__name__):
            self.iterations = K.variable(0, dtype='int64', name='iterations')
            self.learning_rate = K.variable(learning_rate, name='learning_rate')
            self.decay = K.variable(self.initial_decay, name='decay')
            # for srsgd
            self.iter_count = K.variable(iter_count, dtype='int64', name='iter_count')
            self.restarting_iter = K.variable(restarting_iter, dtype='int64', 
                                              name='restarting_iter')
        self.nesterov = nesterov

    @interfaces.legacy_get_updates_support
    @K.symbolic
    def get_updates(self, loss, params):
        grads = self.get_gradients(loss, params)
        self.updates = [K.update_add(self.iterations, 1)]
          
        momentum = (K.cast(self.iter_count, 
            dtype=K.dtype(self.decay)) - 1.)/(K.cast(self.iter_count, 
            dtype=K.dtype(self.decay)) + 2.)

        lr = self.learning_rate
        if self.initial_decay > 0:
            lr = lr * (1. / (1. + self.decay * K.cast(self.iterations,
                                                      K.dtype(self.decay))))
        # momentum
        shapes = [K.int_shape(p) for p in params]
        
        moments = [K.variable(value=K.get_value(p), dtype=K.dtype(self.decay), 
                   name='moment_' + str(i)) for (i, p) in enumerate(params)]
        
        self.weights = [self.iterations] + moments + [self.iter_count]
        for p, g, m in zip(params, grads, moments):
            v = p - lr * g
            new_p = v + momentum * (v - m)
            self.updates.append(K.update(m, v))

            # Apply constraints.
            if getattr(p, 'constraint', None) is not None:
                new_p = p.constraint(new_p)

            self.updates.append(K.update(p, new_p))
            
        condition = K.all(K.less(self.iter_count, self.restarting_iter))
        new_iter_count = K.switch(condition, self.iter_count + 1, 
                                  self.iter_count - self.restarting_iter + 1)
        self.updates.append(K.update(self.iter_count, new_iter_count))
        
        return self.updates

    def get_config(self):
        config = {'learning_rate': float(K.get_value(self.learning_rate)),
                  'decay': float(K.get_value(self.decay)),
                  'iter_count': int(K.get_value(self.iter_count)), 
                  'restarting_iter': int(K.get_value(self.restarting_iter))}
        base_config = super(SRSGD, self).get_config()
        return dict(list(base_config.items()) + list(config.items()))

\end{lstlisting}

% \clearpage

% \bibliography{example_paper}
% \bibliographystyle{icml2020}

%\end{document}

\end{document}